\documentclass[11pt]{article}
\usepackage[margin=1in]{geometry}
\usepackage{amsmath,amssymb,amsthm,mathtools}
\usepackage{booktabs,microtype,enumitem}
\usepackage{tikz,graphicx}
\usepackage{float}
\usepackage[hidelinks]{hyperref}
\usetikzlibrary{arrows.meta,positioning,calc}
\newtheorem{theorem}{Theorem}[section]
\newtheorem{proposition}[theorem]{Proposition}
\newtheorem{lemma}[theorem]{Lemma}
\newtheorem{corollary}[theorem]{Corollary}
\newtheorem{definition}[theorem]{Definition}
\newtheorem{assumption}[theorem]{Assumption}
\newtheorem{remark}[theorem]{Remark}
\theoremstyle{definition}
\newtheorem{example}[theorem]{Example}
\newcommand{\R}{\mathbb R}
\newcommand{\norm}[1]{\left\lVert#1\right\rVert}
\newcommand{\ip}[2]{\left\langle#1,#2\right\rangle}
\newcommand{\smin}{\sigma_{\min}}
\newcommand{\sym}{\operatorname{sym}}

\title{Certified Inference and Training for Deep Equilibrium Networks: A Continuation Framework with Polynomial Complexity Guarantees}
\author{Alex Borisevich, \href{mailto:akpc806b@gmail.com}{akpc806b@gmail.com}}
\date{September 2026}
\begin{document}
\maketitle
\begin{abstract}
We develop a certified continuation framework for inference and training in deep equilibrium networks (DEQs), with training posed as interpolation to accuracy $2^{-b}$. For inference, input homotopy selects an equilibrium branch from a supplied start root, and a rounded tracker follows it under quantitative conditioning, derivative, boundary, and tube-radius certificates. The framework includes structured factorized certificates, sequential block elimination, inheritance of contraction guarantees in adapted coordinates, and bordered continuation through simple folds. For smooth multidimensional DEQs, including tanh networks, rational local tests can construct and validate oriented continuation charts under explicit geometric promises.

For training, programmable dormant bilinear rank-one channels provide output-preserving residual-aligned repairs. Loaded Tikhonov solves diagnose insufficient parameter-to-output directions, while certified gate realization, column stability, well-posed inference, and finite-update error budgets control each pass. Under polynomially bounded certificate, encoding, precision, and backend costs, both inference and training have bit complexity $O(\mathrm{poly}(L+b))$, where $L$ is the encoded instance length; training uses $O(b+\ell)$ passes and reserve channels from an initial residual bounded by $2^\ell$. A budgeted implementation returns either certified success or inconclusive termination. The quantitative core and local certificate machinery are machine-checked in Lean 4, while numerical experiments illustrate the training mechanism.
\end{abstract}

\section{Introduction}
Deep equilibrium networks (DEQs) define a hidden state by an equilibrium
equation instead of evaluating a prescribed sequence of layers. This gives
an implicit architecture whose effective depth is determined by the
equilibrium computation, with training memory advantages that motivated
the original DEQ model~\cite{bai2019}. The architecture also raises basic
theoretical questions: does an equilibrium exist, which equilibrium does
the solver select, and how much computation is needed as the requested
accuracy increases? Research on monotone architectures, Jacobian
regularization, and homotopy methods addresses different parts of these
questions~\cite{winston2020,bai2021,ding2023homoode}.

Training adds a second difficulty. Even when every state can be computed
reliably, a small change in the parameters may fail to move the predictions
in a needed direction. We formulate training as interpolation: for a fixed
batch, produce parameters whose prediction vector is within $2^{-b}$ of
the target. The relevant complexity parameter is the number of requested
bits $b$. A guarantee must therefore control both the number of updates
and the precision and cost of the calculations inside each update.

Continuation organizes the two computational problems. For inference,
we vary the input along a prescribed path and continue an equilibrium
from a supplied start root. For training, we prescribe a straight path
in batch-output space and seek parameter updates that approximately lift
its displacement. The state Jacobian controls equilibrium continuation;
the parameter-to-output Jacobian controls the training lift. The training
proof uses finite approximate lifting steps, without assuming an exact
continuous parameter path.

As a structured realization of the inference contracts, we study small
local recurrent blocks coupled through factors $M,C,N$. This factorization exposes
equilibrium conditioning through an interface matrix, even when the
recurrence is non-normal or the usual fixed-point iteration is not
contractive. Its dimension need not be small relative to the state
dimension for the stated identities and conditioning bounds to hold.
For equilibrium computation, input homotopy follows the selected root
through a certified region. Our
compact continuation theorem and rounded tracker turn bounds on the
inverse Jacobian, derivatives, and tube radius into a finite precision
schedule and polynomial cost
(Theorems~\ref{thm:input-sweep} and~\ref{thm:rounded-tracker}).
Section~\ref{sec:concrete-backend} supplies a rational $\tanh$ evaluator
and an explicit scalar DEQ implementation with a derived polynomial work
envelope, rather than an assumed efficient numerical oracle.

For training, a loaded Tikhonov solve either supports a successful
interpolation pass or identifies a weak output direction. An unused
channel can then be programmed while dormant and activated without
changing the current predictions. The important requirement is that its
new parameter derivative approximate the current pass residual and remain
useful throughout the certified pass region. Detecting a weak direction
alone does not ensure that the architecture can realize this repair.
Our analysis includes programming error, column variation, inexact
evaluation and solves, and rounding. With these contracts, at most
$3(b+\ell)$ passes and channels suffice from an initial residual bounded
by $2^\ell$ (Theorem~\ref{thm:main}).

The resulting claims concern a specified certified promise class.
Polynomial computation requires polynomially bounded geometric and
precision budgets, efficient gate realization, and numerical backends
whose costs include certificate checking. Factorization and smoothness alone
do not establish those requirements. This formulation separates the
proved quantitative mechanism from the architectural certificates still
needed to apply it to a particular DEQ. Lean~4 checks the mathematical
core, while the numerical comparisons provide illustrations of the
loaded trigger rather than a validation of every certificate.
Theorem~\ref{thm:budgeted-certification} makes the operational outcome
explicit: certified success or inconclusive termination within a fixed
polynomial budget. Inconclusive does not mean intrinsically hard.
Valid old inference certificates retain their original solver and require
zero inference repairs. Proposition~\ref{prop:block-coverage} extends
coverage by solving contractive recurrent blocks in dependency order;
large coupling between blocks costs extra precision bits rather than a
polynomial magnitude assumption. This strictly relaxes our uniform
Euclidean inverse-bound promise, while drawing on established block and
weighted-norm contraction ideas rather than superseding them.
Section~\ref{sec:solver-inheritance} makes certified continuation the
default mechanism: an instance certified for an established DEQ solver
is solved polynomially by one derivative-free continuation constructed
from that solver's contractive update, including contraction in solver
coordinates even when the original state update is not contractive. The
outlined exceptions, such as triangular ReLU systems solved by direct
substitution, are direct polynomial solutions outside the continuation
route. Section~\ref{sec:fold-continuation} then extends certified inference
beyond the strict-contraction regimes inherited above. At a simple fold two
equilibrium branches merge and the update derivative has eigenvalue $1$,
so no induced-norm strict-contraction certificate can hold at the fold.
Crossing folds also changes what inference means: with several
coexisting equilibria the requested object is a selected branch,
specified by a certified seed, an orientation, and a terminal crossing,
and without such a selection rule the input--output relation of a
multistable network is not a function. The fold-crossing mechanism
itself is classical, and rigorous per-instance continuation exists in
validated numerics; the contribution here is certified selected-branch
inference for multistable equilibrium networks as a uniform
bit-complexity theorem with a machine-checked local layer. A scalar
tanh example, the canonical bistable neuron, crosses two such folds
with constant geometric bounds. The extension requires a polynomially
bounded branch length and a certified transverse terminal crossing; it
is not an unconditional guarantee for arbitrary equilibrium paths.
Section~\ref{sec:automatic-continuation} constructs rational local
certificates and assembles overlapping oriented charts without a supplied
chart list. Its polynomial guarantee still requires quantitative geometric
bounds and a sound terminal-selection monitor. This is an automatic
local construction within a stated promise class, not a general decision
procedure for equilibrium complexity.
The bird's-eye overview in Section~\ref{sec:bird} explains the training
mechanism before the detailed analysis. Section~\ref{sec:exponential-pass}
shows that a successful repair with well-conditioned inference can still
require exponentially many updates under the global finite-pass
certificate, although a different certified update solves that example
efficiently. The final sections discuss composition, experiments, and
the scope of the guarantees.

\section{Related work and computational complexity}
\label{sec:related}

\subsection{Equilibrium architectures and inference guarantees}
Homotopy-based inference also predates the present construction.
Ding et al.'s HomoODE~\cite{ding2023homoode} connects DEQs and neural ODEs
through homotopy continuation. Beltr\'{a}n and Leykin~\cite{beltran2013}
give certified rational tracking with condition-dependent bit complexity
for polynomial systems. Their polynomial-system theorem cannot be applied
verbatim to a network with transcendental activations. Our input-sweep
result instead states explicit tube, derivative and evaluation promises;
the structured factorization supplies one source of local and interface
inverse bounds, without restricting the general continuation theorem to
that architecture.
This quantitative certificate is the relevant distinction from generic
use of a homotopy solver.

The original DEQ formulation computes a weight-tied network's equilibrium
by root finding and differentiates the selected state implicitly, avoiding
storage proportional to the effective depth~\cite{bai2019}. Multiscale
DEQs extend this construction to coupled representations at several
resolutions~\cite{bai2020}. This memory advantage does not remove the work
of the forward root solve or the backward linear solve. In particular,
factored inverse updates in Broyden's method are already part of
the DEQ literature; they must be distinguished from the architectural
factorized recurrent coupling studied here. Structured connectivity
also predates DEQs in recurrent-network theory, where it is used to relate
connectivity geometry to low-dimensional dynamics and computation
\cite{mastrogiuseppe2018}. The present architecture should therefore be
assessed through its explicit inference and repair certificates, rather
than through the recurrence factorization alone.

Well-posed implicit models also have a substantial history. Implicit deep
learning treats fixed-point prediction rules as a common language for
feedforward and recurrent architectures and studies sufficient conditions
for well-posedness~\cite{elghaoui2021}. Monotone operator equilibrium
networks construct a parameterization with a unique equilibrium and
convergent operator-splitting solvers~\cite{winston2020}. NEMON uses
non-Euclidean contraction theory to obtain well-posedness, fixed-point
algorithms, and quantitative input-output bounds~\cite{jafarpour2021}.
Thus, guaranteed equilibrium evaluation and stability beyond a naive
Euclidean contraction test are established objectives, rather than new
claims of this paper.

\paragraph{Scope of the additional inference result.}
The budgeted formulation in Theorem~\ref{thm:budgeted-certification}
separates sound success, bounded runtime on every input, and coverage of
specific certificate classes. A clock and an inconclusive outcome are
standard algorithmic devices, not a new way to decide intrinsic hardness.
Proposition~\ref{prop:block-coverage} adds explicit binary-precision
accounting for sequential contractive blocks: inter-block couplings may
have exponential magnitude while their accumulated error amplification
requires only polynomially many extra bits. The previous continuation
class is retained as a zero-repair route, including cases that do not
satisfy the new block contraction test.

The exact extension proved is over this paper's uniform Euclidean
inverse-bound certificate, as witnessed by
Proposition~\ref{prop:block-separation}; it is not a strict containment
claim over the classes of El Ghaoui et al. or NEMON
\cite{elghaoui2021,jafarpour2021}. Block structure, changes of norm, and
contractive implicit evaluation already belong to that literature.
Indeed the separation example also admits a simple weighted contraction
certificate. Relative to those antecedents, the result provides an
explicit requested-state-accuracy budget, a compositional success or
inconclusive interface, and machine-checked preservation and precision
lemmas. We do not claim priority for the underlying contraction or
elimination mechanism, or a universal automatic certificate generator.

Our inference results specialize this line of analysis to block-local
recurrence and a normalized feedback interface. The determinant
and inverse reduction use classical matrix identities~\cite{hager1989}.
The non-normal block transfer estimate likewise has a classical resolvent
interpretation. For $A_0=I-W$ and $0<d<1$, write $t=d/(1-d)$; then
\[
 d(I-W)(I-dW)^{-1}
   =tA_0(I+tA_0)^{-1}=I-(I+tA_0)^{-1}.
\]
The symmetric-part assumption makes $A_0$ monotone, and the complementary
resolvent is nonexpansive by standard monotone-operator estimates
\cite{ryu2016}. The paper's role is to combine this estimate with the
normalization $N=(I-W_0)^TM$, exposing the reduced denominator and its
dimension-independent conditioning margin. These are explicit
architectural consequences of established tools, not a new general
resolvent theorem. The block-isotropy restriction and the separate
equilibrium-selection requirement are essential to that interpretation.

\subsection{Reducing the cost of DEQ differentiation}
Jacobian regularization improves forward and backward stability at modest
additional computational cost~\cite{bai2021}. Jacobian-Free
Backpropagation (JFB) avoids the implicit Jacobian solve through an
alternative update~\cite{fung2022}, whereas SHINE reuses quasi-Newton
inverse estimates from the forward computation to approximate the backward
response~\cite{ramzi2022}. More recently, Lipschitz multiscale DEQs study
architectural restrictions that guarantee convergence of forward and
backward fixed-point computations~\cite{sato2026}. A recent preprint on
response renormalization targets selected nearly singular, loss-sensitive
adjoint response channels~\cite{silva2026}.

These methods address closely related numerical bottlenecks, but their
operators must be distinguished. The state residual Jacobian
$J_x=I-DW_{\mathrm{base}}$ governs equilibrium regularity and implicit
sensitivities. The training Jacobian $J=DF(\theta)$ maps parameter changes
to batch prediction changes. Our normal solve regularizes $JJ^*$ and
detects a poorly liftable \emph{training displacement}; a dormant repair
changes the available parameter-to-output directions. This differs from
approximating or selectively damping the state adjoint while leaving the
available training directions unchanged. A bound on $J_x^{-1}$ helps
evaluate $J$ and the propagated gate, but does not by itself give a lower
bound on the training Jacobian or make that gate surjective.

\subsection{Training convergence and the meaning of polynomial complexity}
Global training convergence is already known for restricted implicit
networks. Kawaguchi proves global linear convergence for
implicit layers with nonlinearity on the weight parameterization and
relates their dynamics to a trust-region Newton method~\cite{kawaguchi2021}.
Gao et al. analyze over-parameterized ReLU implicit
networks~\cite{gao2022}. Ling et al. prove linear convergence of gradient
descent under quantitative initialization conditions, establish these
conditions through an over-parameterization analysis, and preserve a unique
equilibrium throughout training~\cite{ling2023}. Truong extends this line
of analysis to activation functions with bounded first and second
derivatives~\cite{truong2025}. Consequently, geometric loss decay or
logarithmic dependence on inverse error cannot be claimed as new solely
because the model is a DEQ.

Three computational statements should be kept separate. First, a linear
algebra operation count measures arithmetic work at a prescribed state and
parameter. Second, an iteration bound to reach tolerance $\epsilon$
depends on convergence constants, including conditioning and stability
margins. Third, polynomial \emph{bit} complexity for
$\epsilon=2^{-b}$ additionally requires polynomially bounded encodings,
working precision, and evaluation costs. For a uniform-inverse tracker
this includes inverse-margin bounds; block elimination can instead charge
the logarithm of inter-block amplification to working precision. For example,
contraction with factor $\kappa$ requires an iteration count proportional
to $\log(1/\epsilon)/[-\log\kappa]$; an exponentially small
$1-\kappa$ can invalidate a polynomial bound even though convergence is
linear. This observation concerns the resources appearing in a rate, not
a contradiction of the cited convergence theorems.

Our result follows the third convention conditionally: the finite-step
error budget and repair reserve are explicit, and the complete bit cost
is reduced to the certified backend contracts in
Assumption~\ref{ass:class}. Since those contracts include an efficient
\emph{whole-pass} backend, Theorem~\ref{thm:main} is a convergence and cost
composition theorem; it does not independently construct a polynomial-time
trainer for arbitrary DEQs. Establishing such a trainer for a concrete
family requires deriving the gate, region, encoding, and backend bounds
from that family's primitive operations.

Classical hardness results for specified neural-network training problems
\cite{blum1992} motivate careful formulation of the admissible class. They
are not, without a reduction, hardness results for the smooth DEQ
architecture or the adaptive reserve considered here. Conversely,
well-conditioned promises and architectural growth change the problem
being solved; they do not refute those hardness results.

\subsection{Function-preserving growth and residual-aligned repair}
Function-preserving changes of network architecture precede the present
construction: Net2Net transfers a trained function to a wider or deeper
network, and network morphism develops a broader framework for such
transformations~\cite{chen2016,wei2016}. Factorized parameter updates are
also established through adapter methods such as LoRA~\cite{hu2022}.
In particular, neither output preservation at activation nor a rank-one
parameter update is sufficient on its own to establish novelty.

Architectural augmentation also has theoretical optimization guarantees.
Liang et al. show, under their classification and loss assumptions, that
adding a special neuron together with a regularizer makes every local
minimum global~\cite{liang2018}. A landscape statement of that kind does
not by itself bound the iterations, precision, or work needed to find a
minimum. Our finite-pass statement instead concerns an executed trajectory
and its quantitative error budgets, conditional on realizable repairs and
certified regions.

Lawton, Galstyan, and Ver Steeg use Gauss--Newton approximations to learn
and evaluate candidate network morphisms~\cite{lawton2024}. This is a
particularly relevant comparison: second-order guidance of architecture
growth is already present in the literature. Our objective is different
from ranking candidate growth operations by an approximate decrease in
loss. We require a bounded propagated column approximating the fixed pass
residual, a derivative-variation bound protecting that column on the pass
region, and a numerical budget implying an actual finite-pass contraction.
The reserve bound then follows from securing one displacement per pass.
It counts activated channels; it does not guarantee realization of
arbitrary batch directions by a rank-one gate or bound storage independently
of the write dimension.

\subsection{Contribution and remaining boundary}
For inference, Section~\ref{sec:solver-inheritance} makes a single
continuation mechanism the default route for the quantitative
contraction regimes certified in the established solver coordinates,
with directly certified exceptions outlined explicitly. This is
preservation of their guarantees, not a claim that those methods require
the Euclidean inverse bound used by our initial sweep theorem. Section~\ref{sec:fold-continuation} separately uses
classical pseudo-arclength regularity~\cite{dickson2006} to pass simple
folds. Fold crossing itself is not new: validated continuation certifies
branches through folds per instance~\cite{berg2021validated,dickson2006},
and homotopy has been used in DEQ inference for
acceleration~\cite{ding2023homoode}. What appears to be new for
equilibrium networks is the problem statement and its complexity:
certified selected-branch inference for multistable models beyond
strict-contraction regimes. At a fold the update derivative has eigenvalue
$1$, so no induced-norm strict-contraction certificate can hold there. The
result is stated as a uniform bit-complexity theorem over encoded
instances with a machine-checked local layer, and composed with rounded
tracking, selected-terminal certification, and budgeted termination.
Subhomogeneous models~\cite{sittoni2024} are included through their
metric contraction only when the coordinate and precision costs are
also controlled.

The contribution studied here is the quantitative connection between
loaded Tikhonov diagnosis, output-preserving residual-aligned repair,
protection under column drift, and finite inexact passes. The mechanism
permits a precision-scaled repair reserve without requiring a uniform
right inverse of the training Jacobian in every output direction. The
factorized inference certificate supplies complementary control of state
sensitivities. The machine-checked layer covers the stated abstract
implications and their hypotheses; it supports mathematical reliability
rather than replacing architectural realizability or experimental
validation.

The comparison motivates a certified adaptive-training framework, rather
than a claim of the first convergent DEQ trainer, the first use of
Gauss--Newton for growth, or unconditional polynomial-time neural-network
training. The outstanding step toward a broader algorithmic result is an
efficient certificate construction for a nontrivial DEQ family. The
historical numerical runs reported later illustrate the intended mechanism
but do not certify that step or test all contracts of the finite-pass
algorithm.

\section{The DEQ architecture and computational problem}
For input $u\in\R^{d_u}$, consider
\begin{equation}
 x=\varphi\bigl(W_0x+MCN^Tx+Uu+c\bigr),\qquad x\in\R^n.
 \label{eq:base}
\end{equation}
Here $W_0$ is block diagonal with bounded block sizes, $M,N\in\R^{n\times r}$,
and $C\in\R^{r\times r}$. The readout is $\widehat y=w^Tx+c_o$.
Write $W_{\mathrm{base}}=W_0+MCN^T$ for the fixed recurrent weight used
during inference.
The coupling dimension $r$ is an explicit computational resource, not a
smallness assumption: the factorization identities below hold for any
$r$, and $MCN^T$ can have full rank when $r=n$. A smaller interface
dimension can reduce linear-algebra work. This structured model is one
way to instantiate the general inference contracts; arbitrary recurrence
still requires sound conditioning, domain, and evaluation certificates.
The activation acts componentwise and is $C^{1,1}$ on the certified
preactivation region. For the monotone activation interpretation we require
$0\le\varphi'\le1$, rather than only $|\varphi'|\le1$; $\tanh$ is the running
example. The abstract training results apply to any $C^{1,1}$ output map
with the required evaluation contracts, independently of this derivative
range.

Figure~\ref{fig:architecture} shows the local and factorized recurrent contributions
and their shared equilibrium feedback.
\begin{figure}[H]
\centering
\resizebox{\linewidth}{!}{%
\begin{tikzpicture}[>=stealth,
  box/.style={draw,rounded corners=1pt,minimum height=2em,minimum width=2.6em,
              fill=white,font=\small},
  op/.style={draw,circle,inner sep=1pt,minimum size=1.5em,fill=white,
             font=\small},
  every node/.style={font=\small}]

\node[op]  (sum) at (0,0)   {$+$};
\node[box] (act) at (1.9,0) {$\varphi$};
\node[box,minimum width=4.0em,thick] (x) at (4.3,0) {$x\in\R^{n}$};
\node (u) at (-4.0,0) {$u$};
\node[box] (U) at (-2.8,0) {$U$};
\draw[->] (u)   -- (U);
\draw[->] (U)   -- (sum);
\draw[->] (sum) -- (act);
\draw[->] (act) -- (x);
\node[box] (w)    at (6.5,0) {$w^{T}$};
\node      (yhat) at (8.2,0) {$\widehat y(\theta)$};
\draw[->] (x) -- (w);
\draw[->] (w) -- (yhat);

\node[box] (W0) at (1.9,1.8) {$W_0$};
\draw[->] (x.north) |- (W0);
\draw[->] (W0) -| (sum.north);
\begin{scope}[shift={(0.35,2.0)},scale=0.5]
 \draw (0,0) rectangle (1,1);
 \fill (0.06,0.64) rectangle (0.36,0.94);
 \fill (0.36,0.34) rectangle (0.66,0.64);
 \fill (0.66,0.04) rectangle (0.96,0.34);
\end{scope}
\node[align=left,anchor=west] at (5.4,1.8)
 {block-local, $\sym(W_j)\preceq(1-\delta)I$\\
  (non-normal, noncontractive allowed)};

\begin{scope}[blue!55!black]
\node[box,draw=blue!55!black] (Nt)  at (3.4,-1.6) {$N^{T}$};
\node[op,draw=blue!55!black]  (eta) at (2.2,-1.6) {$\eta$};
\node[box,draw=blue!55!black] (Cb)  at (1.1,-1.6) {$C$};
\node[box,draw=blue!55!black] (M)   at (0,-1.6)   {$M$};
\draw[->] (x.south) |- (Nt);
\draw[->] (Nt)  -- (eta);
\draw[->] (eta) -- (Cb);
\draw[->] (Cb)  -- (M);
\draw[->] (M)   -- (sum.south);
\node[align=left,anchor=west] at (4.9,-1.6)
 {factorized global recurrence, $\eta\in\R^{r}$;\\
  closed loop certified by a small-gain margin};
\end{scope}

\begin{scope}[shift={(-2.6,-2.9)}]
\draw[-] (0,0) -- (0.7,0);
\node[anchor=west] at (0.8,0) {base equilibrium network};
\draw[-,blue!55!black] (5.0,0) -- (5.7,0);
\node[anchor=west] at (5.8,0) {factorized global channel};
\end{scope}
\end{tikzpicture}}
\caption{An equilibrium network with factorized recurrence. The
equilibrium state solves $x=\varphi(W_0x+MCN^{T}x+Uu+c)$.  Black:
block-local recurrence satisfying the stated symmetric-part bound.
Blue: global recurrence through the interface state $\eta=N^{T}x$.
Under the normalized-family hypotheses, its interface admits a
small-gain certificate. The identities hold for any interface dimension
$r$; a smaller dimension can reduce the cost of the $r\times r$ solve.}
\label{fig:architecture}
\end{figure}

For a fixed batch, stack the scalar predictions as $F(\theta)\in\R^m$ and
write $J(\theta)=DF(\theta)$. The trainable vector includes the active
parameters and any programmed vectors that can subsequently vary.

Given rational instance data and a precision request $b\in\mathbb N$, the
computational objective is a rational parameter encoding $\widehat\theta$
with
\begin{equation}
 \norm{F(\widehat\theta)-y^*}\le2^{-b}.
 \label{eq:objective}
\end{equation}
Precision complexity is measured in $b$, rather than in $2^b$. We call this
arbitrarily accurate interpolation; no finite exact rational solution is
asserted for a general nonlinear network. Parameter dimension is denoted
$p$ and the initial-residual exponent by $\ell$.

\section{Certified inference}
The continuation and tracking results concern a general residual map
$H(s,x)$ with the stated certificates. We first derive an architectural
source of inverse bounds for the factorized model, then state the
well-posedness and tracking contracts it can support.
\subsection{Exact factorized Jacobian reduction}
For the base model~\eqref{eq:base}, let $D$ be the activation derivative matrix. Set
\begin{equation}
 A=I-DW_0,\qquad B=DMC,\qquad L=N^T,
 \qquad Q=I-LA^{-1}B.
 \label{eq:reduced}
\end{equation}
The state Jacobian is $A-BL$. This definition fixes the ordering of the
reduced factors; $C$ need not commute with the reduced transfer.

\begin{theorem}[Inference reduction]\label{thm:reduction}
If $A$ is invertible, then
\[
 \det(A-BL)=\det(A)\det(Q).
\]
The full and reduced kernels are explicitly isomorphic. If $Q$ is
invertible, then
\[
 (A-BL)^{-1}=A^{-1}+A^{-1}BQ^{-1}LA^{-1}.
\]
In particular, bounds $\norm{A^{-1}}\le a$, $\norm B\le\beta$,
$\norm L\le\lambda$, $\norm{Q^{-1}}\le q$ give
\[
 \norm{(A-BL)^{-1}}\le a+a^2\beta q\lambda.
\]
The same inverse-norm bound controls the adjoint sensitivity solve.
\end{theorem}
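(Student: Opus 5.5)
The plan is to factor the full Jacobian through $A$ and read every claim off that factorization. Since $A$ is invertible, write
\[
 A-BL = A\bigl(I-A^{-1}BL\bigr).
\]
Multiplicativity of the determinant then gives $\det(A-BL)=\det(A)\det(I_n-A^{-1}BL)$. Next apply Sylvester's determinant identity $\det(I_n-XY)=\det(I_r-YX)$ with $X=A^{-1}B\in\R^{n\times r}$ and $Y=L\in\R^{r\times n}$. This identity follows from the two block factorizations of $\begin{pmatrix} I_n & X\\ Y & I_r\end{pmatrix}$. It turns the second factor into $\det(I_r-LA^{-1}B)=\det Q$. Note that the ordering $B=DMC$, $L=N^T$ is used exactly as in~\eqref{eq:reduced}; no commutation with $C$ is needed.

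For the kernel isomorphism, define the two maps explicitly. Send $v\in\ker(A-BL)$ to $\eta=Lv$, and send $\eta\in\ker Q$ to $v=A^{-1}B\eta$. If $(A-BL)v=0$, then $v=A^{-1}BLv=A^{-1}B\eta$, and applying $L$ gives $\eta=LA^{-1}B\eta$, so $Q\eta=0$. Conversely, if $Q\eta=0$ and $v=A^{-1}B\eta$, then $Lv=\eta$ and $Av=B\eta=BLv$, so $v\in\ker(A-BL)$. The two compositions are the identity: $\eta\mapsto A^{-1}B\eta\mapsto LA^{-1}B\eta=\eta$ on $\ker Q$, and $v\mapsto Lv\mapsto A^{-1}BLv=v$ on $\ker(A-BL)$. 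Both maps are linear, so this is the explicit isomorphism.

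For the inverse formula, assume $Q$ is invertible and verify directly that
\[
 (A-BL)\bigl(A^{-1}+A^{-1}BQ^{-1}LA^{-1}\bigr)=I.
\]
Expanding gives $I - BLA^{-1} + BQ^{-1}LA^{-1} - BLA^{-1}BQ^{-1}LA^{-1}$. The last two terms combine as $B(I-LA^{-1}B)Q^{-1}LA^{-1}=BLA^{-1}$, which cancels the second term. The matrices are square, so a one-sided inverse suffices; alternatively, invertibility already follows from the determinant identity. This is the Woodbury identity~\cite{hager1989}.

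The norm bound follows from the triangle inequality and submultiplicativity in any operator norm:
\[
 \norm{(A-BL)^{-1}}\le \norm{A^{-1}}+\norm{A^{-1}}\norm B\norm{Q^{-1}}\norm L\norm{A^{-1}}\le a+a^2\beta q\lambda.
\]
For the adjoint solve, observe that $(A-BL)^T=A^T-L^TB^T$ and that $\norm{M^{-T}}=\norm{M^{-1}}$ in the spectral norm, so the same bound applies. One can also apply the theorem directly to the transposed factors $(A^T,L^T,B^T)$. Their reduced matrix is $I-B^TA^{-T}L^T=I_r-(LA^{-1}B)^T=Q^T$, whose inverse has the same norm $q$.

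None of these steps is deep. The only point needing care is keeping the factor order consistent, $Q=I-LA^{-1}B$ rather than $I-A^{-1}BL$, so that the Sylvester swap lands in the $r\times r$ space. A second point is making sure the adjoint claim is phrased for the Euclidean norm, where the transpose preserves norms; in a general induced norm one would need the dual norm instead.
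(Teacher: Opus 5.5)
Your proposal is correct and follows essentially the same route as the paper: factor $A-BL=A(I-A^{-1}BL)$, apply the rectangular determinant identity with $X=A^{-1}B$ and $Y=L$, use the explicit kernel maps $v\mapsto Lv$ and $\eta\mapsto A^{-1}B\eta$, verify the Woodbury inverse by expansion, and get the norm bound from submultiplicativity, with the adjoint having the same Euclidean norm. The differences are cosmetic: you check a one-sided inverse, which suffices for square matrices, where the paper checks both products, and you additionally note that the transposed factors have reduced matrix $Q^T$.
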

\begin{proof}
The activation chain rule applied to the state residual
$x-\varphi(W_{\mathrm{base}}x+Uu+c)$ gives
$J_x=I-DW_{\mathrm{base}}$. Splitting local recurrence and global coupling
gives $J_x=A-BL$. Factorization gives
\[
 A-BL=A(I-A^{-1}BL).
\]
The rectangular determinant identity
$\det(I-XY)=\det(I-YX)$ with $X=A^{-1}B$, $Y=L$ proves the determinant
formula. No commutation of the latent gain with the reduced transfer is
used.

If $(A-BL)x=0$, multiplication by $A^{-1}$ gives
$x=A^{-1}BLx$. Therefore $u=Lx$ satisfies $Qu=0$. Conversely, if
$Qu=0$, then $x=A^{-1}Bu$ satisfies $(A-BL)x=0$ and $Lx=u$.
These maps are inverse linear maps between the kernels; in particular
nonzero kernel vectors correspond.

For invertible $Q$, let
$S=A^{-1}+A^{-1}BQ^{-1}LA^{-1}$. Use
\[
 (I-LA^{-1}B)Q^{-1}=I,
 \qquad Q^{-1}(I-LA^{-1}B)=I
\]
to check $(A-BL)S=S(A-BL)=I$ by expanding the products. Submultiplicativity
and the triangle inequality then give
\[
 \norm S\le\norm{A^{-1}}+
       \norm{A^{-1}}^2\norm B\norm{Q^{-1}}\norm L
       \le a+a^2\beta q\lambda.
\]
The adjoint has the same induced norm, so
$\norm{J_x^{-*}}=\norm{J_x^{-1}}$.
\end{proof}

\subsection{Non-normal local blocks}
For the normalized family, set $A_0=I-W_0$, $N=A_0^TM$,
$\norm M\le1$, and $C=\alpha I$, with $0\le\alpha<1$. Assume
\begin{equation}
 D=\operatorname{blkdiag}(d_jI_{k_j}),\quad 0\le d_j\le1,
 \qquad \sym(W_j)\preceq(1-\delta)I,
 \quad 0<\delta\le1.
 \label{eq:block}
\end{equation}
Block isotropy is a substantive condition. Ordinary elementwise $\tanh$
does not satisfy it automatically on multi-coordinate blocks.

\begin{theorem}[Local transfer and reduced margin]\label{thm:transfer}
Under \eqref{eq:block}, each local Jacobian satisfies
$\smin(I-d_jW_j)\ge\delta$ and
\[
 \norm{d_j(I-W_j)(I-d_jW_j)^{-1}}\le1.
\]
Consequently $T=A_0(I-DW_0)^{-1}D$ has norm at most one and
\[
 Q(D)=I-\alpha M^TTM,\qquad \smin(Q(D))\ge1-\alpha.
\]
No bound $\norm{W_j}<1$ is required.
\end{theorem}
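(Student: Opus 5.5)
We work block by block, then assemble the block-diagonal pieces and finish with the reduced matrix $Q(D)$ from Theorem~\ref{thm:reduction}.

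\emph{Step 1: the local lower bound.} Fix a block $j$ and write $d=d_j\in[0,1]$ and $W=W_j$. For any unit vector $v$,
\[
 \ip{v}{(I-dW)v}=1-d\,\ip{v}{\sym(W)v}\ge 1-d(1-\delta)\ge\delta .
\]
The middle inequality is immediate when $\ip{v}{\sym(W)v}\ge0$. Otherwise the term $-d\ip{v}{\sym(W)v}$ is nonnegative, so the quantity is at least $1\ge\delta$. Cauchy--Schwarz then gives $\norm{(I-dW)v}\ge\delta$, hence $\smin(I-dW)\ge\delta$ and $I-dW$ is invertible.

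\emph{Step 2: the transfer bound.} If $d=0$ the operator is zero. If $d=1$ it is $(I-W)(I-W)^{-1}=I$, which has norm one. For $0<d<1$ we use the resolvent identity already displayed in the related-work section. Set $A_0=I-W$ and $t=d/(1-d)>0$. Then $I-dW=(1-d)(I+tA_0)$, so
\[
 d(I-W)(I-dW)^{-1}=tA_0(I+tA_0)^{-1}=I-R,\qquad R=(I+tA_0)^{-1}.
\]
The key point is that $\sym(A_0)\succeq\delta I\succeq0$, so $A_0$ is monotone. We must show $\norm{I-R}\le1$. Take any $z$ and put $w=Rz$, so $z=w+tA_0w$ and $(I-R)z=tA_0w$. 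Then
\[
 \norm{z}^2=\norm{w}^2+2t\ip{w}{A_0w}+\norm{tA_0w}^2\ge\norm{tA_0w}^2 ,
\]
since both discarded terms are nonnegative. Hence $\norm{(I-R)z}\le\norm z$. The argument uses only monotonicity of $A_0$, never a bound on $\norm{W}$, which is why no condition $\norm{W_j}<1$ is needed.

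\emph{Step 3: assembly.} The matrices $D$, $W_0$ and $A_0$ are block diagonal with the same partition. The block-isotropy assumption~\eqref{eq:block} makes $D$ commute with each block: on block $j$, $D$ acts as the scalar $d_j$. So $T=A_0(I-DW_0)^{-1}D$ is block diagonal with blocks
\[
 (I-W_j)(I-d_jW_j)^{-1}d_j=d_j(I-W_j)(I-d_jW_j)^{-1}.
\]
By Step 2 each block has norm at most one, so $\norm T\le1$.

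For the reduced matrix, recall from \eqref{eq:reduced} that $A=I-DW_0$, $B=\alpha DM$ and $L=N^T=M^TA_0$. Then
\[
 LA^{-1}B=\alpha M^TA_0(I-DW_0)^{-1}DM=\alpha M^TTM ,
\]
which gives $Q(D)=I-\alpha M^TTM$. Since $\norm M\le1$, we have $\norm{\alpha M^TTM}\le\alpha<1$. Weyl's inequality for singular values, or equivalently the Neumann bound $\norm{Qv}\ge\norm v-\alpha\norm v$, then gives $\smin(Q(D))\ge1-\alpha$.

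The main obstacle is Step 2, specifically the nonexpansiveness of the complementary resolvent for a non-normal $W$. The one-line expansion of $\norm{z}^2$ handles it, and the edge cases $d\in\{0,1\}$ are dealt with separately. A secondary point to state explicitly is where isotropy is used: it lets the scalar $d_j$ pass through $(I-d_jW_j)^{-1}$. Without it, $T$ would not have the resolvent form above.
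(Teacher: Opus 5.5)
Your proposal is correct and follows the paper's proof: the same quadratic-form lower bound, the same block-diagonal assembly, the same reverse-triangle bound for $Q(D)$, and the same norm identity for the transfer estimate. Your expansion of $\norm{z}^2$ with $z=(I+tA_0)w$ is the paper's identity $\norm{x-dWx}^2-\norm{d(x-Wx)}^2=(1-d)^2\norm x^2+2d(1-d)(\norm x^2-\ip x{Wx})$ divided by $(1-d)^2$; the paper's unscaled form holds for all $d\in[0,1]$, so it needs no separate cases $d\in\{0,1\}$, and the case split in your Step 1 is also unnecessary, since $d\ge0$ and $\ip v{\sym(W)v}\le1-\delta$ give the middle inequality directly.
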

\begin{proof}
For one local block, the strict symmetric-part bound yields
\begin{align*}
 \ip{(I-dW)x}{x}
 &\ge[1-d(1-\delta)]\norm x^2\\
 &\ge\delta\norm x^2,
\end{align*}
where the last inequality uses $0\le d\le1$ and $0<\delta\le1$.
Cauchy--Schwarz gives
$\delta\norm x\le\norm{(I-dW)x}$ for $x\ne0$, and the inequality is
immediate at zero. In finite square dimension this implies invertibility
and inverse norm at most $\delta^{-1}$.

The transfer estimate follows from the exact norm identity
\begin{align*}
 &\norm{x-dWx}^2-\norm{d(x-Wx)}^2\\
 &\quad=(1-d)^2\norm x^2+
    2d(1-d)\bigl(\norm x^2-\ip x{Wx}\bigr)\ge0.
\end{align*}
Only $\sym(W)\preceq I$ is needed for this estimate. Setting
$x=(I-dW)^{-1}y$ proves
\[
 \norm{d(I-W)(I-dW)^{-1}y}\le\norm y.
\]
The Euclidean norm of a block-diagonal operator is the maximum of its
block norms. Since $D$ is scalar on each block, the local operators
commute in the required products and
$T=A_0(I-DW_0)^{-1}D$ is precisely their block sum. Hence $\norm T\le1$.

With $N=A_0^TM$,
\[
 Q=I-\alpha M^TA_0(I-DW_0)^{-1}DM=I-\alpha M^TTM.
\]
The compressed transfer $K=M^TTM$ satisfies $\norm K\le1$. For every $u$,
\[
 \norm{(I-\alpha K)u}\ge\norm u-\alpha\norm{Ku}
       \ge(1-\alpha)\norm u.
\]
This proves the reduced gap and its inverse bound.
\end{proof}

\begin{corollary}[Dimension-independent conditioning]\label{cor:fullgap}
If also $\norm{W_0}\le W$, then the full state Jacobian satisfies
\[
 \smin(J_x(D))\ge
 \frac{\delta^2(1-\alpha)}{\delta+1+W}.
\]
At an unsaturated state $D=I$,
$J_x=(I-\alpha MM^T)A_0$. If the normalized feedback has a unit eigenvector
with eigenvalue one, the reduced gap equals $1-\alpha$ and there is a unit
full-state direction with gain at most $(1+W)(1-\alpha)$. Thus, if such a
state belongs to the inference region, its worst-case gap is
$\Theta(1-\alpha)$ with constants independent of $n$ and $r$.
\end{corollary}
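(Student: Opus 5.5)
The plan is to derive the full-state bound by feeding the estimates of Theorem~\ref{thm:transfer} into the inverse formula of Theorem~\ref{thm:reduction}, and then to handle the unsaturated case by a direct computation. We use $\smin(J_x)=1/\norm{J_x^{-1}}$.

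\textbf{Lower bound.} With $A=I-DW_0$, $B=DMC=\alpha DM$ and $L=N^T=M^TA_0$, the four constants of Theorem~\ref{thm:reduction} are as follows.
\begin{itemize}[nosep]
\item Theorem~\ref{thm:transfer} gives $\smin(I-d_jW_j)\ge\delta$ for each block. Since $A$ is block diagonal, $a=\delta^{-1}$.
\item $\beta=\norm{\alpha DM}\le\alpha$, because $\norm D\le1$ and $\norm M\le1$.
\item $\lambda=\norm{M^TA_0}\le\norm{I-W_0}\le1+W$.
\item The reduced margin of Theorem~\ref{thm:transfer} gives $q=(1-\alpha)^{-1}$.
\end{itemize}
Hence
\[
 \norm{J_x^{-1}}\le \frac1\delta+\frac{\alpha(1+W)}{\delta^2(1-\alpha)}
 =\frac{\delta(1-\alpha)+\alpha(1+W)}{\delta^2(1-\alpha)}
 \le\frac{\delta+1+W}{\delta^2(1-\alpha)}.
\]
The last step uses $1-\alpha\le1$ and $\alpha\le1$. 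Inverting gives the claimed lower bound on $\smin(J_x(D))$.

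\textbf{Unsaturated identity.} At $D=I$ we have $J_x=I-W_0-\alpha MN^T$. Substituting $N^T=M^TA_0$ gives $J_x=A_0-\alpha MM^TA_0=(I-\alpha MM^T)A_0$.

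\textbf{Reduced gap at $D=I$.} Here $T=A_0A_0^{-1}=I$, so the compressed transfer is $K=M^TM$. I read the hypothesis as giving a unit $u$ with $M^TMu=u$. Then $Qu=(1-\alpha)u$, so $\smin(Q)\le1-\alpha$. Together with Theorem~\ref{thm:transfer}, this shows the reduced gap equals $1-\alpha$.

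\textbf{Weak full-state direction.} Set $v=Mu$. Then $\norm v^2=u^TM^TMu=1$ and $MM^Tv=M(M^TMu)=v$, so $(I-\alpha MM^T)v=(1-\alpha)v$. Take $x=A_0^{-1}v/\norm{A_0^{-1}v}$; here $A_0$ is invertible because $\smin(A_0)\ge\delta$. This gives $J_xx=(1-\alpha)v/\norm{A_0^{-1}v}$. Since $\norm{A_0^{-1}v}\ge\norm v/\norm{A_0}\ge(1+W)^{-1}$, we obtain $\norm{J_xx}\le(1+W)(1-\alpha)$.

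\textbf{Conclusion.} The lower bound $\delta^2(1-\alpha)/(\delta+1+W)$ and this upper bound are both proportional to $1-\alpha$, with constants depending only on $\delta$ and $W$. Hence the worst-case gap is $\Theta(1-\alpha)$, independent of $n$ and $r$.

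The main delicate point is interpretive rather than technical. One must decide what "normalized feedback has a unit eigenvector with eigenvalue one" refers to. Reading it as $M^TM$ (equivalently $MM^T$ on the range of $M$) makes the transfer to a full-state direction through $A_0^{-1}$ go through. Everything else is a direct substitution of the earlier bounds.
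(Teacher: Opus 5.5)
Your proof is correct and follows the paper's argument essentially step for step. The lower bound comes from substituting $a=\delta^{-1}$, $\beta\le\alpha$, $\lambda\le1+W$, $q=(1-\alpha)^{-1}$ into Theorem~\ref{thm:reduction}, and the upper bound from transporting a unit $u$ with $M^TMu=u$ to $x=A_0^{-1}Mu/\norm{A_0^{-1}Mu}$. Your lower bound $\norm{A_0^{-1}v}\ge(1+W)^{-1}$ is just the reciprocal form of the paper's estimate $\norm{J_xx}=(1-\alpha)\norm{A_0x}\le(1-\alpha)\norm{A_0}$, and your explicit justification that $A_0$ is invertible (via $\smin(I-W_j)\ge\delta$) fills in a step the paper leaves implicit.
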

\begin{proof}
The local inverse norm is at most $\delta^{-1}$, $\norm D\le1$,
$\norm M\le1$, and
$\norm N=\norm{A_0^TM}\le1+W$. Theorem~\ref{thm:reduction} gives
\[
 \norm{J_x^{-1}}\le\delta^{-1}+
        \frac{\alpha(1+W)}{\delta^2(1-\alpha)}
 \le\frac{\delta+1+W}{\delta^2(1-\alpha)}.
\]
Taking reciprocals gives the asserted lower bound.

At $D=I$, the normalization gives
$J_x=(I-\alpha MM^T)A_0$ and $Q=I-\alpha M^TM$.
If a unit $u$ satisfies $M^TMu=u$, then
$\norm{Qu}=1-\alpha$, which meets the reduced lower bound.
Also $y=Mu$ is unit and $MM^Ty=y$. Put
\[
 x=\frac{A_0^{-1}y}{\norm{A_0^{-1}y}}.
\]
Then $\norm x=1$ and
\[
 \norm{J_xx}=(1-\alpha)\norm{A_0x}
     \le(1-\alpha)\norm{A_0}\le(1+W)(1-\alpha).
\]
If the unsaturated state is present, this upper bound combines with the
uniform lower bound to give the worst-case scaling. A unit top feedback
eigenvector is a normalization certificate; its existence is not asserted
for zero-dimensional feedback or an unnormalized matrix.
\end{proof}

\begin{proposition}[Anisotropic perturbations]\label{prop:perturbation}
If a certified state Jacobian $J_0$ satisfies
$\norm{J_0v}\ge g\norm v$ and the true Jacobian satisfies
$\norm{J-J_0}\le\varepsilon<g$, then
$\norm{Jv}\ge(g-\varepsilon)\norm v$. In finite square dimension it is
invertible, with inverse norm at most $(g-\varepsilon)^{-1}$.
\end{proposition}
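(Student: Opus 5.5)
The plan is to use the reverse triangle inequality and then convert the resulting lower bound into invertibility, exactly as in the proof of Theorem~\ref{thm:transfer}. First, for any $v$, split $Jv=J_0v+(J-J_0)v$. Then
\[
 \norm{Jv}\ge\norm{J_0v}-\norm{(J-J_0)v}\ge g\norm v-\varepsilon\norm v=(g-\varepsilon)\norm v .
\]
This uses only the certified lower bound on $J_0$ and the induced operator-norm bound $\norm{(J-J_0)v}\le\norm{J-J_0}\,\norm v\le\varepsilon\norm v$. No structural assumption on the perturbation is needed, which is the point of the word ``anisotropic'': $J-J_0$ may be non-normal or directionally concentrated.

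Second, in finite square dimension I would deduce invertibility from injectivity. Since $g-\varepsilon>0$, the inequality $\norm{Jv}\ge(g-\varepsilon)\norm v$ forces $Jv=0\Rightarrow v=0$. Hence the kernel is trivial, and rank--nullity makes $J$ surjective and therefore invertible.

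Third, for the inverse norm, given any $y$, set $v=J^{-1}y$ in the lower bound. This gives $\norm y\ge(g-\varepsilon)\norm{J^{-1}y}$, so $\norm{J^{-1}}\le(g-\varepsilon)^{-1}$. The same bound holds for $J^{-*}$, because adjoints share induced norms, as noted at the end of the proof of Theorem~\ref{thm:reduction}.

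There is no substantial obstacle. The only care needed is to keep the argument in terms of the lower bound $\norm{J_0v}\ge g\norm v$ rather than assuming $J_0$ is invertible with a symmetric-part bound. The step that passes from a lower bound to invertibility is the one that genuinely needs the finite square dimension hypothesis, since in infinite dimensions injectivity with closed range need not give surjectivity. I would flag this explicitly and remark that, combined with Corollary~\ref{cor:fullgap}, the result lets one take $g=\delta^2(1-\alpha)/(\delta+1+W)$ at a certified nominal state.
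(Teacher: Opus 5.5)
Your proposal is correct and follows the paper's proof step for step: the reverse triangle inequality gives the lower bound, the positive lower bound gives injectivity, finite square dimension upgrades this to surjectivity, and applying the bound to $J^{-1}y$ gives the inverse estimate. Your extra remarks about the adjoint and the infinite-dimensional caveat are correct, though the stated result does not need them.
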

\begin{proof}
For every $v$,
\[
 \norm{Jv}\ge\norm{J_0v}-\norm{(J-J_0)v}
       \ge(g-\varepsilon)\norm v.
\]
The positive lower bound implies injectivity. A linear map on a finite
square space is then surjective. Apply the same lower bound to $J^{-1}z$
to obtain the inverse norm estimate. In particular, estimating only the
change in the activation derivatives is insufficient unless the induced
full Jacobian error is also bounded.
\end{proof}
This permits elementwise activations near a block-isotropic reference when
their induced Jacobian perturbation is quantitatively bounded.

\subsection{Well-posedness and numerical evaluation}
A nonsingular Jacobian is a local regularity certificate. It does not by
itself certify a globally unique equilibrium. The training region must
also carry a sound well-posedness and evaluation certificate.

\begin{proposition}[A complete contraction certificate]\label{prop:wellposed}
If the full layer map $f(x)=\varphi(W_{\mathrm{base}}x+Uu+c)$ is
$\kappa$-contractive, $0\le\kappa<1$, it has exactly one equilibrium $x^*$
and every numerical state $\widehat x$ satisfies
\[
 \norm{\widehat x-x^*}\le
       \frac{\norm{\widehat x-f(\widehat x)}}{1-\kappa}.
\]
A sufficient condition is a nonexpansive activation and
$\norm{W_{\mathrm{base}}}\le\kappa$.
\end{proposition}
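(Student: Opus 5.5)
The plan is to invoke the Banach fixed-point theorem for existence and uniqueness, then derive the a posteriori bound from the triangle inequality. Throughout, $\norm\cdot$ is the Euclidean norm on $\R^n$, and $\kappa$-contractive means $\norm{f(x)-f(z)}\le\kappa\norm{x-z}$ for all $x,z\in\R^n$. Since $\R^n$ is complete and $\kappa<1$, Banach's theorem gives a unique fixed point $x^*=f(x^*)$. Equivalently, it is the unique equilibrium of~\eqref{eq:base} at this input. Uniqueness can also be checked directly: if $x=f(x)$ and $z=f(z)$, then $\norm{x-z}\le\kappa\norm{x-z}$, which forces $x=z$.

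For the residual bound, take any numerical state $\widehat x$ and write $\widehat x-x^*=(\widehat x-f(\widehat x))+(f(\widehat x)-f(x^*))$. The triangle inequality and contraction then give
\[
 \norm{\widehat x-x^*}\le\norm{\widehat x-f(\widehat x)}+\kappa\norm{\widehat x-x^*}.
\]
Since $1-\kappa>0$, rearranging yields the stated estimate. The key point is that this bound uses only the computed residual and the certified constant $\kappa$, so it holds for any $\widehat x$, however it was produced.

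For the sufficient condition, assume $\varphi$ acts componentwise and is $1$-Lipschitz in each scalar coordinate. Then $\norm{\varphi(a)-\varphi(a')}\le\norm{a-a'}$ in the Euclidean norm, because each squared coordinate difference is bounded by the corresponding squared preactivation difference. Applying this with $a=W_{\mathrm{base}}x+Uu+c$ and $a'=W_{\mathrm{base}}z+Uu+c$, the input and bias terms cancel, so
\[
 \norm{f(x)-f(z)}\le\norm{W_{\mathrm{base}}(x-z)}\le\kappa\norm{x-z}.
\]

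The main subtlety is scope rather than difficulty. The contraction must hold on all of $\R^n$, or on a closed set mapped into itself, for Banach's theorem to apply. If $\varphi$ is only controlled on a certified preactivation region, one must additionally verify invariance of a closed region and interpret ``exactly one equilibrium'' relative to that region. For $\tanh$, which is globally $1$-Lipschitz, this issue does not arise.
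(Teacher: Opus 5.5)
Your proof is correct and follows essentially the same route as the paper: Banach's theorem on the complete Euclidean state space for existence and uniqueness, the triangle inequality plus the contraction estimate for the a posteriori bound, and cancellation of the input and bias terms under a nonexpansive activation for the sufficient condition. The paper leaves implicit two points you add: why componentwise $1$-Lipschitz activations are Euclidean-nonexpansive, and what changes when $\varphi$ is controlled only on a certified region. Both additions are sound.
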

\begin{proof}
The Banach fixed-point theorem applies to the complete Euclidean state
space. For a nonexpansive activation,
\[
 \norm{f(x)-f(y)}\le\norm{W_{\mathrm{base}}(x-y)}
       \le\kappa\norm{x-y}.
\]
It gives existence and uniqueness of $x^*$. The triangle inequality and
the contraction estimate give
\[
 \norm{\widehat x-x^*}\le
   \norm{\widehat x-f(\widehat x)}+
   \kappa\norm{\widehat x-x^*}.
\]
Rearranging proves the a posteriori bound. It converts an enclosed layer
residual into an equilibrium-state error; the residual of a floating-point
layer evaluation must itself include the layer evaluation error.
\end{proof}
The contraction subclass supplies one complete evaluation certificate.
Noncontractive instances require another sound state-selection and
evaluation backend, such as a quantitatively justified monotone solver;
small solve residuals alone are insufficient.

With bounded local block size $k$, local linear algebra and the reduced
solve cost $O(nk^2+nr+r^3)$ operations after the required local inverses and
reduced factors are available. Forming all reduced factors can additionally
cost $O(nr^2)$, and applying a precomputed reduced inverse costs $O(r^2)$.
Inference iteration counts, factor bit sizes, and requested accuracy remain
separate complexity resources.
For $r=n$ and bounded $k$, this accounting gives $O(n^3)$ arithmetic
work for the factor formation and solve. The dimensional reduction then
offers no size advantage, but the operation count is still polynomial;
bit complexity additionally requires the stated numerical and encoding
budgets.

\subsection{Certified input homotopy beyond contraction}
\label{sec:inference-homotopy}
An alternative to contraction is to follow a distinguished equilibrium as the input varies along
a supplied sweep $u(s)$, $0\le s\le1$. Write
\[
 H(s,x)=x-\varphi(W_{\mathrm{base}}x+Uu(s)+c).
\]
All model parameters are fixed during the inference sweep. Its state Jacobian
is the same factorized operator
used above. A certified positive reduced margin controls the linear systems
needed by the tangent predictor and Newton corrector, even when iteration
of the layer map is not contractive. The relevant promise concerns the
whole certified tube, rather than just the final equilibrium.

\begin{theorem}[Input-sweep inference certificate]\label{thm:input-sweep}
Let $H$ be $C^2$ on a neighborhood of $[0,1]\times\overline\Omega$, where
$\Omega\subset\R^n$ is bounded and open. Supply a start root $x_0\in\Omega$,
$H(0,x_0)=0$, and exclude zeros on $[0,1]\times\partial\Omega$.
At every zero in this region assume $H_x$ is invertible and
\[
 \norm{H_x^{-1}}\le K,\qquad \norm{H_s}\le B.
\]
Then the start root has a unique continued branch $x(s)$ for the whole
sweep, with
\[
 x'(s)=-H_x^{-1}H_s,\qquad \norm{x'(s)}\le KB.
\]
Its graph length is at most $\sqrt{1+(KB)^2}$. In particular the traversal
quantity $\int_0^1\norm{H_x^{-1}}\sqrt{1+\norm{x'}^2}\,ds$ is at most
$K\sqrt{1+(KB)^2}$.

For numerical tracking, additionally supply a tube of radius $\rho>0$
around the branch on which the inverse bound holds and the full second
derivative is bounded by $H_2$. If $K,B,H_2,\rho^{-1}$, the encoded
start data, and evaluation and solve costs at requested precision are
polynomially bounded in the input size, a tangent predictor with certified
Newton correction tracks this branch to state accuracy $2^{-b}$ with
polynomial total bit cost, provided the evaluator and solver include
rounding and certification costs in those bounds.
\end{theorem}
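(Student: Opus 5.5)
The plan has two halves: a continuation-and-compactness argument for the existence part, and a standard predictor--corrector complexity count for the tracking part.

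\emph{Existence and uniqueness of the branch.} Let $Z=\{(s,x)\in[0,1]\times\overline\Omega: H(s,x)=0\}$. This set is compact, and by hypothesis it avoids $[0,1]\times\partial\Omega$. At every point of $Z$ the Jacobian $H_x$ is invertible, so the implicit function theorem gives a local $C^2$ (hence $C^1$) solution curve through each zero, with $x'=-H_x^{-1}H_s$.

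Let $S$ be the set of $\sigma\in[0,1]$ such that a continuous branch $x$ on $[0,\sigma]$ exists with $x(0)=x_0$ and $(s,x(s))\in Z$. The argument is a standard open--closed one.
\begin{itemize}
\item $S$ is open in $[0,1]$ by the implicit function theorem at $(\sigma,x(\sigma))$; that point lies in $\Omega$, so the local curve stays in $\Omega$ for nearby $s$.
\item $S$ is closed because $\norm{x'}\le KB$ makes $x$ Lipschitz. The limit $x(\sigma^-)$ therefore exists, and it lies in $\overline\Omega$ with $H=0$. Since zeros avoid the boundary, it lies in $\Omega$ and belongs to $Z$.
\end{itemize}
Connectedness then gives $S=[0,1]$.

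Uniqueness of the continued branch follows from the local uniqueness in the implicit function theorem. Two continuous branches agreeing at $0$ agree on a set that is both open (local uniqueness) and closed (continuity), hence everywhere.

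The derivative bound is $\norm{x'}\le\norm{H_x^{-1}}\norm{H_s}\le KB$. The graph $s\mapsto(s,x(s))$ then has speed $\sqrt{1+\norm{x'}^2}\le\sqrt{1+(KB)^2}$, which bounds its length, and multiplying by $\norm{H_x^{-1}}\le K$ gives the traversal bound.

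\emph{Numerical tracking.} Take the step size $h$ proportional to $\min\{\rho,\,1/(KH_2),\,1/(KB)\}$, with constants chosen by Kantorovich theory, so that the number of steps is $O\bigl(K(B+H_2+\rho^{-1})\bigr)$, which is polynomial. At each step, suppose the current iterate $\widehat x_k$ is within $r_0\le\rho/4$ of $x(s_k)$; the tangent predictor is then computed there. The Taylor remainder with the second-derivative bound $H_2$ puts the predicted point within $O(h^2H_2K+h\,r_0)$ of $x(s_{k+1})$. Inside the tube, $\norm{H_x^{-1}}\le K$, and the Newton--Kantorovich quantity satisfies $K^2H_2\norm{H}\le\tfrac12$ once $h$ and $r_0$ are small. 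Newton therefore converges quadratically and stays in the ball where the inverse bound holds, which also certifies that the root found is the branch point and not another root.

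Mid-sweep only a constant number of Newton steps is needed to restore $r_0$. At $s=1$, $O(\log b)$ further quadratic steps reach accuracy $2^{-b}$; linear convergence would instead need $O(b)$ steps, which is still polynomial.

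\emph{Main obstacle: precision and rounding.} Each step must run at a working precision such that the evaluation and solve errors, charged through $K$, cost only a constant fraction of the Kantorovich slack. I would perturb the Newton analysis by inexact-Newton terms $\epsilon_{\mathrm{eval}}+\epsilon_{\mathrm{solve}}$ bounded by $2^{-\text{prec}}$. This gives
\[
\text{prec}=O\bigl(b+\log(K H_2 B\rho^{-1})\bigr),
\]
and I would also round the iterates to keep the bit sizes bounded.

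\emph{Total cost.} The total cost is the step count times the number of Newton iterations per step times the per-call cost at that precision. Each factor is polynomial by hypothesis, so the total bit cost is polynomial in the input size and $b$.
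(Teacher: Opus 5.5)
Your existence half is the paper's argument in open--closed form: the Lipschitz bound $\norm{x'}\le KB$ gives a limit at the end of a partial branch, boundary exclusion puts that limit in $\Omega$, the implicit-function theorem extends the branch, local uniqueness makes continuations agree, and integrating the graph speed gives both length bounds. Your tracking half follows the paper's plan. Its quantitative core, however, fails at the step it has to justify: that each tangent prediction lands in the tube and in the Newton basin of the next branch point. Differentiating $H_s+H_xx'=0$ once more gives
\[
 x''=-H_x^{-1}\bigl(H_{ss}+2H_{sx}[x']+H_{xx}[x',x']\bigr),
 \qquad \norm{x''}\le KH_2(1+KB)^2 .
\]
So the curvature part of the prediction error is $\tfrac12h^2\norm{x''}$. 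This carries a factor $(1+KB)^2$ that your bound $O(h^2H_2K+hr_0)$ drops. The incoming error $r_0$ is also carried forward undamped, not multiplied by $h$. Suppose $h$ is a fixed multiple $c$ of $\min\{\rho,(KH_2)^{-1},(KB)^{-1}\}$ and $(KB)^{-1}$ is the binding term. Then the curvature term is only bounded by about $c^2KH_2$, which need not lie below a fixed fraction of the Newton radius $(KH_2)^{-1}$ or of $\rho$.

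This failure actually occurs. Take $H(s,x)=x+\tfrac{B}{2}x^2-Bs$, $x_0=0$, $\Omega=(-1/B,2)$, with $B$ large. The only zeros in the region form the branch $x(s)=(\sqrt{1+2B^2s}-1)/B$. Along it $H_x=\sqrt{1+2B^2s}\ge1$, $\norm{H_s}=B$, $H_2=B$, and $\norm{H_x^{-1}}\le2$ on the tube of radius $\rho=1/(2B)$. Your rule gives $h=c/(2B)$. The first tangent prediction is $hB=c/2$, while $x(h)\approx\sqrt{c/B}$, so the prediction lands far outside a tube and Newton radius of order $1/B$. Since $x''(0)=-B^3$, a uniform step must be $O(B^{-2})$, so your stage count $O(K(B+H_2+\rho^{-1}))=O(B)$ is also false. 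The repair is the paper's requirement that $KH_2(1+KB)^2h^2$ lie below fixed fractions of $\rho$ and $(KH_2)^{-1}$, e.g.\ $h\propto\min\{\rho,(KH_2)^{-1}\}/(1+KB)$. This is essentially the scale used by the previous-state predictor of Theorem~\ref{thm:rounded-tracker}, and it is still inverse-polynomial, so the polynomial conclusion survives with a larger degree. A smaller issue: because you track the distance $r_0$, your test $K^2H_2\norm{H}\le\tfrac12$ needs an upper bound on $\norm{H_x}$ to convert distance into residual, and the hypotheses bound only $\norm{H_x^{-1}}$. Use instead the distance form $\norm{e_{j+1}}\le KH_2\norm{e_j}^2/2$, valid while the Newton segment stays in the tube, as the paper does. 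With these corrections the rest of your plan goes through.
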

\begin{proof}
The implicit-function theorem gives a local branch and the differentiated
identity $H_s+H_xx'=0$. The inverse bound gives $\norm{x'}\le KB$.
If a maximal branch stopped at $s_*<1$, this velocity bound would make
$x(s)$ Cauchy as $s\uparrow s_*$. Its limit lies in $\overline\Omega$
and is a zero by continuity. Boundary exclusion puts it in $\Omega$,
where the invertible derivative extends the branch, a contradiction.
Local uniqueness also makes any two continuations from $x_0$ agree.
Integrating the graph speed proves the length estimates.

For the numerical assertion, differentiating the tangent identity bounds
$\norm{x''}$ by $KH_2(1+KB)^2$ using Euclidean product norms. Choose an
inverse-polynomial sweep step so that the predictor error is smaller than
both a fixed fraction of $\rho$ and a fixed fraction of $(KH_2)^{-1}$;
if $H_2=0$, only the tube constraint is needed. Taylor's formula bounds
this error by a constant times $KH_2(1+KB)^2h^2$. Newton's local error
estimate is
$\norm{e_{j+1}}\le KH_2\norm{e_j}^2/2$ while its segment remains in the
tube. Smaller certified solve and evaluation errors preserve a uniform
contraction of the corrector error. An inverse-polynomial step gives
polynomially many stages; $O(b)$ certified correction iterations suffice
at the terminal stage, with polynomially many precision bits. Summing
the supplied bit costs proves the claim. This constructs a tracking
schedule under quantitative tube promises; it does not discover them.
\end{proof}

\paragraph{A finite certified tracker.}
A conservative tracker can use the previous corrected state as its
predictor; a tangent predictor is optional. This simpler choice already
has polynomial complexity under the tube promises and makes branch
selection explicit. Let $V=KB$ and choose $r>0$ with
$r\le\rho$ and $KH_2r\le1/4$. Use $N\ge1$ sweep stages with
$V/N\le r/2$, and write $q_i=x(i/N)$. At each stage perform $m$
rounded Newton corrections for $H((i+1)/N,\cdot)$, starting from the
preceding corrected state. The branch centers $q_i$ are used in the
proof; the algorithm evaluates the equation and its Jacobian, not these
unknown exact centers.

\begin{theorem}[Rounded sweep tracker]\label{thm:rounded-tracker}
On each radius-$r$ branch ball, suppose the state derivative is
$H_2$-Lipschitz and has inverse norm at most $K$. A Newton correction
at $y$ uses a step $v$ and a whole-update rounding vector $z$ satisfying
\[
 \norm{H_xv+H}\le\eta,\qquad \norm z\le\zeta,
 \qquad K\eta+\zeta\le\delta\le r/8.
\]
Here the solve residual includes function and derivative evaluation errors
relative to the true equation. Starting at a state within $r/2$ of $q_0$,
with $m\ge1$ corrections per stage, every corrected stage endpoint is
within $r/2$ of its selected root, and every intermediate correction
remains in that root's radius-$r$ ball. For $N\ge1$ the final state obeys
\[
 \norm{\widehat x_N-q_N}\le2^{-m}r+2\delta.
\]
If $r\le2^\ell$, choose $m=b+\ell+2$ and
$\delta\le\min\{r/8,2^{-(b+2)}\}$ to obtain state error at most $2^{-b}$.
There are exactly $Nm$ corrections. If an encoded correction backend
simulates these updates, has uniformly polynomial bit cost and output
size, and the query size and precision are bounded by $S$ and $P$, its
correction work is at most
$Nm\,A(S+P+1)^d$ for its fixed family constants $A,d$.
Certification of the tube and stage data has its separately supplied cost.
\end{theorem}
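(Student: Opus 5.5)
The proof is an induction over stages. The core is a one-step inexact Newton estimate inside a branch ball, chained across corrections and then across stages.

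\emph{Step 1 (one correction).} Fix a stage target $q=q_{i+1}$ with $H((i+1)/N,q)=0$. Take a current state $y$ with $e=\norm{y-q}\le r$. The rounded update is $y^+=y+v+z$. I will write
\[
 y^+-q=H_x(y)^{-1}\bigl[H_x(y)(y-q)-H(y)\bigr]+H_x(y)^{-1}\bigl[H_x(y)v+H(y)\bigr]+z .
\]
Since $H(q)=0$ and the state derivative is $H_2$-Lipschitz on the ball, the first bracket has norm at most $H_2e^2/2$. The second bracket has norm at most $\eta$ by the solve contract. Using $\norm{H_x^{-1}}\le K$ and $\norm z\le\zeta$, this gives
\[
 \norm{y^+-q}\le \tfrac12 KH_2e^2+K\eta+\zeta\le \tfrac18 e+\delta ,
\]
because $KH_2r\le1/4$ and $e\le r$.

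Since $\delta\le r/8$, an input with $e\le r$ produces an output with $e^+\le r/8+r/8<r$. So every intermediate correction stays in the radius-$r$ ball. Iterating $e^+\le e/8+\delta$ over $m$ corrections starting from $e\le r$ gives
\[
 e_m\le 8^{-m}r+\tfrac87\delta\le 2^{-m}r+2\delta .
\]
In particular $e_m\le r/2+r/4$. To get the sharper invariant $e_m\le r/2$, I will use $m\ge1$ and $e_0\le r$: then $e_1\le r/8+r/8=r/4$, and later iterates only decrease toward about $8\delta/7\le r/7$. Hence every stage endpoint is within $r/2$ of its root.

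\emph{Step 2 (stage transfer).} The previous endpoint satisfies $\norm{\widehat x_i-q_i}\le r/2$. By Theorem~\ref{thm:input-sweep}, $\norm{q_{i+1}-q_i}\le V/N\le r/2$. The triangle inequality then gives the starting error $e_0\le r$ for stage $i+1$, which is exactly what Step 1 requires. Induction over $i$ keeps all endpoints within $r/2$ and all intermediates within $r$, with $\widehat x_0$ given within $r/2$ of $q_0$. Applying the Step 1 bound at the last stage yields $\norm{\widehat x_N-q_N}\le2^{-m}r+2\delta$.

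One point needs care. The Lipschitz and inverse bounds are stated on the ball around the branch root $q_{i+1}$, so Step 1 must use that ball, not one around the current iterate. The segment from $y$ to $q$ also lies in it, which is what the Taylor remainder needs.

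\emph{Step 3 (accuracy and cost).} With $r\le2^\ell$ and $m=b+\ell+2$, the first term is $2^{-m}r\le2^{-b-2}$. With $\delta\le2^{-(b+2)}$, the second term is $2\delta\le2^{-b-1}$. Their sum is at most $2^{-b}$.

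The count is $N$ stages of $m$ corrections, so exactly $Nm$ corrections. Each backend query has size at most $S+P$ and costs at most $A(S+P+1)^d$ by the uniform polynomial bound. Multiplying gives the stated work bound. Certification cost is excluded, as the statement says.

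The main obstacle is Step 1's bookkeeping: the rounding vector $z$ and the residual tolerance $\eta$ must be absorbed into a single additive $\delta$, while the ball invariant stays intact at every intermediate step. The quadratic Newton contraction, by contrast, is routine.
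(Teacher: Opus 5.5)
Your proposal is correct and follows essentially the same route as the paper: a one-step inexact Newton bound inside the root's radius-$r$ ball, chained over the corrections of a stage and then across stages using $\norm{q_{i+1}-q_i}\le V/N\le r/2$, followed by the precision bookkeeping and summation of the $Nm$ correction costs. The only difference is cosmetic. You use the sharp remainder constant $H_2e^2/2$, giving $e^+\le e/8+\delta$, whereas the paper uses the conservative $H_2e^2$, giving $e^+\le e/4+\delta$; both yield the stated $2^{-m}r+2\delta$.
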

\begin{proof}
For a root $q$ and $e=\norm{y-q}$, Taylor's remainder on the convex ball
is at most $H_2e^2$ (a conservative constant). Applying the true inverse
to that remainder and the solve residual gives
\[
 \norm{y+v+z-q}\le KH_2e^2+K\eta+\zeta
                    \le e/4+\delta\quad(e\le r).
\]
Thus correction stays in the ball; after its first iteration the error
is at most $3r/8<r/2$. Induction also gives
$e_j\le2^{-j}r+2\delta$. The branch velocity bound and the mean value
theorem give $\norm{q_{i+1}-q_i}\le V/N\le r/2$. A corrected endpoint
therefore starts the next stage within radius $r$ of its root. Induction
over stages proves safety and the final estimate. The stated precision
schedule bounds the final error by
$3\cdot2^{-(b+2)}\le2^{-b}$.

The encoded and decoded finite recursions agree by induction under the
backend simulation contract. Summing the $Nm$ certified correction
costs gives the work bound. This identifies the computed state whose
error is bounded; a small terminal residual alone would not do so.
\end{proof}

Compactness and local regularity have distinct roles. On the compact zero
set in the certified state region, the implicit-function theorem supplies
local graph charts for projection to the sweep interval. These charts
make the projection a local homeomorphism; compactness makes it a covering
map. Lifting the identity sweep from the start root gives its unique
continuous branch. Local agreement with the implicit function gives the
derivative formula, including one-sided endpoint neighborhoods. This
also explains why a supplied global path is unnecessary.

For the normalized block-isotropic family, Corollary~\ref{cor:fullgap}
supplies $K=(\delta+1+W)/(\delta^2(1-\alpha))$. A nonexpansive activation
gives $B\le\norm U\sup_s\norm{u'(s)}$. Bounded activations also make
boundary exclusion inexpensive: for coordinatewise $\tanh$, every root
has $\norm{x}_\infty<1$, so the box $(-2,2)^n$ excludes boundary roots.
The reduced-margin and derivative certificates must apply to the actual
residual throughout the required region. For general $\tanh$ states,
block-isotropy is an additional
restriction; the anisotropic perturbation certificate can replace it when
its slack is positive.

The conclusion selects the branch reached from the supplied start root;
it does not exclude other disconnected equilibria. It also gives a
controlled warm start for nearby inputs: a certified tube can be reused
while its margin and domain tests remain valid. Unlike a generic
pseudo-arclength argument, this theorem has a nonsingular $H_x$ throughout,
so the input coordinate itself is a valid sweep and no fold bypass is
needed. Section~\ref{sec:fold-continuation} supplies a separate extension
using a bordered margin and a terminal-face certificate in place of
these hypotheses, while retaining this original route without repairs.

\subsection{Certified success or inconclusive termination}
\label{sec:budgeted-certification}
The numerical theorems above are success guarantees on certified classes.
They do not decide whether an arbitrary inference problem has intrinsic
polynomial complexity. We now place these solvers inside a bounded
certification procedure. An inference repair changes the solver,
coordinates, or order of elimination while preserving the original
equilibrium equation and its selected solution. It does not activate a
new recurrent channel and thereby change the network being evaluated.

Fix an encoded backend family, input length $S$, requested accuracy
$2^{-b}$, and a polynomial transition budget $P=P(S,b)$. A transition
includes its certificate tests and returns either a new state with a
repair flag, a certified answer, or \emph{inconclusive}. The wrapper
executes at most $P$ transitions; exhaustion also returns inconclusive.
The successful-answer predicate for inference is
\[
 \mathsf{Good}_{\mathrm{inf}}(y)
 \iff \exists q:\ q\text{ is the specified equilibrium and }
          \norm{\operatorname{decode}(y)-q}\le2^{-b}.
\]
For a multivalued equilibrium problem, ``specified'' includes the
continuation branch from the supplied start root. A small residual alone
does not establish this predicate. Certificate production may be internal
or supplied with the input; its validation and all failed searches are
charged to the computation.

\begin{theorem}[Budgeted certification with preserved coverage]
\label{thm:budgeted-certification}
Suppose the initial state satisfies an invariant, every continuing
transition preserves it, and every accepting transition from an invariant
state establishes $\mathsf{Good}_{\mathrm{inf}}$. Assume that, on
\emph{all} inputs and all transitions reached before the cutoff, each
transition has bit cost at most $C(S+b+P+1)^d$, including certificate
search, verification, arithmetic, and rounding. Allow the same bound for
initialization and reporting. Then:
\begin{enumerate}[label=(\roman*),leftmargin=2em]
\item The wrapper returns a certified answer or inconclusive, with total
bit cost at most $(P+1)C(S+b+P+1)^d$. Every returned answer satisfies the
specified accuracy and equilibrium-selection condition.
\item If the actual strategy has a successful trace of $T\le P$
transitions with $R$ repair flags, the wrapper returns its same answer
after $T$ transitions and reports exactly $R$ repairs.
\item In particular, dispatching a valid existing certificate directly
to its original tracker, without inference repairs, preserves its answer
and has $R=0$, provided $P$ includes its complete certified work.
\end{enumerate}
Inconclusive termination makes no assertion about existence, uniqueness,
or the intrinsic complexity of the requested equilibrium.
\end{theorem}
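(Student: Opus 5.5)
The argument is a direct induction over the wrapper's finite execution, modelled as a deterministic transition system with a step counter. I will define the wrapper's configuration as a pair $(\sigma,k)$, where $\sigma$ is the strategy state (including accumulated repair flags) and $k\le P$ counts executed transitions. The wrapper halts on an accepting or inconclusive transition, or when $k=P$. Termination within $P$ transitions is immediate from the counter. The first step is a safety invariant, proved by induction on $k$: every reached configuration before halting has $\sigma$ satisfying the invariant. The base case is the hypothesis on the initial state, and the inductive step is preservation by continuing transitions. Since an answer is returned only by an accepting transition, and that transition fires from an invariant state, the acceptance hypothesis gives $\mathsf{Good}_{\mathrm{inf}}$ for every returned answer. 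Inconclusive outputs (explicit or by exhaustion) carry no claim, which gives the final disclaimer sentence trivially.

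For the cost bound in (i), I sum the charges. There are at most $P$ transitions, plus initialization and reporting, which I will account as one extra unit. The step needing care is that the per-transition bound $C(S+b+P+1)^d$ is stated uniformly over \emph{all} inputs and all reached transitions. So I do not need any growth bound on intermediate state size: that is already folded into the hypothesis, because certificate search, failed searches, and rounding are charged inside it. The sum is then at most $(P+1)C(S+b+P+1)^d$. To be honest about the count, if initialization and reporting are charged separately the constant becomes $P+2$. I will therefore read ``the same bound for initialization and reporting'' as one combined charge, or absorb the discrepancy by noting that the final transition's cost includes reporting. I expect this bookkeeping, not any mathematics, to be the main point to state cleanly.

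For (ii), I will show by induction on $j\le T$ that the wrapper's configuration after $j$ steps is $(\sigma_j,j)$, where $\sigma_j$ is the $j$th state of the successful trace. This holds because transitions are deterministic functions of the state and the counter does not cut off while $j<T\le P$. At step $T$ the trace accepts, so the wrapper returns the same answer. The repair count equals the number of flagged transitions among the first $T$, namely $R$, since the wrapper merely tallies flags. Part (iii) is the special case in which the strategy's first transition dispatches the valid certificate to its original tracker (Theorems~\ref{thm:input-sweep} and~\ref{thm:rounded-tracker}) and no transition sets a repair flag, so $R=0$. The assumption that $P$ covers its complete certified work guarantees $T\le P$, and (ii) then applies.
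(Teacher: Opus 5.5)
Your proposal is correct and follows the paper's proof: a fuel-indexed induction gives termination within $P$ transitions and invariant-based answer soundness, a second induction reproduces the successful trace together with its repair tally, and the cost bound sums the $P$ transition charges plus one combined initialization/reporting allowance, which is how the paper reads that clause. The paper recurses on the remaining budget where you count executed transitions upward, and that difference is only presentational.
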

\begin{proof}
Define the runner recursively in the remaining budget. At zero it returns
inconclusive. Otherwise it executes one transition; a continuing
transition recurses with one less unit and adds one to the transition
count and its flag to the repair count. Induction bounds transitions by
$P$ and repairs by transitions. Induction using invariant preservation
and sound acceptance proves answer soundness. A second induction on a
successful trace proves that any sufficient budget reproduces the trace,
including its answer and repair count. Summing the per-transition bound
and one initialization/reporting allowance gives the displayed cost.
The zero-repair statement is the trace result with $R=0$.
\end{proof}

Coverage concerns the executed strategy: the mere existence of a
certificate does not ensure that an arbitrary search will find it.
We give supplied valid old certificates priority and reserve their full
work budget before attempting any optional repair. For a fixed old
backend family and a fixed extension family with polynomial coverage
bounds, one polynomial $P$ can dominate both bounds and the dispatch
overhead. There is no single prescribed polynomial claimed to dominate
all possible polynomial-time backend families. The wrapper is standard
resource accounting; the additional mathematical content must come from
explicit, efficiently checkable coverage classes.

\subsection{Block elimination with polynomial precision overhead}
\label{sec:block-extension}
We now give an efficiently checkable class whose equilibrium admits
polynomial bit complexity even when its global Euclidean inverse bound is
exponentially large. After a permutation, let a $\tanh$ DEQ have blocks
\begin{equation}\label{eq:block-extension}
 x_i=\tanh\!\left(W_{ii}x_i+\sum_{j<i}W_{ij}x_j+d_i\right),
 \qquad 1\le i\le m,
\end{equation}
with rational encoded weights and inputs. Use the induced infinity norm
and require $\norm{W_{ii}}_\infty\le1-\gamma_i$, where
$0<\gamma_i\le1$ and $\gamma_i^{-1}$ is polynomially bounded in $S$.
Only the \emph{encoding length}, not a polynomial magnitude bound, is
required for the off-diagonal couplings. A proposed partition and the
norm inequalities are checked with rational arithmetic; alternatively,
the directed dependency graph supplies its strongly connected components
and a topological ordering in polynomial time. A failed norm test is
inconclusive for this route.

\begin{proposition}[Constructive block coverage]\label{prop:block-coverage}
For this class, sequential certified block solves compute the unique
equilibrium to infinity-norm accuracy $2^{-b}$ with polynomial total bit
cost in $S+b$. The block decomposition preserves exactly the solutions
of the original equations. It supplies an additional successful route in
Theorem~\ref{thm:budgeted-certification}, with at most one decomposition
repair followed by ordinary numerical solves. The original certificate
route retains priority and uses zero such repairs.
\end{proposition}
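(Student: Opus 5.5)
We argue by induction over the blocks in dependency order, with the error recursion written in the infinity norm. \emph{Exact structure.} Since $W_{ii}$ has $\norm{W_{ii}}_\infty\le1-\gamma_i$ and $\tanh$ is coordinatewise $1$-Lipschitz, for fixed upstream values $x_{<i}$ the map
\[
 g_i(x_i;x_{<i})=\tanh\bigl(W_{ii}x_i+\textstyle\sum_{j<i}W_{ij}x_j+d_i\bigr)
\]
is a $(1-\gamma_i)$-contraction in $\norm\cdot_\infty$. By Banach (the infinity-norm analogue of Proposition~\ref{prop:wellposed}), induction on $i$ gives a unique solution of \eqref{eq:block-extension}. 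The permutation and the grouping into blocks only reorder coordinates and equations, so the solution sets of the original and decomposed systems coincide. This settles uniqueness and exact preservation. Sequential substitution also gives the unique equilibrium itself.

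\emph{Error propagation.} Let $\widehat x_j$ be the computed upstream blocks with errors $e_j=\norm{\widehat x_j-x_j^*}_\infty$. Solving block $i$ with the perturbed inputs $\widehat x_{<i}$ gives a fixed point $\tilde x_i$. The Lipschitz dependence of $g_i$ on its parameter, together with the contraction, yields
\[
 \norm{\tilde x_i-x_i^*}_\infty\le\gamma_i^{-1}\sum_{j<i}\norm{W_{ij}}_\infty e_j.
\]
We then compute $\widehat x_i$ to within $\epsilon_i$ of $\tilde x_i$, certified a posteriori by $\norm{\widehat x-g_i(\widehat x)}_\infty/\gamma_i$ as in Proposition~\ref{prop:wellposed}. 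This gives $e_i\le\epsilon_i+\gamma_i^{-1}\Lambda_i\max_{j<i}e_j$, with $\Lambda_i=\sum_{j<i}\norm{W_{ij}}_\infty$. Unrolling, the final error is at most $\sum_i\epsilon_i\prod_{k>i}(1+\gamma_k^{-1}\Lambda_k)$.

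Each $\Lambda_k$ is a rational of encoding length at most $S$, so $\log_2(1+\gamma_k^{-1}\Lambda_k)\le \mathrm{poly}(S)$. The total amplification therefore has $\log_2$ at most $\mathrm{poly}(S)$. Choosing every $\epsilon_i=2^{-(b+E+\log_2 m)}$, with $E$ that polynomial bound, gives final accuracy $2^{-b}$. This is the precision accounting: exponentially large couplings cost only polynomially many extra bits. The same bound controls the bit size of the intermediate rationals after rounding to that precision.

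\emph{Per-block cost.} Each block is solved by fixed-point iteration of the rounded map $g_i$ from $0$. It needs $O\bigl(\gamma_i^{-1}(b+E+\log m)\bigr)$ iterations, which is polynomial because $\gamma_i^{-1}$ is. Each iterate uses a rational $\tanh$ evaluation to the working precision (the evaluator of Section~\ref{sec:concrete-backend}) and rounding to $P=\mathrm{poly}(S+b)$ bits. Rounding errors are absorbed into $\epsilon_i$ by the perturbed-contraction estimate, with per-step error $\zeta$ giving final error at most $\zeta/\gamma_i$. Checking the partition and the norm tests, or computing the strongly connected components and their topological order, is polynomial. Summing over at most $m\le n$ blocks gives polynomial total bit cost.

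\emph{Interface with the wrapper.} Theorem~\ref{thm:budgeted-certification} is instantiated with one transition that performs the decomposition (the single repair flag) and then continuing solve transitions with no flags. The invariant is ``blocks $<i$ are certified to accuracy $\epsilon$ with the amplification budget''; acceptance establishes $\mathsf{Good}_{\mathrm{inf}}$ by the unrolled bound. Priority of the original route is part (iii) of that theorem, with zero repairs.

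The main obstacle is the precision bookkeeping under large couplings. The product must be bounded via encoding lengths rather than magnitudes, and the input magnitudes $\norm{W_{ij}}_\infty\norm{\widehat x_j}$ must not blow up the preactivation sizes handled by the $\tanh$ evaluator. Since $\norm{x_j}_\infty<1$, preactivations are bounded by $2^{\mathrm{poly}(S)}$. We will try first to use the saturation of $\tanh$ for huge arguments so that the evaluator's cost stays polynomial in the bit length of its argument.
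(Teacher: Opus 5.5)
Your proposal is correct and follows essentially the paper's route. The shared steps are blockwise Banach existence and uniqueness, the sensitivity bound obtained by moving the diagonal self-dependence to the left, a uniform local tolerance that pays only $\log_2$ of the accumulated amplification in extra bits, inexact fixed-point iteration from zero, and certified $\tanh$ saturation for huge preactivations. Your per-block product $\prod_k(1+\gamma_k^{-1}\Lambda_k)$ with a $\log_2 m$ allowance is a slightly finer version of the paper's $\delta=2^{-(b+am)}$ with $1+A\le2^a$.

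One point needs tightening. The evaluator of Section~\ref{sec:concrete-backend} is certified only for $|z|\le4$. Below the saturation threshold, where $|z|=O(P)$, you therefore need a longer certified series, as the paper notes, rather than that proposition verbatim. Your explicit wrapper interface and your solution-preservation remarks fill in points that the paper's proof leaves implicit.
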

\begin{proof}
Each block map is a contraction for every upstream input, so sequential
application of the contraction theorem proves existence and uniqueness.
For two upstream inputs, subtract the fixed-point equations, use that
$\tanh$ is nonexpansive, and move the self-dependence to the left. This
gives the sensitivity estimate
\[
 \norm{\Delta x_i}_\infty\le
 \gamma_i^{-1}\sum_{j<i}\norm{W_{ij}}_\infty
                         \norm{\Delta x_j}_\infty.
\]
Let $A=\max_i\gamma_i^{-1}\sum_{j<i}\norm{W_{ij}}_\infty$.
Choose an integer $a\ge0$ with $1+A\le2^a$, and solve each block to
local error $\delta=2^{-(b+am)}$ for its computed upstream input. The
prefix maximum errors $E_i$ satisfy $E_0=0$ and
\[
 E_i\le\max\{E_{i-1},\delta+A E_{i-1}\}.
\]
Induction gives $E_i\le\delta(1+A)^i$: the previous envelope is
nondecreasing and the new block error obeys
$\delta+A\delta(1+A)^{i-1}\le\delta(1+A)^i$.
Consequently,
\begin{equation}\label{eq:block-precision-budget}
 E_m\le2^{-(b+am)}(1+A)^m\le2^{-b}.
\end{equation}
Since rational row sums and $\gamma_i^{-1}$ have polynomial encoding
length, $a$ and $p=b+am$ are polynomially bounded even when $A$ is
exponential in magnitude.

Start each block iteration at zero. Its exact initial error is at most
one. Set the certified whole-update error to
$\nu_i\le\gamma_i\delta/2$, including affine evaluation, $\tanh$
evaluation, and rounding; coordinatewise clipping to $[-1,1]$ cannot
increase the error to the equilibrium. The inexact contraction recurrence
gives after $k$ iterations
\[
 e_k\le(1-\gamma_i)^k+\nu_i/\gamma_i.
\]
Thus $k\ge\gamma_i^{-1}\log(2/\delta)$ suffices. This is polynomial.
Rational affine operations at these precisions have polynomial cost:
large coefficients need their binary length and the additional guard
bits, not a number of operations proportional to their magnitude.
For a large preactivation $z$, the bound
$1-|\tanh z|\le2e^{-2|z|}$ permits certified saturation to the required
precision; otherwise $|z|$ is polynomial in the working precision and
standard rational exponential approximation suffices. This gives a
polynomial implementation of each certified update. Summing over the
blocks proves the bit bound. For Euclidean state accuracy, replace $b$
by $b+\lceil\log_2 n\rceil$, a conservative norm-conversion allowance.
\end{proof}

The general multidimensional $\tanh$ block implementation and its
primitive bit-cost analysis are separate obligations; they are not
identified with the encoded scalar backend.

\begin{proposition}[Separation from the uniform inverse promise]
\label{prop:block-separation}
The block class contains a family with $O(L)$-bit descriptions whose
Euclidean inverse-Jacobian bound along an input sweep is at least
$2^{L+1}$, but whose equilibrium is computable in polynomial bit cost
in $L+b$.
\end{proposition}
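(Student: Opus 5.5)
We will exhibit an explicit two-block scalar family and verify the two claims separately: the lower bound on the Euclidean inverse Jacobian through an explicit test vector, and polynomial computability through Proposition~\ref{prop:block-coverage}. The candidate is
\[
 x_1=\tanh(u),\qquad x_2=\tanh\bigl(2^{L+2}x_1+d_2\bigr),
\]
with $W_{11}=W_{22}=0$, so $\gamma_1=\gamma_2=1$, and coupling $W_{21}=2^{L+2}$. Its rational encoding has $O(L)$ bits, since $2^{L+2}$ is written in binary. The input sweep is $u(s)=s$, with $d_2$ fixed. Equivalently, one may take $m=L$ blocks coupled with weight $4$ each, if a chain is preferred; we use the single large coupling because it keeps the derivative computation transparent. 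The only freedom we have to exploit is choosing $d_2$ so that the second block is unsaturated somewhere along the sweep.

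The Jacobian of $H(s,x)=x-\tanh(Wx+Uu(s)+c)$ is
\[
 J_x=\begin{pmatrix}1&0\\-a\,2^{L+2}&1\end{pmatrix},\qquad a=1-x_2^2=\tanh'(z_2).
\]
Its inverse has the off-diagonal entry $a\,2^{L+2}$, so $\norm{J_x^{-1}}\ge a\,2^{L+2}$. Any uniform inverse bound $K$ valid along the sweep must therefore exceed this value at every equilibrium the sweep visits. We choose $d_2=-2^{L+2}\tanh(1/2)$. At $s=1/2$ this gives $z_2=0$ and hence $a=1$, so the bound is at least $2^{L+2}\ge2^{L+1}$. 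If one wants the lower bound to hold at every sweep point rather than at a single one, we would instead keep $z_2$ small over an interval of $s$; this is not needed, because the statement concerns the uniform bound. Rounding $\tanh(1/2)$ to a rational value with $O(L)$ bits keeps $|z_2|$ at most a constant at $s=1/2$, so $a$ is bounded below by a constant. To absorb that constant, we raise the coupling exponent to $L+3$.

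Computability follows directly from Proposition~\ref{prop:block-coverage}. The family lies in the class~\eqref{eq:block-extension} with $\gamma_i=1$, the inter-block amplification is $A=2^{L+3}$, and so $a=L+4$ extra precision bits suffice. The precision budget~\eqref{eq:block-precision-budget} then gives total bit cost polynomial in $L+b$. The equilibrium is also unique, since each block is a contraction for every upstream input.

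The main obstacle is the rational choice of $d_2$. The offset must have polynomial encoding length while $\tanh'$ stays at least a constant at some sweep point. This is handled by the constant-slack argument above: a rational approximation to $2^{L+3}\tanh(1/2)$ with $O(L)$ bits leaves $|z_2|=O(1)$, which costs only a constant factor in the bound.
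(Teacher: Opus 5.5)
Your proposal is correct and follows essentially the paper's route: a two-block triangular $\tanh$ system with coupling $2^{\Theta(L)}$, a lower bound on $\norm{H_x^{-1}}$ from the off-diagonal inverse entry at an unsaturated equilibrium, and polynomial cost from sequential block solving. The paper argues the cost directly rather than citing Proposition~\ref{prop:block-coverage}, but the two are equivalent here. The one real difference is that your ``main obstacle'' is self-inflicted. With $d_2=0$ and $s=0$ the root is exactly zero, so $\tanh'(z_2)=1$ and the inverse entry is exactly $2^{L+2}$. This is how the paper's family $x_1=\tanh(\tfrac12x_1+s)$, $x_2=\tanh(2^Lx_1+s)$ avoids both the rational rounding of $\tanh(1/2)$ and the constant-slack bookkeeping. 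As written, that bookkeeping would also need the rounding accuracy pinned to about $2^{-(L+3)}$, so that $\operatorname{sech}^2 z_2\ge1/4$.
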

\begin{proof}
Put $M=2^L$ and consider
\[
 x_1=\tanh(\tfrac12x_1+s),\qquad
 x_2=\tanh(Mx_1+s),\qquad 0\le s\le1.
\]
At $s=0$ the unique root is zero, and the residual Jacobian and its
inverse are
\[
 H_x=\begin{pmatrix}1/2&0\\-M&1\end{pmatrix},\qquad
 H_x^{-1}=\begin{pmatrix}2&0\\2M&1\end{pmatrix}.
\]
Applying the inverse to $(1,0)$ gives norm at least $2M$. Hence this
family violates the polynomial-$K$ promise of
Theorem~\ref{thm:input-sweep} in the original Euclidean coordinates.
The first block has contraction factor $1/2$; compute it to
$b+L+O(1)$ bits in $O(b+L)$ iterations, then evaluate the second block.
Its propagated error is at most $M$ times the first error plus its own
evaluation error. This proves the polynomial bound for rational $s$ of
polynomial encoding length, uniformly over the sweep.
\end{proof}

The separation is between sufficient certificates, not an exponential
lower bound for every execution of the old tracker. It is also not a
separation from weighted-norm contraction methods: the norm
$\max\{|x_1|,|x_2|/(4M)\}$ makes the displayed layer map contractive
with factor at most $1/2$. Converting back to the requested state norm
costs $O(L)$ extra precision bits. Thus the contribution is explicit
state-accuracy and bit accounting, together with inclusion of the
previous continuation class, rather than a new contraction principle.

\subsection{Inheritance of quantitative solver guarantees}
\label{sec:solver-inheritance}
The scientific claim of this subsection is that certified continuation is
a single default mechanism for certified DEQ inference. First, an
instance certified for an established solver---a quantified contraction
regime in that solver's own coordinates---is solved by the one
continuation mechanism of Theorem~\ref{thm:solver-homotopy} with
polynomial bit cost, without requiring the original DEQ update to be
contractive in the Euclidean state norm. Second, the same mechanism then
strictly extends coverage beyond every inherited contraction class:
Section~\ref{sec:fold-continuation} continues a selected branch through
simple folds, where no state-update contraction certificate exists on
the route (Example~\ref{ex:tanh-fold}), and
Section~\ref{sec:automatic-continuation} generates the required local
certificates automatically. A small number of deliberate exceptions,
such as triangular ReLU systems solved by direct substitution, fall
outside the continuation route and are outlined explicitly below. None
of this asserts that every convergence theorem is a uniform polynomial
bit-complexity theorem.

\begin{theorem}[Continuation inherited from a contractive solver]
\label{thm:solver-homotopy}
Let $T$ be a contraction with factor $q=1-\gamma\in[0,1)$ on a
finite-dimensional normed vector space. Its fixed point must decode to the
requested equilibrium. Choose a known $y_0$ and a computable upper bound
$R\ge\max\{1,\norm{T(y_0)-y_0}\}$. Define
\[
 \Phi(z)=\frac{T(y_0+Rz)-y_0}{R},\qquad z=s\Phi(z),\quad 0\le s\le1.
\]
Every $s$ has a unique root $z_s$, with
\[
 z_0=0,\qquad \norm{z_s}\le\gamma^{-1},\qquad
 \norm{z_s-z_t}\le\gamma^{-2}|s-t|.
\]
Suppose decoding from $y$ is $C$-Lipschitz on the encountered region,
where $C\ge1$, and put
$p=b+\lceil\log_2(CR)\rceil+2$.
There is a rounded continuation producing output error at most $2^{-b}$
using
\[
 O\bigl(\gamma^{-3}(p+1)\bigr)
\]
elementary solver evaluations. This is polynomial bit complexity if
$\gamma^{-1}$, $p$, encoded trajectory sizes, and the costs of producing
or verifying the stated certificates, evaluating $\Phi$, and decoding
to the requested precision are polynomial in the input size and $b$.
\end{theorem}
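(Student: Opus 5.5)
The plan is to reduce everything to the contraction properties of the rescaled map $\Phi$, then run a warm-started, rounded fixed-point iteration over a uniform grid in $s$, with the final stage refined to precision $p$. First I will check that $\Phi$ is a $q$-contraction: $\norm{\Phi(z)-\Phi(w)}=R^{-1}\norm{T(y_0+Rz)-T(y_0+Rw)}\le q\norm{z-w}$. Hence $z\mapsto s\Phi(z)$ is a contraction with factor $sq\le q<1$, and Banach's theorem gives a unique root $z_s$ for every $s\in[0,1]$, with $z_0=0$ immediate. Next I will use $\norm{\Phi(0)}=\norm{T(y_0)-y_0}/R\le1$ to get
\[
 \norm{z_s}\le s\norm{\Phi(z_s)-\Phi(0)}+s\norm{\Phi(0)}\le q\norm{z_s}+1,
\]
so $\norm{z_s}\le\gamma^{-1}$. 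This also gives $\norm{\Phi(z_t)}\le1+q\gamma^{-1}=\gamma^{-1}$. Subtracting the two fixed-point equations then yields
\[
 \norm{z_s-z_t}\le sq\norm{z_s-z_t}+|s-t|\,\norm{\Phi(z_t)},
\]
hence $\norm{z_s-z_t}\le\gamma^{-2}|s-t|$. Finally, $z=\Phi(z)$ holds iff $y_0+Rz=T(y_0+Rz)$, so $y_0+Rz_1$ is the fixed point of $T$ and decodes to the requested equilibrium.

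For the rounded continuation I will use $N=\lceil\gamma^{-2}\rceil$ stages $s_i=i/N$, so consecutive roots differ by at most $1$. At stage $i+1$ I will start from the previous computed iterate and perform $k=\lceil\gamma^{-1}\ln 8\rceil$ updates $z\leftarrow s_{i+1}\Phi(z)+(\text{rounding})$, each with whole-update error at most $\nu$. The inexact contraction recurrence (as in Proposition~\ref{prop:block-coverage}) gives error at most $q^k e_0+\nu/\gamma$. With $e_0\le e_i+1$ and $q^k\le1/8$, the invariant $e_i\le1$ is preserved once $\nu/\gamma\le1/2$, and this keeps all iterates in the ball of radius $\gamma^{-1}+2$; that ball is the encountered region for the Lipschitz decoding hypothesis. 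At $s=1$ I will then run $O(\gamma^{-1}(p+1))$ further iterations with $\nu\le\gamma2^{-p-1}$, which drives the $z$-error below $2^{-p}$.

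The output accuracy follows by composition. The state error in $y$ is $R$ times the $z$-error, and decoding multiplies it by at most $C$. The choice $p=b+\lceil\log_2(CR)\rceil+2$ then gives an output error of at most $2^{-b-2}$, leaving room for the decoding's own rounding. The evaluation count is $Nk+O(\gamma^{-1}(p+1))=O(\gamma^{-3}+\gamma^{-1}(p+1))\subseteq O(\gamma^{-3}(p+1))$. Polynomial bit complexity then follows by summing the supplied per-evaluation costs, as in Theorem~\ref{thm:rounded-tracker}, under the stated polynomial hypotheses.

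The step I expect to be the main obstacle is the rounding bookkeeping. The per-update error $\nu$ must bundle the evaluation error of $T$, the rescaling by $R$ (which amplifies absolute errors in $T$ by $R^{-1}$ favorably but requires $\log_2R$ guard bits in the encoding), and the representation of $s_{i+1}$. I will also need to verify that intermediate iterates stay inside the region where the decoding constant $C$ and the encoded sizes are controlled. I will handle this by fixing the working precision uniformly at $p+\lceil\log_2\gamma^{-1}\rceil+O(1)$ bits throughout, so that a single invariant covers every stage.
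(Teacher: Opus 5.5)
Your proposal is correct and follows essentially the paper's argument. It uses the same normalization, the same a priori bounds from $\norm{\Phi(0)}\le1$, subtraction of root equations for the $\gamma^{-2}$ motion bound, a uniform grid of $O(\gamma^{-2})$ stages with $O(\gamma^{-1})$-step rounded contraction groups, and absorption of the decoding amplification $CR$ into the choice of $p$. The only difference is the schedule. The paper reuses the rounded tracker with $m=p+2$ groups at every stage, for $Nmk_0$ calls in total. You run one group per intermediate stage and refine only at $s=1$, which gives the sharper count $O(\gamma^{-3}+\gamma^{-1}(p+1))$, still within the stated bound.
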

\begin{proof}
Normalization preserves $q$ and gives $\norm{\Phi(0)}\le1$.
Banach's theorem applies uniformly to $s\Phi$. From the root equation,
$\norm{z_s}\le q\norm{z_s}+1$, and hence also
$\norm{\Phi(z_s)}\le\gamma^{-1}$. Subtracting the root equations gives
\[
 \norm{z_s-z_t}\le q\norm{z_s-z_t}
             + |s-t|\norm{\Phi(z_t)},
\]
which proves the motion bound without differentiability assumptions.

Set $r=1/2$, $N=\lceil4\gamma^{-2}\rceil$, and
$k_0=\lceil\gamma^{-1}\ln4\rceil$.
Consecutive roots move by at most $r/2$. A group of $k_0$ rounded
contraction steps, each with additive error at most $\nu$, reduces an
initial error $e$ to at most $e/4+\nu/\gamma$, since
$q^{k_0}\le e^{-\gamma k_0}\le1/4$.
Choose $\delta\le\min\{1/16,2^{-(p+2)}\}$ and $\nu\le\gamma\delta$.
The finite rounded tracker with $m=p+2$ such groups per stage stays in
its tracking balls and ends with normalized error at most $2^{-p}$.
Decoding amplifies this by at most $CR$; an additional output rounding
error at most $2^{-(b+1)}$ still leaves total error below $2^{-b}$.
There are $Nmk_0$ elementary calls. The bit-cost conclusion counts each
call at its required precision; analytic contraction by itself does not
supply that cost bound.
\end{proof}
The normalization, coordinate adapters, and source-specific bit backends
are separately supplied obligations. This construction preserves polynomiality,
not necessarily the original rate or polynomial degree. Intermediate
equations are auxiliary solver equations; their endpoint is the original
equilibrium. A supplied original certificate may instead use the original
zero-repair route directly.

\begin{corollary}[Explicit coverage of established solver classes]
\label{cor:published-coverage}
The following quantitative regimes are covered by the single continuation
mechanism of Theorem~\ref{thm:solver-homotopy}, subject to its specified
evaluation and encoding requirements.
\begin{enumerate}
\item \emph{Perron--Frobenius and block contractions.}
For the nonnegative Lipschitz comparison matrix $M$ of the CONE/BLIP
conditions of El~Ghaoui et al.~\cite{elghaoui2021}, a certificate
$v>0$, $Mv\le qv$, $q<1$, makes the update contractive in the weighted
maximum norm (or its block version). Polynomial $\gamma^{-1}$ and
polynomial bit length of the weights give the required regime.
For rational $M$ and a supplied rational $q>\rho(M)$, one may compute
$v=(qI-M)^{-1}\mathbf1$ by rational elimination and check positivity
and the inequality. Block norms and activations still require their own
certified evaluators. The triangular decomposition results are also
inherited through sequential block solves with the precision allocation
of Section~\ref{sec:block-extension}.
\item \emph{Monotone-operator DEQs.}
For the formulation $0\in F(x)+\partial f(x)$ of
Winston and Kolter~\cite{winston2020}, let $F$ be $m$-strongly monotone
and $L$-Lipschitz. Forward--backward splitting with $\alpha=m/L^2$
has factor $q\le\sqrt{1-m^2/L^2}$, so
$\gamma^{-1}\le2(L/m)^2$.
For their affine $F$, Peaceman--Rachford splitting in its auxiliary
variable has factor at most $\sqrt{1-m/L}$ when $\alpha=1/L$,
so $\gamma^{-1}\le2L/m$.
The proximal and resolvent evaluations and the decoder must have
polynomial bit cost; an arbitrary proximal oracle is not a free operation.
\item \emph{Non-Euclidean averaged contractions.}
The averaged update in the weighted maximum-norm theorem of
Jafarpour et al.~\cite{jafarpour2021} has
$q=1-\alpha(1-\operatorname{osL}(F))$ for its admissible step sizes.
At the stated maximal step, $\gamma^{-1}=\kappa_\infty$.
Polynomial effective condition number, encoded norm conversion, and
certified evaluation give coverage even when the raw DEQ update is not
a Euclidean contraction. The derivative-free theorem also permits
nonsmooth activations covered by that source's hypotheses.
\item \emph{Positive subhomogeneous DEQs.}
The positive $\mu$-subhomogeneous maps of
Sittoni and Tudisco~\cite{sittoni2024} contract in Thompson distance.
With $T(y)=\log F(\exp y)$ the norm is $\ell_\infty$ and $q=\mu$.
For their normalized maps use $q=2\mu<1$ in the general case, or
$q=\mu<1$ under their stronger positive-Jacobian hypotheses.
Polynomial inverse gap is required. Positivity, the encountered log
range, the exponential decoder's accuracy amplification, and the costs
of logarithm and exponential evaluation must also meet
Theorem~\ref{thm:solver-homotopy}; the metric change alone does not
establish a bit bound.
\item \emph{Architecturally contractive multiscale models.}
For a certified forward update with Lipschitz factor $q<1$, including
the contractive construction of Sato and Iiduka~\cite{sato2026}, the
same theorem applies with that factor and with the finite composition's
evaluation cost counted.
\end{enumerate}
\end{corollary}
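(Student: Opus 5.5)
The corollary reduces to one template: for each item, exhibit a map $T$ on a finite-dimensional normed space whose fixed point decodes to the requested equilibrium, give its factor $q=1-\gamma$ with $\gamma^{-1}$ polynomially bounded, and identify a decoder with Lipschitz constant $C\ge1$ and a bound $R$ whose logarithm is polynomial. Theorem~\ref{thm:solver-homotopy} then yields $O(\gamma^{-3}(p+1))$ solver evaluations with $p=b+\lceil\log_2(CR)\rceil+2$. Each item is thereby reduced to (a) a contraction factor quoted from or rederived for the cited source, (b) a bound on $\gamma^{-1}$, and (c) a statement of which evaluation, encoding, and decoding costs remain as hypotheses. I would go through the items in order and keep the verification brief, relying on the source theorems.

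\textbf{Item 1.} For $Mv\le qv$ with $v>0$, the weighted norm $\norm{x}_v=\max_i|x_i|/v_i$ makes the update Lipschitz with constant $q$. The comparison inequality $|f(x)-f(y)|\le M|x-y|$ holds componentwise, which gives the bound $\le qv\norm{x-y}_v$. For rational $q>\rho(M)$ with $M\ge0$, the Neumann series shows $(qI-M)^{-1}\ge0$, and $v=(qI-M)^{-1}\mathbf 1$ satisfies $qv-Mv=\mathbf1>0$. Rational elimination then checks this with polynomial encoding length. Decoding from the weighted norm to the requested norm costs $C=\max v_i/\min v_i$, so $\log C$ is polynomial. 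The triangular case is handled by Proposition~\ref{prop:block-coverage}.

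\textbf{Item 2.} For forward--backward splitting, the proximal map is nonexpansive. The map $I-\alpha F$ with $\alpha=m/L^2$ satisfies
\[
\norm{(I-\alpha F)x-(I-\alpha F)y}^2\le(1-2\alpha m+\alpha^2L^2)\norm{x-y}^2=(1-m^2/L^2)\norm{x-y}^2 .
\]
Using $1-\sqrt{1-t}\ge t/2$ gives $\gamma^{-1}\le2L^2/m^2$. For Peaceman--Rachford, I would quote the Winston--Kolter bound $\sqrt{1-m/L}$ and apply the same inequality.

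\textbf{Item 3.} This one is quoted directly: $q=1-\alpha(1-\operatorname{osL})$, and the maximal step gives $\gamma^{-1}=\kappa_\infty$.

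\textbf{Item 4.} The map $\log$ is an isometry from Thompson distance to $\ell_\infty$, so $T$ has factor $\mu$, or $2\mu$ in the normalized case. The decoder $\exp$ has Lipschitz constant $e^{\max y}$ on the encountered region, so the requirement becomes polynomial $\log C$.

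\textbf{Item 5.} This is immediate, with the per-call cost being the cost of the finite composition.

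The main obstacle is not the contraction factors but ensuring that each item's $R$, $C$, and evaluation costs are genuinely polynomial, and that the fixed point of the auxiliary variable (in item 2's splitting) decodes correctly. For Peaceman--Rachford, the state is recovered by applying the resolvent to the auxiliary fixed point. Its Lipschitz constant is at most one, so $C=1$ holds up to norm conversion. I would state these decoder facts explicitly and leave the proximal, $\tanh$, and $\log$/$\exp$ bit costs as the hypotheses the corollary already names, rather than attempting to prove them.
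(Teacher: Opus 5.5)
Your proposal is correct and follows essentially the paper's route. Each item is reduced to Theorem~\ref{thm:solver-homotopy}: the weighted maximum norm turns the comparison inequality into a contraction, the forward--backward factor comes from nonexpansiveness of the prox together with $1-2\alpha m+\alpha^2L^2$, the gaps use $1-\sqrt{1-t}\ge t/2$, and the averaged-map and Thompson-metric contractions are quoted. The only real difference is that the paper proves the Peaceman--Rachford factor rather than citing it, via $\norm{h-\alpha a}^2=\norm{h+\alpha a}^2-4\alpha\ip{h}{a}\le\bigl(1-4\alpha m/(1+\alpha L)^2\bigr)\norm{h+\alpha a}^2$ with the other reflection nonexpansive, while you add details the paper leaves implicit: the Neumann-series argument for $v>0$ and the explicit decoder constants.
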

\begin{proof}
Weighted norms turn comparison inequalities into ordinary contractions.
For forward--backward splitting, nonexpansiveness of the proximal map
and strong monotonicity give the squared factor
$1-2\alpha m+\alpha^2L^2$.
For the reflected affine resolvent, writing $a=F(x)-F(y)$ and $h=x-y$
gives
\[
 \norm{h-\alpha a}^2
 =\norm{h+\alpha a}^2-4\alpha\langle h,a\rangle
 \le\left(1-\frac{4\alpha m}{(1+\alpha L)^2}\right)
                         \norm{h+\alpha a}^2.
\]
The other reflection is nonexpansive. Finally,
$1-\sqrt{1-t}\ge t/2$ for $0\le t\le1$ yields the stated gaps.
The averaged-map rates and Thompson-metric contractions are the cited
results, with the latter becoming norm contractions under logarithmic
coordinates. Apply Theorem~\ref{thm:solver-homotopy} in each case.
\end{proof}

\paragraph{Outlined exceptions: direct solutions without a gap.}
Not every certified class is routed through the homotopy.
For triangular scalar ReLU systems the direct formula
$x_i=(b_i+\sum_{j<i}A_{ij}x_j)_+/(1-A_{ii})$, $A_{ii}<1$,
computes the equilibrium by substitution with polynomial rational cost,
including diagonal entries below $-1$; each step adds only polynomially
many fraction bits, and no contraction gap exists or is needed. This
exception is a polynomial certified solution in its own right and is
simply stated as such, not folded into
Theorem~\ref{thm:solver-homotopy}.

This list is an explicit coverage statement, not a claim about all past
or future DEQ results. For example, geometric convergence with an
unspecified instance-dependent factor, such as that established for
positive concave DEQs~\cite{gabor2024}, needs a quantitative bound on
that factor and its error prefactor before it yields uniform polynomial
complexity. An unquantified convergence result is not disproved or
excluded by an inconclusive run. Similarly, a noncontractive solver with
a separately established polynomial certified implementation is such an
exception; it need not fit the contractive homotopy. Absorption here
concerns inference, and does not transfer a source's training or
generalization guarantees.

\subsection{Continuation through simple folds}
\label{sec:fold-continuation}
The preceding inheritance theorem exploits an existing solver's
contraction. A different extension permits the input to reverse direction
while following a selected equilibrium branch. This changes the
continuation coordinate, not the equilibrium equation or the architecture.
Pseudo-arclength continuation and its regularity at simple folds are
classical; see Dickson et al.~\cite{dickson2006}. The issue here is the
quantitative certificate needed to combine that mechanism with finite
rounded tracking and budgeted termination.

Here $u$ is a scalar continuation coordinate; a prescribed vector-valued
input path may first be composed into $H$.
Write $z=(x,u)\in\mathbb R^{n+1}$, $H(u,x)\in\mathbb R^n$, and
$A=[H_x\ H_u]$. At a regular point of the zero curve let $t$ be an
oriented unit vector spanning $\ker A$. The phase-constrained Jacobian is
\[
 \mathcal B=\begin{pmatrix}A\\t^T\end{pmatrix}.
\]
\begin{lemma}[Bordered margin]
\label{lem:bordered-margin}
If $A$ has full row rank and smallest positive singular value at least
$\sigma>0$, then
$\norm{\mathcal B^{-1}}\le\max\{1,\sigma^{-1}\}$.
If a frozen-phase Jacobian differs from this one by at most $\eta$,
its minimum singular value is at least $\min\{1,\sigma\}-\eta$.
\end{lemma}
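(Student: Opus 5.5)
We prove $\norm{\mathcal Bw}\ge\min\{1,\sigma\}\norm w$ for every $w\in\R^{n+1}$. The lower bound then gives both conclusions: injectivity, and hence invertibility of the square matrix $\mathcal B$, with the inverse estimate; and, through Proposition~\ref{prop:perturbation}, the frozen-phase statement.

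\emph{Orthogonal splitting.} Since $A\in\R^{n\times(n+1)}$ has full row rank, $\ker A$ is one-dimensional and spanned by the unit vector $t$. Write $w=ct+v$ with $c=t^Tw$ and $v\perp t$, so that $v\in(\ker A)^\perp=\operatorname{range}A^T$. Then
\[
 \mathcal Bw=\begin{pmatrix}Av\\ c\end{pmatrix},
\]
because $At=0$ and $t^Tv=0$. On $(\ker A)^\perp$ the map $A$ is bounded below by its smallest positive singular value. To see this, take the SVD of $A$: its right singular vectors for the $n$ positive singular values span $(\ker A)^\perp$, and expanding $v$ in that basis gives $\norm{Av}\ge\sigma\norm v$. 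Combining the two components,
\[
 \norm{\mathcal Bw}^2=\norm{Av}^2+c^2\ge\sigma^2\norm v^2+c^2
 \ge\min\{1,\sigma\}^2\bigl(\norm v^2+c^2\bigr)=\min\{1,\sigma\}^2\norm w^2 .
\]
The last equality is Pythagoras for the orthogonal pair $ct$ and $v$.

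\emph{First claim.} The lower bound makes $\mathcal B$ injective, so this square matrix is invertible. Applying the bound to $w=\mathcal B^{-1}y$ gives $\norm{\mathcal B^{-1}}\le1/\min\{1,\sigma\}=\max\{1,\sigma^{-1}\}$.

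\emph{Second claim.} Let $\mathcal B'$ be the frozen-phase Jacobian, with $\norm{\mathcal B'-\mathcal B}\le\eta$. Apply Proposition~\ref{prop:perturbation} with $J_0=\mathcal B$, $g=\min\{1,\sigma\}$ and $\varepsilon=\eta$. This gives $\smin(\mathcal B')\ge\min\{1,\sigma\}-\eta$. When $\eta\ge\min\{1,\sigma\}$ the bound is nonpositive and holds trivially, and when $\eta<\min\{1,\sigma\}$ the proposition also makes $\mathcal B'$ invertible.

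\emph{Main obstacle.} The only delicate point is the identification of $\sigma$ as the lower bound of $A$ on the orthogonal complement of its kernel. This holds precisely because $t$ is a unit vector spanning $\ker A$, so the phase row measures exactly the kernel component. If the frozen phase vector were not orthogonal to $(\ker A)^\perp$, that discrepancy is charged to $\eta$ rather than to the splitting argument.
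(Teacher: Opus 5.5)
Your proposal is correct and follows the paper's proof: both split a vector into its component along the unit kernel tangent $t$ and an orthogonal part, then use $\norm{\mathcal Bw}^2=\norm{Av}^2+c^2\ge\min\{1,\sigma^2\}\norm w^2$. The perturbation claim is the same triangle-inequality argument, which you route through Proposition~\ref{prop:perturbation}, and you correctly note that the case $\eta\ge\min\{1,\sigma\}$ is trivial.
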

\begin{proof}
Decompose $v=v_\perp+a t$, with $v_\perp\perp t$. Then
$\norm{\mathcal Bv}^2=\norm{Av_\perp}^2+a^2$
is at least $\min\{1,\sigma^2\}\norm v^2$.
The perturbation statement follows from the triangle inequality.
Equivalently, for the normal right inverse $R=A^\dagger$,
$\mathcal B^{-1}(b,a)=Rb+at$.
\end{proof}

At a simple fold, $H_x$ has a one-dimensional kernel and cokernel.
If $v,w$ span them, the conditions
$w^TH_u\ne0$ and $w^TH_{xx}[v,v]\ne0$ imply a nondegenerate turning
point of the input projection. The first condition already makes $A$
full row rank: a left null vector of $A$ would be a multiple of $w$
and would also annihilate $H_u$. Thus $H_x^{-1}$ may cease to exist
while the bordered inverse remains bounded. Qualitative nonzero
conditions alone are insufficient for a uniform polynomial bound.

\begin{theorem}[Quantitative fold extension]
\label{thm:fold-extension}
Suppose a selected oriented $C^2$ zero-curve segment starts at a certified
root and reaches a certified terminal bracket for $u=u_*$.
Assume the following quantitative promises and backend contracts.
\begin{enumerate}
\item Its arclength is at most $L$, $A$ has full row rank with smallest
positive singular value at least $\sigma>0$, and $\norm{D^2H}\le M$
on a certified neighborhood of radius $\rho$.
\item In a terminal neighborhood the selected crossing is isolated by
the bracket and $|du/d\ell|\ge\beta>0$. The bracket specifies which
crossing is requested if the branch meets the input face more than once.
\item Local rounded evaluation, linear solving, oriented tangent
enclosure, and validation of overlapping phase charts and the terminal
bracket have polynomial bit cost at the precisions below. Their encoded
states have polynomial size. Validation uses robust margins, with a
polynomial bound on failed trials as well as successful trials.
\end{enumerate}
Put $K=\max\{1,\sigma^{-1}\}$ and choose
\[
 0<r\le\min\{1,\rho/4,[64K(1+M)]^{-1}\}.
\]
If $L,K,M,\rho^{-1},\beta^{-1},r^{-1}$ and the input/output encoding
overheads are polynomially bounded, selected-branch inference admits
polynomially many rounded continuation operations, including crossings
of simple folds. The number of local charts can be bounded by
$O(1+L/r)$; each has a uniformly bounded inverse and uses polynomial
precision and polynomially many corrector steps. The clocked version
returns a certified answer or inconclusive. With the original valid
certificate given priority, all original successful traces remain
available with zero repairs.
\end{theorem}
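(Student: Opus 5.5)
The argument repeats the finite rounded tracker of Theorem~\ref{thm:rounded-tracker}, with two changes. Arclength $\ell$ replaces the input $u$ as the sweep coordinate, and the bordered Jacobian $\mathcal B$ replaces $H_x$. Fix the selected oriented segment $z(\ell)$, $0\le\ell\le L_0\le L$. Lay down nodes $z_i=z(ih)$ with $h=r/2$, so that there are $N=\lceil L/h\rceil=O(1+L/r)$ charts. On chart $i$ the local system is
\[
 G_i(z)=\begin{pmatrix}H(z)\\ t_i^T(z-z_i)-h\end{pmatrix}=0,
\]
with a frozen phase vector $t_i$ equal to a rounded oriented tangent at the current corrected point. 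By Lemma~\ref{lem:bordered-margin}, the exact bordered inverse at a curve point has norm at most $K$. The Jacobian of $G_i$ at any $z$ in a radius-$r$ ball differs from it by at most $M\cdot O(r)$, coming from $H$-derivative variation, plus the phase error $\norm{t_i-t(z)}$. This phase error is bounded by curvature times distance, where $\norm{t'}\le KM$ follows from differentiating $At=0$, plus the enclosure error. The choice $r\le[64K(1+M)]^{-1}$ makes the total perturbation at most $1/(2K)$, or a comparable fraction. Proposition~\ref{prop:perturbation} then gives an inverse bound of $2K$ on the whole ball. The Lipschitz constant of the derivative is $M$, and $r\le\rho/4$ keeps every ball inside the certified neighborhood.

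Each chart then has the same setting as Theorem~\ref{thm:rounded-tracker}. The Newton error obeys $e_{j+1}\le 2KMe_j^2+\delta\le e_j/4+\delta$ for $e_j\le r$. Consecutive targets are within $h=r/2$ of each other, since arclength bounds chordal distance. The root of $G_i$ is locally unique and equals the point of the selected branch at parameter offset $h$ past the previous node. Uniqueness holds because the $2K$-inverse bound makes $G_i$ injective on the ball. Selection also requires that the hyperplane meet the curve only once in the ball, and this follows from the tangent-variation bound since $Lr\cdot KM$-type terms are small. So induction over charts, as in the proof of Theorem~\ref{thm:rounded-tracker}, keeps every iterate in its ball and every stage endpoint within $r/2$ of the branch. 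Orientation is propagated by choosing the sign of the new tangent so that $t_{i+1}^Tt_i>0$. This sign choice is robust because $\norm{t_{i+1}-t_i}\le KMr+\text{enclosure error}<1$. Folds need no special treatment: $\mathcal B$ never uses $H_x^{-1}$, which is exactly the point of Lemma~\ref{lem:bordered-margin}.

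The main obstacle is terminal selection. Near the end, the branch must be stopped at the bracketed crossing $u=u_*$ rather than at a fold turning point. On the terminal chart, $|du/d\ell|\ge\beta$ lets me switch to the natural parameter. In that neighborhood $H_x$ is invertible, with an inverse bound polynomial in $K$ and $\beta^{-1}$. This follows from the bordered structure: if $t=(t_x,t_u)$ with $|t_u|\ge\beta$, then $H_x$ restricted appropriately inherits a bound of roughly $K/\beta$. I therefore finish with the corrector of Theorem~\ref{thm:input-sweep} at fixed $u=u_*$, or equivalently Newton on $(H,u-u_*)$, for $m=b+O(\log(K/\beta))+2$ steps. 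The bracket's isolation promise identifies this root as the requested crossing. Precision is set so that $\delta\le\min\{r/8,2^{-(b+2)}\}$ after amplification by $2K$ and $\beta^{-1}$.

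Cost and wrapping then follow by summation. Each chart uses $O(b+\log(K/r))$ corrections at polynomial precision, and the backend contract (item 3) charges polynomial bit cost per evaluation, solve, tangent enclosure, chart validation, and failed trial. Multiplying by $O(1+L/r)$ charts gives a polynomial total. Placing the procedure inside Theorem~\ref{thm:budgeted-certification}, with a polynomial $P$ dominating this bound and the original route's budget, gives the success-or-inconclusive outcome. Dispatching the original certificate first preserves its traces with $R=0$, by part (iii) of that theorem.
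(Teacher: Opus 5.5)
Your overall route is the paper's: frozen-phase pseudo-arclength charts, Lemma~\ref{lem:bordered-margin} plus a perturbation argument for a $2K$ inverse, the rounded-tracker contraction $e\mapsto e/4+\delta$, $O(1+L/r)$ charts, a terminal border with $e_u^T$ whose inverse is bounded by roughly $K+\beta^{-1}(1+K)$, and the budgeted wrapper. The key inverse estimate, however, fails as you wrote it.

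You compare $DG_i(z)$ with the exact border at a curve point near $z$. The frozen row then incurs the drift $\norm{t_i-t(z)}\le KM\cdot O(r)$. The theorem's choice $r\le[64K(1+M)]^{-1}$ only gives $KMr\le M/(64(1+M))<1/64$. That is a constant, not $O(1/K)$. Proposition~\ref{prop:perturbation} needs the total perturbation below $1/(2K)$, and below $1/K$ for any inverse bound at all. So once $K$ exceeds a fixed constant, your $2K$ bound is not established. With it go the Newton contraction $2KMe^2\le e/4$, the injectivity that makes the chart root unique, and hence the chart induction for the stated range of $r$. Shrinking $r$ to order $1/(K^2(1+M))$ would rescue some polynomial bound, but not the theorem's stated radius or its $O(1+L/r)$ count.

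The missing idea is to compare every Jacobian on the chart ball with the single exact border $(DH(q_i);t(q_i)^T)$ at the chart's own center. That is where the frozen row was enclosed. The phase row then differs from the reference only by the enclosure error, which you may take $\le1/(16K)$. Only the $DH$ block varies, by at most $2Mr\le1/(32K)$ on the radius-$2r$ ball about $q_i$. That ball contains the radius-$r$ tracking ball of the next root, and $r\le\rho/4$ keeps it certified. The total $3/(32K)$ gives an inverse bound below $2K$.

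The curvature bound $\norm{t'}\le KM$ belongs only in $O(1)$ statements. These are: $t_i$ has inner product bounded away from zero with every tangent on a forward arc of length $r$, so the phase is an increasing local coordinate; a phase advance $r/2$ gives an arclength advance in $[r/4,r]$; and your orientation sign test, where your use is correct.

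Relatedly, uniqueness of the hyperplane crossing in the ball already follows from injectivity of $G_i$. The appeal to small ``$Lr\cdot KM$-type terms'' is unnecessary, and it is false in general because $L$ may be polynomially large.
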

\begin{proof}
Parametrize the selected segment by arclength. Differentiating
$Az'=0$ and using $\langle z',z''\rangle=0$ gives
$\norm{z''}\le M/\sigma$. Thus the unit tangent changes by at most
$KM$ times the arclength traveled. On a forward arc of length $r$,
the frozen unit tangent $t_i$ has positive inner product, bounded away
from zero, with every current tangent. The phase
$a=t_i^T(z-z_i)$ is therefore a valid local increasing coordinate.
An arclength advance between $r/4$ and $r$ is obtained by choosing a
phase advance $r/2$; the same bounds hold with conservative slack for
sufficiently accurate enclosures of the center and tangent.

At the center Lemma~\ref{lem:bordered-margin} gives inverse bound $K$.
On a ball of radius $2r$, the first row block changes by at most $2Mr$;
choose the encoded tangent error at most $1/(16K)$. The perturbed
border then has inverse norm at most $2K$. The augmented equations
\[
 H(z)=0,\qquad t_i^T(z-z_i)-a=0
\]
have derivative in $a$ of norm one. Their local root velocity is at
most $2K$, and their derivative variation is bounded by $M$.
Consequently the local contraction radius, phase mesh, and rounding
precision can be chosen by the same inequalities as the finite rounded
tracker. One can also use rounded Newton correction: Taylor's theorem
bounds its exact error by $KM e^2$ when the inverse is bounded by $2K$,
which is at most $e/4$ on these tracking balls; rounding adds $\delta$.
Each chart therefore needs only polynomially many corrections at
precision $b$ plus logarithmic condition and encoding overheads.

Certified overlaps pass the approximate root and its error enclosure
to the next chart on the same oriented local branch. Every completed
nonterminal chart advances arclength at least $r/4$, so at most
$1+4L/r$ charts are required, up to the fixed slack used for enclosures.
The local geometric argument establishes available chart size;
the assumed backend contract accounts for actually validating and
encoding those charts. It is not a free global certificate oracle.

At the terminal face replace the tangent row by the input projection
$e_u^T$. In the decomposition $v=Rb+a t$, a terminal right-hand side
$c$ requires
\[
 a=\frac{c-e_u^TRb}{e_u^Tt},\qquad
 \norm v\le K\norm b+\beta^{-1}(|c|+K\norm b).
\]
This is a polynomial inverse bound. The isolated bracket and its
certified neighborhood permit final rounded correction to the requested
accuracy and prevent choosing a different crossing. Summing the chart,
validation, and terminal costs proves polynomial work. The budgeted
wrapper and original-trace preservation follow from
Theorem~\ref{thm:budgeted-certification}.
\end{proof}

The length bound and terminal bracket are essential: regularity at every
fold does not imply a short branch or that it reaches a specified input.
A terminal point which is itself a fold has $\beta=0$ and needs a
different terminal certificate. Rank loss of the full matrix $A$, as at
certain branch points, is also outside this theorem. Coordinate changes
may be counted as continuation repairs; they do not add training channels.

\paragraph{Containment of the original inference class.}
On an original unit-interval input sweep with
$\norm{H_x^{-1}}\le K_0$ and $\norm{H_u}\le B_0$, the smallest
positive singular value of $[H_x\ H_u]$ is at least $K_0^{-1}$.
Its graph has length at most $\sqrt{1+(K_0B_0)^2}$, and its unit tangent
satisfies $|du/d\ell|\ge[1+(K_0B_0)^2]^{-1/2}$.
Thus the old geometric promises imply the new ones after polynomial
rescaling. More directly, the dispatcher retains the original backend:
membership in the old certified class never forces a fold repair or
new chart-validation obligation. The extended accepted class is a union,
not a replacement of the original one.

\begin{example}[A tanh equilibrium curve with two simple folds]
\label{ex:tanh-fold}
Consider $H(u,x)=x-\tanh(2x+u)$. Its zero curve is
\[
 u(x)=\operatorname{atanh}(x)-2x,\qquad -1<x<1.
\]
On this curve,
\[
 H_x=2x^2-1,\quad H_u=-(1-x^2),\quad
 H_{xx}=8x(1-x^2).
\]
At $x=\pm1/\sqrt2$, $H_x=0$, $H_u=-1/2$, and
$H_{xx}=4x\ne0$: both are simple folds. The original input-sweep
certificate fails there. With phase $x$, however, the border and its
inverse at either fold are
\[
 \begin{pmatrix}0&-1/2\\1&0\end{pmatrix},\qquad
 \begin{pmatrix}0&1\\-2&0\end{pmatrix}.
\]
The Euclidean inverse norm is only $2$.

On the whole segment $|x|\le4/5$, one has
$|H_u|\ge9/25$, $|H_x|\le1$, and
$|u'(x)|=|(1-x^2)^{-1}-2|\le1$.
The length is at most $(8/5)\sqrt2$. The $x$-phase inverse sends
$(h,a)$ to $(a,(H_xa-h)/(1-x^2))$ and has maximum-norm operator
bound $50/9<6$. Thus a single choice of phase crosses both folds
with constant geometric bounds. The Hessian of $H$ has norm at most
$10$, so sufficiently small neighborhoods retain a constant margin.
Certified tanh and rational arithmetic provide polynomial evaluation
cost in this fixed-dimensional example.

For a concrete selected terminal crossing take $u_*=-51/100$ and
$x\in[3/4,4/5]$. The endpoint values satisfy
$u(3/4)<-51/100<u(4/5)$, and $u'(x)\ge2/7$ throughout the bracket.
For instance $1.94<\log7<1.95$ and $1.098<\log3<1.099$
certify these strict inequalities. Such bounds follow from the positive
series $\log a=2\sum_{j\ge0}v^{2j+1}/(2j+1)$,
$v=(a-1)/(a+1)$, whose remainder after $N$ terms is at most
$2v^{2N+1}/((2N+1)(1-v^2))$; $N=32$ suffices for both displayed
intervals. Start at $x=-4/5$, $u=8/5-\log3$, and orient the curve toward
increasing $x$. The initial root is exact in this expression encoding;
the logarithm is evaluated with certified rounding. The selected
terminal bracket lies beyond both folds and supplies a uniform
transversality bound $\beta\ge2/(7\sqrt2)$.
Hence this is a branch-selection problem to which the fold theorem
applies with polynomial cost in $b$, despite singular $H_x$ along the
route. It is a concrete strict extension of the original regular-sweep
promise, not a claim that this scalar equation was previously hard.

\end{example}

\begin{corollary}[Combined inference guarantee]
\label{cor:combined-inference}
Fix the original certified backend, a finite set of quantitative solver
adapters from Section~\ref{sec:solver-inheritance}, and a fold backend
satisfying Theorem~\ref{thm:fold-extension}. Require every accepted output
to satisfy the same requested equilibrium and branch-selection predicate.
With budgets dominating the corresponding polynomial work bounds, their
dispatcher succeeds on the union of these promise classes in polynomial
bit time. A valid original certificate retains its original zero-repair
trace. The fold component strictly enlarges the original regular-sweep
promise, as witnessed by Example~\ref{ex:tanh-fold}.
\end{corollary}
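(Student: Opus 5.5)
The corollary follows by composing results already proved: Theorem~\ref{thm:budgeted-certification} supplies the wrapper, and each component route supplies its own coverage. I would first formalize the dispatcher as a transition system. Its state records which component is active (original backend, one of the finitely many adapters $T_1,\dots,T_k$ from Corollary~\ref{cor:published-coverage} via Theorem~\ref{thm:solver-homotopy}, or the fold backend of Theorem~\ref{thm:fold-extension}) together with that component's internal state. The invariant is the conjunction of the component invariants, restricted to the active component. An accepting transition of any component already establishes $\mathsf{Good}_{\mathrm{inf}}$ with the same branch-selection clause, because the hypothesis requires every accepted output to satisfy the common predicate. Soundness in part~(i) of Theorem~\ref{thm:budgeted-certification} then gives soundness of every returned answer. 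A component that ends inconclusively passes control to the next component, and this handoff counts as a repair flag.

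For coverage I would order the components with the original backend first and reserve its full polynomial budget $P_0$. After it come the adapters, with budgets $P_1,\dots,P_k$, and then the fold backend, with budget $P_f$. Each budget is the polynomial work bound from the respective theorem: Theorem~\ref{thm:input-sweep} and Theorem~\ref{thm:rounded-tracker} for the original route, the $O(\gamma^{-3}(p+1))$ call count of Theorem~\ref{thm:solver-homotopy} for the adapters, and the $O(1+L/r)$ chart bound of Theorem~\ref{thm:fold-extension}. Take $P=P_0+\sum_i P_i+P_f+k+2$. This is a finite sum of polynomials, so it is polynomial, and it dominates dispatch overhead. On an instance in any one promise class, the preceding components either accept, which is sound by the argument above, or exhaust their fixed budgets within polynomial cost. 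The matching component then has a successful trace within its own budget, and part~(ii) of the wrapper theorem reproduces it. The zero-repair claim follows from part~(iii), since the original backend runs first with its reserved budget and no handoff occurs. Total bit time is bounded by part~(i).

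For the strict enlargement, I would use Example~\ref{ex:tanh-fold}. There $H_x=2x^2-1$ vanishes at $x=\pm1/\sqrt2$ on the selected branch, so no finite $K$ satisfies the inverse promise of Theorem~\ref{thm:input-sweep} along any sweep covering it. Moreover, the terminal bracket lies beyond both folds, so the input coordinate alone cannot parametrize the selected route. The example nonetheless meets the fold promises with constant $K,M,\beta^{-1},L$. Combined with the containment paragraph, which shows that old promises imply the new ones after polynomial rescaling, this gives strict inclusion.

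The main obstacle is ensuring that the wrapper's uniform per-transition cost hypothesis holds on \emph{all} inputs, including non-promise instances where some component's search misbehaves. I would handle this by clocking each component internally at its own budget, so that every transition is a bounded certified step regardless of whether promises hold. I would also check that components share no mutable state across handoffs, so that preceding failures cannot corrupt a later component's trace.
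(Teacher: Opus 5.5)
Your proposal is correct and follows the paper's own argument. The paper applies the budgeted dispatcher of Theorem~\ref{thm:budgeted-certification} to the three routes, giving the original backend priority so its zero-repair trace is preserved, and observes that the selected route of Example~\ref{ex:tanh-fold} passes through points where $H_x$ is singular, which violates the original sweep hypothesis. Your component ordering, summed polynomial budget, explicit use of parts (i)--(iii), and internal clocking to secure the all-input per-transition cost bound are a more detailed elaboration of that same composition.
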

\begin{proof}
Apply the budgeted dispatcher of Theorem~\ref{thm:budgeted-certification}
to the three routes.
The example has a certified selected route through points where $H_x$
is singular, so that route violates the original sweep hypothesis.
This is strict containment of certified path problems; it is not a
lower bound against other algorithms for the same terminal equation.
\end{proof}

\subsection{A finite encoded fold tracker}
\label{sec:encoded-fold-tracker}
The geometric extension above now has an executable finite tracking
layer. Its input is a finite list of encoded phase charts. Each chart
provides a Boolean validation function, an encoded rounded update, an
iteration count, a coordinate-repair flag, and cost functions for
validation and updating. Roots and analytic certificates occur only in
the correctness proof. The validator's soundness is a contract;
arbitrary Boolean acceptance is not a certificate.

The runner charges one tick per validation and correction, and one final
tick for successful reporting. It checks each chart before use and
executes its correction block only if the block fits the remaining fuel.
Failed validation or insufficient fuel returns inconclusive, charging
the work already performed. Exhausting the budget at an intermediate
state does not produce a certified answer.

\begin{theorem}[Encoded fold-tracker correctness and bounded work]
\label{thm:encoded-fold-tracker}
Let $d$ decode encoded states into a normed state space. A supplied plan
has roots $q_0,\ldots,q_N$, radius $r\ge0$, and rounding allowance
$0\le\delta\le r/8$. Assume
$\norm{d(c_0)-q_0}\le r/2$ and these chart contracts:
\begin{enumerate}
\item Adjacent roots satisfy $\norm{q_i-q_{i+1}}\le r/2$.
On acceptance of chart $i$ from a valid warm start, its encoded update
$U_i$ satisfies, for every encoding $c$ in the target ball,
\[
 \norm{d(U_i(c))-q_{i+1}}
 \le\tfrac14\norm{d(c)-q_{i+1}}+\delta.
\]
\item Every nonterminal chart performs at least one correction.
The final chart has $m_N$ corrections satisfying
$2^{-m_N}r+2\delta\le\varepsilon$.
Its root satisfies a predicate $\operatorname{Selected}$ including the
equilibrium equation, terminal input, and requested branch.
\end{enumerate}
Every successful answer $c$ satisfies $\operatorname{Selected}(q_N)$
and $\norm{d(c)-q_N}\le\varepsilon$. All corrections stay inside
their certified target balls. Independently of success, charged ticks
and coordinate repairs are at most the fuel budget $P$.

If every validation succeeds on the actual corrected trace, the runner
succeeds with
\[
 P_* = 1+\sum_{i=1}^{N}(1+m_i).
\]
Its repair count is exactly the sum of the chart repair flags, and is
zero when all flags are false. If $m_i\le m$ and every validation and
correction has modeled work at most $C\ge1$, including failed checks,
then
\[
 W_{\rm run}\le P C,\qquad P_*\le N(m+1)+1.
\]
Thus fuel $N(m+1)+1$ suffices on the successful-validation class.
Polynomial bounds on $N,m,C$ give polynomial work for this finite layer.
For a bit bound, primitive cost functions must cover encoded arithmetic;
initialization, certificate construction, and output serialization must
have separately counted polynomial budgets.
\end{theorem}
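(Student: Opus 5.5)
The argument follows the finite rounded tracker of Theorem~\ref{thm:rounded-tracker}, recast as an induction over the runner's execution. The runner is defined recursively on the remaining fuel, and the proof carries an invariant of the following form. After chart $i$ has been completed, the current encoding $c$ satisfies $\norm{d(c)-q_i}\le r/2$. Moreover, the tick count plus the remaining fuel equals $P$, and the repair count is at most the number of ticks charged so far. The base case is the hypothesis $\norm{d(c_0)-q_0}\le r/2$ with zero ticks.

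For the inductive step, suppose chart $i+1$ is validated. Adjacency gives
\[
 \norm{d(c)-q_{i+1}}\le r/2+r/2=r,
\]
so the warm start lies in the target ball of radius $r$. Apply the contraction contract repeatedly. If the current error is $e\le r$, the next error is at most $e/4+\delta\le r/4+r/8<r$, so every intermediate correction stays in the ball. Unrolling the recursion, the error after $j$ corrections is at most $4^{-j}r+\tfrac43\delta\le2^{-j}r+2\delta$. At least one correction is performed on every nonterminal chart, so the endpoint error is at most $r/4+2\delta\le r/2$ because $\delta\le r/8$. This restores the invariant.

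On the final chart the same bound with $m_N$ corrections gives $2^{-m_N}r+2\delta\le\varepsilon$. Success is reported only after the final chart finishes, so any successful answer $c$ satisfies $\norm{d(c)-q_N}\le\varepsilon$. Since $\operatorname{Selected}(q_N)$ is part of the plan contract, soundness follows. Failed validation or an insufficient-fuel check returns inconclusive and makes no claim. The tick count never exceeds $P$, because a correction block is executed only when it fits in the remaining fuel. Each tick contributes at most one repair flag, so repairs are also bounded by $P$.

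For completeness on the successful-validation class, do a second induction along the actual trace. Chart $i$ consumes exactly $1+m_i$ ticks (one validation and $m_i$ corrections), and reporting consumes one more. So with fuel $P_*=1+\sum_i(1+m_i)$ every fit check passes, the runner reproduces the trace, and it returns the answer. The repair count is the sum of the flags by direct accumulation, and it is zero when every flag is false. The work bound is the tick bound multiplied by the per-operation bound $C$, including failed checks, which gives $W_{\rm run}\le PC$. With $m_i\le m$ we get $P_*\le N(m+1)+1$.

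The main obstacle is bookkeeping rather than analysis. The fuel semantics must be stated carefully: partial charging on failure, the up-front fit check before each correction block, and the rule that exhausting fuel mid-trace never produces an answer. With those fixed, the invariant "ticks plus remaining fuel equals $P$" makes all the counting claims follow mechanically, in the same recursive-runner style as the proof of Theorem~\ref{thm:budgeted-certification}. Within the analytic part, the one point needing care is applying the target-ball hypothesis only when the preceding error is at most $r$; this is exactly what the $r/2+r/2$ adjacency step provides.
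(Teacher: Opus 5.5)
Your proposal is correct and follows essentially the paper's argument: adjacency turns an $r/2$ warm start into an $r$ warm start, induction on corrections gives $e_k\le r$ and $e_k\le 2^{-k}r+2\delta$ (with the finer bound used on the final chart), and separate inductions on the runner's execution and on the successful trace give the fuel, repair, work, and $P_*$ bounds. The differences are cosmetic: you recurse on remaining fuel where the paper recurses structurally over the chart list, and you do not treat the empty plan $N=0$, which the paper notes is permitted when the initial enclosure is already accurate.
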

\begin{proof}
The overlap bound turns an $r/2$ warm start for $q_i$ into an $r$ warm
start for $q_{i+1}$. Induction on actual encoded corrections gives
$e_k\le r$ and $e_k\le2^{-k}r+2\delta$. After one or more
corrections the error is at most $r/2$, preserving the next warm-start
condition. For the final chart use its finer error bound, rather than
requiring $r/2\le\varepsilon$. Induction over the validated chart list
proves selected-terminal accuracy. The statement also permits an
empty plan when the initial certified enclosure is already accurate.

The program structurally recurses over the chart list. Each validation
consumes one tick, a performed correction block consumes its iteration
count, and successful reporting consumes one final tick. Failed checks
consume their validation cost. Induction gives the fuel and work bounds
even on inconclusive runs. A successful-trace induction proves exact
reproduction of corrected states, ticks, and repair counts whenever fuel
covers $P_*$. Summing the iteration bounds proves the uniform budget.
\end{proof}

\paragraph{Connection to bordered Newton correction.}
For phase functional $\lambda$, the chart equation is
$\mathcal F(z)=(H(z),\lambda(z-z_i)-a)$.
Its root identity and the Newton estimate hold even
when state and residual spaces have different types. For a bordered derivative
$J$ and left inverse $R$ with $\norm R\le K$, Taylor remainder bound
$H_2\norm{z-q}^2$, solve residual at most $\eta$, and rounding error
at most $\zeta$, the decoded encoded update satisfies
\[
 \norm{z_{\rm new}-q}
 \le K H_2\norm{z-q}^2+K\eta+\zeta.
\]
Thus $KH_2r\le1/4$ and $K\eta+\zeta\le\delta$ imply the chart
contraction contract. The hypothesis connects the actual encoded update
to its step and rounding error; it does not identify an arbitrary
numerical routine with exact Newton iteration.

\paragraph{Precision without exponentially small charts.}
Keep the geometric radius $r$ independent of $b$. If $r\le2^\ell$,
choose $m_N=b+\ell+2$ and
$\delta\le\min\{r/8,2^{-(b+2)}\}$.
The checked precision lemma gives $\varepsilon=2^{-b}$; only precision
and correction counts grow with $b$. Internal charts need only maintain
their $r/2$ warm-start accuracy. Shrinking the entire chart radius to
$2^{-b}$ would instead risk exponentially many charts.

\paragraph{An executable fold-crossing instantiation.}
An executable example instantiates the runner and its semantic
theorem for $H(x,u)=x^2+u$. With phase $x=a$, exact rational Newton is
\[
 (x,u)\longmapsto(a,x^2-2ax).
\]
In the product maximum norm its new error from $(a,-a^2)$ is $(x-a)^2$,
at most one quarter of the previous error on the radius-$1/4$ ball.
This holds for arbitrary rational encoded states, not just the
sample trace.

Starting at $(-1/8,-1/64)$, phase targets $-1/16,0,1/16,1/8$ and
correction counts $2,2,2,10$ cross the simple fold at $(0,0)$.
The selected endpoint is the positive root at input $u=-1/64$, although
the negative start root has that same input. The generic soundness
theorem certifies $2^{-8}$ accuracy; exact evaluation returns
$(1/8,-1/64)$. The run uses 21 ticks and one coordinate repair. Fuel 20
yields inconclusive, and an invalid initial enclosure is rejected with
its validation charged. Test costs of three abstract units per check
and five per update give 93 units including reporting. These constants
test accounting; they are not measured rational-arithmetic bit costs.

The remaining boundary is general geometric certificate construction
and its bit-cost analysis, including tangent and terminal-bracket
generation. The finite encoded tracker does not establish those
separate backend promises.

\subsection{Automatic local certificates and selected-path assembly}
\label{sec:automatic-continuation}
We now give a construction of local certificates in arbitrary
finite dimension, followed by a conditional automatic assembly theorem.
The input does not include a list of phase charts. It does include a
certified seed, an orientation, quantitative neighborhood bounds, and a
terminal-selection contract. These distinctions are essential: producing
local certificates is different from discovering an initial equilibrium,
bounding the length of its branch, or deciding which endpoint is wanted.
Parametrized contraction validation and continuation through folds are
established techniques; see \cite{berg2021validated,dickson2006}.
Here we specify their rational tests, search overhead, and connection to
the budgeted DEQ tracker.

\paragraph{A rational local test.}
Use the maximum norm and its induced matrix norm throughout this
subsection. Let $G:\mathbb R^d\to\mathbb R^d$ be continuously
differentiable on a neighborhood of
$B=\{z:\norm{z-c}_\infty\le r\}$, with $c\in\mathbb Q^d$
and rational $r>0$.
Choose a rational nonsingular matrix $R$. Certified enclosures supply
\[
 Y\ge\norm{RG(c)}_\infty,\qquad
 q\ge\sup_{z\in B}\norm{I-RDG(z)}_\infty.
\]
For example, an entrywise enclosure $DG(B)\subseteq\widehat J+[-\Delta,\Delta]$
gives the computable rational bound
\[
 q=\norm{I-R\widehat J}_\infty+
       \norm{|R|\Delta}_\infty.
\]
No sampling of the interior of the box is used.

\begin{lemma}[Sound rational box certificate]
\label{lem:auto-box}
If $q\le1/4$ and $Y\le r/8$, then $G$ has exactly one zero in $B$.
Its distance from $c$ is at most $Y/(1-q)\le r/6$.
The map $T(z)=z-RG(z)$ contracts distances by at most $q$ on $B$.
\end{lemma}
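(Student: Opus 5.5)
The argument is a Krawczyk/Newton–Kantorovich-type contraction for the simplified Newton map $T(z)=z-RG(z)$ on the box $B$, followed by Banach's theorem. The order is: contraction, self-map, then the translation from fixed points of $T$ to zeros of $G$. That translation uses the nonsingularity of $R$.

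\emph{Contraction.} For $z,w\in B$, the box is convex, so the mean value theorem in integral form gives
\[
 T(z)-T(w)=\int_0^1\bigl(I-RDG(w+\tau(z-w))\bigr)(z-w)\,d\tau .
\]
The integrand's matrix has induced max-norm at most $q$ by the enclosure hypothesis, since every point of the segment lies in $B$. Hence $\norm{T(z)-T(w)}_\infty\le q\norm{z-w}_\infty$. This proves the last assertion directly. It needs only $C^1$ on a neighborhood of $B$; no second derivative is used.

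\emph{Self-map.} For $z\in B$, write
\[
 T(z)-c=\bigl(T(z)-T(c)\bigr)+\bigl(T(c)-c\bigr).
\]
The first term has norm at most $q\norm{z-c}_\infty\le qr\le r/4$. The second term equals $-RG(c)$, of norm at most $Y\le r/8$. Hence $\norm{T(z)-c}_\infty\le 3r/8<r$, so $T(B)\subseteq B$. The box $B$ is closed in a complete space, so Banach's theorem gives a unique fixed point $z^*\in B$.

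\emph{Zeros of $G$.} Because $R$ is nonsingular, $T(z)=z$ holds iff $RG(z)=0$, iff $G(z)=0$. So zeros of $G$ in $B$ are exactly the fixed points of $T$, and there is exactly one. If $R$ were singular, uniqueness of the zero would survive (any zero of $G$ is a fixed point of $T$), but existence would fail; this is the only place nonsingularity enters.

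\emph{Distance estimate.} Since $z^*=T(z^*)$,
\[
 \norm{z^*-c}_\infty\le\norm{T(z^*)-T(c)}_\infty+\norm{T(c)-c}_\infty\le q\norm{z^*-c}_\infty+Y.
\]
Rearranging gives $\norm{z^*-c}_\infty\le Y/(1-q)\le (r/8)/(3/4)=r/6$.

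The step needing the most care is the first one. The bound $q$ is only a supremum of $\norm{I-RDG(z)}_\infty$ over $B$, and turning it into a Lipschitz bound requires applying the mean value inequality along segments inside the convex box, in the max norm where the inequality is not a pointwise equality. The integral form sidesteps this cleanly. Everything after that is routine Banach bookkeeping.
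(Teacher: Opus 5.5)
Your proof is correct and matches the paper's argument step for step: the mean-value contraction on the convex box, the self-map bound $qr+Y\le 3r/8$, Banach's theorem, nonsingularity of $R$ to identify fixed points of $T$ with zeros of $G$, and the a posteriori estimate $Y/(1-q)\le r/6$. One small aside: your counterfactual about a singular $R$ cannot arise, because $\norm{I-RDG(z)}_\infty\le q<1$ already forces $RDG(z)$, and hence $R$, to be invertible (the paper notes this right after the lemma).
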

\begin{proof}
The mean-value estimate gives the contraction bound. Also
$\norm{T(z)-c}_\infty\le qr+Y\le3r/8$, so $T$ maps the complete
closed box into itself. Banach's theorem gives a unique fixed point;
nonsingularity of $R$ identifies fixed points with zeros of $G$.
The fixed-point equation gives the stated a posteriori distance bound.
\end{proof}
An accepted test with $q<1$
already certifies nonsingularity of the preconditioner, so no separate
determinant check is charged.

\begin{lemma}[Local automatic generation]
\label{lem:auto-generation}
Suppose $A=DG(c)$ is invertible, $\norm{A^{-1}}_\infty\le K$, $K\ge1$,
and an effective interval evaluator has derivative variation bound
$Mr$ on every radius-$r$ box about $c$ with $r\le\rho$.
More precisely, it encloses $DG(c)$ to operator-norm error $\epsilon_J$
and the box variation by $Mr$, with polynomial bit cost in the input
length and requested precision. It encloses $G(c)$ to error
$\epsilon_F$ with the same cost property. Define
\[
 r_* =\min\{1,\rho,(64K\max\{1,M\})^{-1}\}.
\]
If $\norm{A^{-1}G(c)}_\infty\le r_*/128$, an automatic rational
search finds a certificate of Lemma~\ref{lem:auto-box}. It needs only
polynomially many trials in the input length, $\log K$, and
$\log(1/r_*)$, provided the evaluator's encoding overhead is polynomial
in these quantities. This assertion does not assume a zero in advance.
\end{lemma}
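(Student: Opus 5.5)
The plan is to construct the certificate of Lemma~\ref{lem:auto-box} explicitly. The preconditioner is a rational approximate inverse of an enclosed center Jacobian, and the radius and precision are found by a dyadic search. The key point is that the search does not need to know $K$, $M$, or $r_*$: it only needs to know that a trial with $r$ in $[r_*/2,r_*]$ and sufficiently small enclosure errors must pass. Since Lemma~\ref{lem:auto-box} is sound for any accepted certificate, any acceptance, whether early or at a different radius, produces a valid certificate. Existence of the zero is therefore never assumed; it is a conclusion of the accepted test.

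\textbf{Step 1 (preconditioner).} At precision level $j$, request the enclosure $\widehat J$ of $DG(c)$ with $\epsilon_J\le2^{-j}$. Once $2^{-j}\le1/(8K)$, the analogue of Proposition~\ref{prop:perturbation} in the maximum norm gives that $\widehat J$ is invertible with $\norm{\widehat J^{-1}}_\infty\le 8K/7$. Compute $\widehat J^{-1}$ by rational elimination and round it to a rational $R$ with $\norm{I-R\widehat J}_\infty\le1/16$. This keeps $\norm R_\infty\le2K$, uses polynomially many bits in $j$, and has encoding length polynomial in $\log K$.

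\textbf{Step 2 (contraction bound at a good radius).} For the box of radius $r\le r_*$, the evaluator's operator-norm enclosure of the variation is at most $Mr$. Hence
\[
 q\le\tfrac1{16}+\norm R_\infty(\epsilon_J+Mr)\le\tfrac1{16}+2K\epsilon_J+2KMr .
\]
Here $2KMr\le1/32$ by the definition of $r_*$, and $2K\epsilon_J\le1/8$ once $\epsilon_J\le1/(16K)$. So $q\le1/4$.

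The computed bound $\norm{I-R\widehat J}_\infty+\norm{|R|\Delta}_\infty$ must respect this estimate. I expect this to be the main technical obstacle: the entrywise enclosure must have $\norm{|R|\Delta}_\infty\le\norm R_\infty\norm{\Delta}_\infty$ with $\norm{\Delta}_\infty$ controlled by $\epsilon_J+Mr$. I would route the evaluator's contract through $\norm{\Delta}_\infty$, the row-sum norm of the radius matrix, so that no dimension factor is lost.

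\textbf{Step 3 (residual bound).} Bound
\[
 Y\le\norm{RA}_\infty\norm{A^{-1}G(c)}_\infty+\norm R_\infty\epsilon_F .
\]
Since $\norm{RA}_\infty\le\norm{R\widehat J}_\infty+\norm R_\infty\epsilon_J\le 17/16+1/8<2$, this gives $Y\le r_*/64+2K\epsilon_F$. With $\epsilon_F\le r_*/(64K)$ and $r\ge r_*/2$, this is at most $r/8$.

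\textbf{Step 4 (search and cost).} Enumerate the trial pairs $(k,j)$ in order of increasing $k+j$, with radius $r_k=2^{-k}$ and precision $2^{-j}$ requested for both $\epsilon_J$ and $\epsilon_F$, and run the rational test at each. By Steps 2 and 3, the pair with $k=\lceil\log_2(1/r_*)\rceil$ and
\[
 j=O\bigl(\log K+\log(1/r_*)\bigr)
\]
passes. Hence the search stops after $O\bigl((\log K+\log(1/r_*))^2\bigr)$ trials. Each trial costs polynomially in the input length and $j$, by the evaluator's cost contract and rational elimination with polynomially bounded fraction growth. This gives the claimed polynomial bound, with failed trials charged.
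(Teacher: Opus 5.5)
Your proposal is correct and follows essentially the paper's route: a preconditioner from inverting an enclosed center Jacobian with $\|R\|_\infty\le2K$ and $\|RA\|_\infty<2$, a dyadic radius in $(r_*/2,r_*]$ that leaves slack in both acceptance tests, and a dovetailed search over radii and precisions with quadratically many trials, failures included. The paper differs only in details: it takes the exact inverse $R=\widehat A^{-1}$ with tighter constants ($q\le1/16$, $Y\le r/16$), and at stage $j$ it tests the radii $2^{-k}$, $k\le j$. The step you flagged as the main obstacle is routine, since $\||R|\Delta\|_\infty\le\|R\|_\infty\|\Delta\|_\infty$ holds automatically ($|R|$ and $R$ have the same maximum row sum), and bounding $\|\Delta\|_\infty$ by $\epsilon_J+Mr$ is exactly the evaluator contract the paper also invokes.
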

\begin{proof}
Choose a dyadic $r\in[r_*/2,r_*]$, an approximation $\widehat A$ with
$\epsilon_J\le1/(64K)$, and set $R=\widehat A^{-1}$ using exact
rational elimination. The Neumann estimate gives $\norm R_\infty\le2K$
and $\norm{RA}_\infty\le2$. The center defect and box variation give
$q\le2K(\epsilon_J+Mr)\le1/16$.
With $\epsilon_F\le r_*/(128K)$, the residual enclosure satisfies
\[
 Y\le2\norm{A^{-1}G(c)}_\infty+2K\epsilon_F
 \le r_*/32\le r/16.
\]
There is therefore strict slack in both acceptance tests.
At stage $j$, test all radii $2^{-k}$, $0\le k\le j$, at precision
$j$, skipping singular approximate matrices. Revisit earlier radii as
precision increases. An adequate stage is polynomial in the stated
logarithmic quantities, and there are $O(j^2)$ trials up to that stage.
Exact rational elimination and the matrix tests have polynomial bit cost.
Blindly shrinking a box at a fixed center would not prove completeness:
it can eventually exclude the desired zero.
\end{proof}

\paragraph{Explicit enclosures for certified smooth activations.}
The enclosure layer consumes an activation interface rather than $\tanh$
identities. Call a scalar activation \emph{certified} if it supplies a
$C^1$ value $\varphi$ with derivative $\psi=\varphi'$, a Lipschitz
constant $M_2$ for $\psi$ on the encountered preactivation range, and a
rational evaluator computing $\varphi$ and $\psi$ at rational arguments
to accuracy $2^{-p}$ with polynomial bit cost on that range. For
rational data consider
\[
 F(x,u)=x-\varphi(Wx+vu+a),\qquad d=n+1,
 \qquad s_i=\sum_j|W_{ij}|+|v_i|,
\]
with certified activations acting componentwise; heterogeneous per-row
activations are permitted, and an affine phase row is the zero
activation. The variation of Jacobian row $i$ on a radius-$r$ box has
absolute row sum at most $M_{2,i}s_i^2r$. For a phase row with constant
derivative, a sharper preconditioned bound is
\[
 M_R=\max_k\sum_{i=1}^n |R_{ki}|M_{2,i}s_i^2,
 \qquad q\le q_{\rm center}+M_Rr+\text{enclosure error}.
\]
These bounds require polynomially many rational operations. Large
coefficients can make the resulting chart radius small; polynomial
encoding length alone does not bound the number of charts.
For $\tanh$, $|\tanh'|\le1$ and $|(1-\tanh^2)'|\le2$ give $M_2=2$ and
recover the constants $2s_i^2r$. The logistic sigmoid inherits
$M_2=1/4$ through the exact identity $\sigma(y)=(1+\tanh(y/2))/2$.
The exact Gaussian error
linear unit $\operatorname{gelu}(y)=y\Phi(y)$ has second derivative
$(2-y^2)$ times the normal density, and the elementary bounds
$|2-y^2|e^{-y^2/2}\le2$ and $\sqrt{2\pi}\ge2$ give $M_2=1$.
Their evaluators are supplied by
Proposition~\ref{prop:activation-evaluators} below. Softplus and other
smooth activations with bounded
second derivative satisfy the interface with their own constants and
tail certificates; activations with derivative kinks, such as ReLU,
fall outside this lemma and belong to the direct route outlined in
Section~\ref{sec:solver-inheritance}.
Certified tanh evaluation on arbitrary rational arguments can use
saturation when $|y|\ge p+2$, since
$|\tanh y-\operatorname{sign}y|\le2e^{-2|y|}$ and
$1-\tanh^2y\le4e^{-2|y|}$. On the remaining interval $|y|=O(p)$,
the exponential series with a certified remainder and $O(p)$ terms
(with a sufficiently large constant) yields polynomial bit complexity.
This extends the implemented bounded evaluator, whose certified bounds
cover $|y|\le4$. The center enclosures remain hypotheses discharged by
each activation's evaluator.

\begin{proposition}[Sigmoid and GELU evaluation]
\label{prop:activation-evaluators}
(i) For rational $u$ with $|u|\le8$ and precision $p$, the rational
values $\widehat\sigma_p(u)=(1+\widehat t_p(u/2))/2$ and
$\widehat\sigma'_p(u)=\widehat d_p(u/2)/4$ built from the evaluator of
Proposition~\ref{prop:concrete-tanh} satisfy
\[
 |\widehat\sigma_p(u)-\sigma(u)|\le2^{-(p+3)},\qquad
 |\widehat\sigma'_p(u)-\sigma'(u)|\le2^{-(p+3)},
\]
with encoded fraction-size and work envelopes equal to those of
Proposition~\ref{prop:concrete-tanh} up to fixed constants.
(ii) For rational $y$ and precision $p$, a rational $\widehat g_p(y)$
with $|\widehat g_p(y)-\operatorname{gelu}(y)|\le2^{-p}$ is computable
with bit cost polynomial in $p$ and the encoding length of $y$.
\end{proposition}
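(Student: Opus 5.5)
For part (i) the plan is to reduce everything to the tanh evaluator of Proposition~\ref{prop:concrete-tanh} through the exact identities $\sigma(u)=(1+\tanh(u/2))/2$ and $\sigma'(u)=(1-\tanh^2(u/2))/4$. For $|u|\le8$ the half-argument $u/2$ lies in $[-4,4]$, which is inside the certified range of the implemented evaluator. I will assume that this proposition supplies $\widehat t_p$ and $\widehat d_p$ with $|\widehat t_p(y)-\tanh y|\le2^{-(p+2)}$ and $|\widehat d_p(y)-(1-\tanh^2y)|\le2^{-(p+1)}$ on $|y|\le4$; if its stated accuracies differ, I will call it at precision $p+O(1)$.

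The sigmoid error is then exactly half the tanh error, giving $2^{-(p+3)}$. The derivative error is one quarter of the $\widehat d_p$ error, which is at most $2^{-(p+3)}$. Forming $u/2$ only doubles a denominator. The affine maps $t\mapsto(1+t)/2$ and $d\mapsto d/4$ add a constant number of bits to the numerator and denominator and take $O(1)$ rational operations. So the fraction-size and work envelopes equal those of the tanh evaluator up to fixed additive and multiplicative constants. The only real check here is that the tanh proposition's accuracy constants match the stated $2^{-(p+3)}$.

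For part (ii) I will write $\operatorname{gelu}(y)=y\Phi(y)$ with $\Phi(y)=\tfrac12(1+\operatorname{erf}(y/\sqrt2))$ and split into a tail case and a central case.
\begin{itemize}
\item \textbf{Tail case.} If $|y|\ge Y_p:=\lceil\sqrt{2(p+2)}\,\rceil+1$, output $\widehat g_p(y)=\max\{y,0\}$. The Mills bound $1-\Phi(|y|)\le\varphi(y)/|y|$ gives $|\operatorname{gelu}(y)-y_+|\le|y|(1-\Phi(|y|))\le e^{-y^2/2}/\sqrt{2\pi}\le2^{-p}$. The comparison $|y|\ge Y_p$ is a rational test of polynomial cost.
\item \textbf{Central case.} Here $|y|=O(\sqrt p)$. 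Use the alternating-free series $\operatorname{erf}(x)=\tfrac{2}{\sqrt\pi}e^{-x^2}\sum_{k\ge0}\frac{2^kx^{2k+1}}{1\cdot3\cdots(2k+1)}$, or equivalently the Taylor series of $\operatorname{erf}$ with a Lagrange remainder bound. With $x^2\le p+3$, truncating after $N=O(p)$ terms gives remainder below $2^{-(p+4+\log_2|y|)}$, by comparison with a geometric tail once $k\ge ex^2$.
\end{itemize}

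In the central case, $\sqrt\pi$, $\sqrt2$ and $e^{-x^2}$ are replaced by rational enclosures of precision $p+O(\log p)$. For the square roots I use Newton's method with a certified interval. For $e^{-x^2}$ I use its series with a certified remainder, or the same exponential routine used for tanh. An error-propagation step then bounds the product: multiplying by $y$ amplifies errors by $|y|=O(\sqrt p)$, which costs $O(\log p)$ guard bits, and each factor has a bounded magnitude. Bit sizes stay polynomial because all intermediate numbers are truncated to $p+O(\log p)$ fraction bits, and there are $O(p)$ terms, each with polynomially sized numerators.

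The main obstacle is the central-case bookkeeping. The truncation index must be shown to be polynomial uniformly in $|y|\le Y_p$, and the rounding of the irrational constants must be propagated cleanly. I will first settle the remainder bound for the series (terms decay geometrically with ratio at most $1/2$ once $2k+3\ge4x^2$), and then allocate the error budget $2^{-p}/4$ to each of the truncation, the constants, the exponential, and the final rounding.
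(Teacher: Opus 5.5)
Part (i) matches the paper's argument. The accuracies of Proposition~\ref{prop:concrete-tanh} are exactly the ones you assume, so no precision shift is needed. Halving the value error and quartering the $\widehat d_p$ error via $\sigma'(u)=(1-\tanh^2(u/2))/4$ gives $2^{-(p+3)}$. The doubled denominator of $u/2$ and the two affine maps change only constants. Part (ii) also follows the paper's route: saturate to $y_+$ using the Gaussian tail (Mills) bound, then use a truncated series on $|y|=O(\sqrt{p})$. Your integer threshold $Y_p$ is a valid rational test, slightly cleaner than the paper's $y^2\ge2(p+2)\ln2$ because it avoids $\ln2$.

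\textbf{The gap: your central-case bookkeeping fails for the series you list first.} In $\operatorname{erf}(x)=\tfrac{2}{\sqrt\pi}e^{-x^2}S(x)$ with $S(x)=\sum_k2^kx^{2k+1}/(2k+1)!!$, the factor $S$ is not bounded, because $S(x)=\tfrac{\sqrt\pi}{2}e^{x^2}\operatorname{erf}(x)$. Your central range is $x^2=y^2/2<Y_p^2/2\le p+4+2\sqrt{2(p+2)}$, not $x^2\le p+3$. Near its top $S$ is about $e^{p}=2^{(\log_2e)p}$. An absolute enclosure of $e^{-x^2}$ to $p+O(\log p)$ bits then contributes an error of order $e^{x^2}2^{-p-O(\log p)}$. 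That is roughly $2^{0.44p}$ up to polynomial factors, so it is unbounded. The claim that ``each factor has a bounded magnitude'' is false for this representation.

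\textbf{Repair, staying polynomial:} enclose $e^{-x^2}$ to relative precision $p+O(1)$. Equivalently, use absolute precision $p+\lceil x^2\log_2e\rceil+O(1)=O(p)$ bits.

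\textbf{Simpler repair:} use your Taylor alternative written in $y$ rather than $x=y/\sqrt2$,
$\int_0^ye^{-t^2/2}\,dt=\sum_k(-1)^ky^{2k+1}/(2^kk!(2k+1))$, which is exactly the paper's series. Its terms are rational in $y$, so you need no enclosure of $\sqrt2$ or of an exponential, only of $1/\sqrt{2\pi}$. The partial sum can be formed exactly in unreduced fractions as in Proposition~\ref{prop:concrete-tanh}. One caution applies if you instead round terms to fixed point: individual terms reach about $e^{y^2/2}$. The ratio recurrence amplifies early rounding errors by up to that factor. So compute each term directly, or budget $O(p)$ guard bits, rather than relying on $p+O(\log p)$ fraction bits throughout.
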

\begin{proof}
(i) The identity $\sigma(u)=(1+\tanh(u/2))/2$ halves the tanh value
error and quarters the derivative error; $|u|\le8$ places $u/2$ in the
certified box, and the transform costs one addition, one constant
inversion, and one multiplication beyond the bounded tanh evaluation.
(ii) Write $\operatorname{gelu}(y)=y\Phi(y)$ with
$\Phi(y)=\tfrac12+\tfrac1{\sqrt{2\pi}}\int_0^ye^{-t^2/2}\,dt$.
If $y^2\ge2(p+2)\ln2$ and $|y|\ge1$, the tail bound
$\int_{|y|}^\infty e^{-t^2/2}\,dt\le e^{-y^2/2}/|y|$ together with
$\sqrt{2\pi}\ge2$ certifies saturation: $\operatorname{gelu}(y)$
differs from $y$ (for $y>0$) or from $0$ (for $y<0$) by at most
$2^{-(p+2)}$, so the saturated value is acceptable and only
$|y|=O(\sqrt{p+1}\,)$ remains. There, termwise integration of the
exponential series gives the alternating series
\[
 \int_0^ye^{-t^2/2}\,dt=\sum_{k\ge0}
 \frac{(-1)^k\,y^{2k+1}}{2^k\,k!\,(2k+1)},
\]
whose term magnitudes halve once $2(k+1)\ge y^2$; the remainder after
$O(p)$ terms (with a fixed constant) is therefore below the first
omitted term and at most $2^{-(p+4)}$, by the factorial-decay argument
of Proposition~\ref{prop:concrete-tanh}. A rational enclosure of
$1/\sqrt{2\pi}$ to $p+O(1)$ bits follows from a certified rational
enclosure of $\pi$ and integer square-root extraction, both classical
and polynomial. Multiplying the enclosures and by $y$, and summing the
error contributions weighted by $|y|=O(\sqrt{p+1}\,)$, gives accuracy
$2^{-p}$ after fixed constant adjustments. All expression heights are
polynomial in $p$ and the input length, so the integer-fraction
envelope of Proposition~\ref{prop:concrete-tanh} applies verbatim.
\end{proof}

\begin{lemma}[A finite choice of phase coordinates]
\label{lem:auto-phase}
Let $A=DF(z)\in\mathbb R^{n\times(n+1)}$ have full row rank and smallest positive
singular value at least $\sigma>0$. At least one of the $d=n+1$ borders
$B_j=(A;e_j^T)$ is invertible and satisfies
\[
 \norm{B_j^{-1}}_\infty\le3d\max\{1,\sigma^{-1}\}.
\]
\end{lemma}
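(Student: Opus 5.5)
The plan is to write the inverse of each border explicitly, as in the decomposition $\mathcal B^{-1}(b,a)=Rb+at$ from Lemma~\ref{lem:bordered-margin}, and then select $j$ so that the coordinate functional $e_j^T$ is not too small on the kernel direction. Let $R=A^\dagger$ be the normal right inverse, so that $AR=I$ and $\norm{R}_2\le\sigma^{-1}$. Because $A$ has full row rank, $\ker A$ is one-dimensional. Let $t$ be a Euclidean unit vector spanning it.

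\emph{Choice of $j$.} Take $j$ to maximize $|t_j|$. Since $\norm t_2=1$ in $\R^d$, this gives $|t_j|=\norm t_\infty\ge d^{-1/2}$, and also $\norm t_\infty\le1$. Invertibility of $B_j$ follows from injectivity. If $B_jv=0$, then $Av=0$, so $v=\alpha t$. The border row then gives $\alpha t_j=0$, hence $\alpha=0$. Since $B_j$ is square of size $d$, it is invertible.

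\emph{Explicit inverse.} For a right-hand side $(b,a)\in\R^n\times\R$, set
\[
 v=Rb+\alpha t,\qquad \alpha=\frac{a-(Rb)_j}{t_j}.
\]
Then $Av=ARb=b$ and $v_j=(Rb)_j+\alpha t_j=a$, so $v=B_j^{-1}(b,a)$.

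\emph{Norm conversion.} This is the only step needing care, because the singular value bound is Euclidean while the claim is in the maximum norm. The conversions are
\[
 \norm{Rb}_\infty\le\norm{Rb}_2\le\sigma^{-1}\norm b_2\le\sigma^{-1}\sqrt n\,\norm b_\infty .
\]
Write $K=\max\{1,\sigma^{-1}\}$ and $m=\max\{\norm b_\infty,|a|\}$, which is the maximum norm of $(b,a)$. The coefficient satisfies
\[
 |\alpha|\le\sqrt d\bigl(|a|+\sigma^{-1}\sqrt n\,\norm b_\infty\bigr).
\]
Combining this with $\norm t_\infty\le1$ gives
\[
 \norm v_\infty\le K\sqrt n\,m+\sqrt d\,m+K\sqrt{dn}\,m .
\]
Each of $\sqrt n$, $\sqrt d$, $\sqrt{dn}$ is at most $d$, because $n<d$. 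Using $K\ge1$, the total is at most $3dK\,m$. This yields $\norm{B_j^{-1}}_\infty\le3d\max\{1,\sigma^{-1}\}$, as claimed.

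The argument is constructive: $j$ is the index of the largest kernel coordinate. In practice it can also be found by testing all $d$ borders, which is consistent with the finite search used in Lemma~\ref{lem:auto-generation}.
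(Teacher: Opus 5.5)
Your proof is correct and takes essentially the same route as the paper: pick $j$ with $|t_j|\ge d^{-1/2}$, write $B_j^{-1}(b,a)=A^\dagger b+t\,(a-e_j^TA^\dagger b)/t_j$, and convert the Euclidean bound $\norm{A^\dagger}\le\sigma^{-1}$ to the maximum norm, giving $\sigma^{-1}\sqrt n+\sqrt d\,(1+\sigma^{-1}\sqrt n)\le3d\max\{1,\sigma^{-1}\}$. Your injectivity argument for the invertibility of $B_j$ is a small step the paper leaves implicit.
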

\begin{proof}
A Euclidean unit vector $t$ spanning $\ker A$ has a component
$|t_j|\ge d^{-1/2}$. The solution of $Av=b$, $v_j=s$ is
$v=A^\dagger b+t(s-e_j^TA^\dagger b)/t_j$.
For $\norm{(b,s)}_\infty\le1$, its norm is at most
$\sigma^{-1}\sqrt n+\sqrt d(1+\sigma^{-1}\sqrt n)$,
which is bounded by the displayed quantity.
\end{proof}
Trying all coordinate phases costs $d$ trials per search level, rather
than a grid of directions. A simple fold of the input projection does
not obstruct this test when $DF$ retains full row rank.

\paragraph{Validating a whole chart family.}
For a chosen signed coordinate functional $\lambda=\pm e_j^T$, use
\[
 G_s(z)=(F(z),\lambda(z-c)-s).
\]
Its derivative is independent of $s$. If the same box test gives
$q\le1/4$ and $Y_0\le r/16$, then it validates every $|s|\le h$ with
\[
 h=\frac{r}{16\max\{1,\norm{Re_d}_\infty\}}.
\]
Indeed $Y_s\le Y_0+|s|\norm{Re_d}_\infty\le r/8$.
The unique roots $z(s)$ form a continuously differentiable curve by the
implicit-function theorem, and
\[
 \norm{z'(s)}_\infty\le(1-q)^{-1}\norm{Re_d}_\infty,
 \qquad \lambda z'(s)=1.
\]
Thus the certificate describes a connected arc, not just isolated zeros.
Derivative enclosures also certify the sign of a previous phase applied
to $z'(s)$ at an overlap, fixing the orientation after a coordinate switch.
A quantitative arc bound
$(1-q)\norm{z(s)-z(s')}_\infty\le|s-s'|\,\norm{Re_d}_\infty$ also
follows from a contraction argument in place of the differentiated
identity.

\begin{theorem}[Automatic assembly of a selected continuation path]
\label{thm:auto-assembly}
Let a regular oriented zero-curve segment of the rational DEQ with
certified activations, $\tanh$ in particular, start
at a supplied certified seed. Suppose the following quantitative promises
hold; the segment itself and a chart list are not algorithmic inputs.
\begin{enumerate}
\item The segment has Euclidean length at most $L$ and lies in a
neighborhood with radius $\rho$, full-row-rank margin $\sigma$, and
effective derivative enclosure constant $M$. Coordinates, rational
data, and evaluator queries have polynomial encoding length.
\item A sound terminal monitor identifies the requested crossing of
$u=u_*$ on this oriented branch. It has polynomial cost and a positive
detection margin $\tau$: forward arcs of length at most $\tau$ cannot
skip the selected terminal neighborhood without a certified bracket.
Inside that bracket the crossing is unique and $|du/d\ell|\ge\beta>0$.
The monitor can refine enclosures to settle its decisions with polynomial
precision. These are explicit terminal-selection promises, not consequences
of local regularity.
\item $L,\rho^{-1},\sigma^{-1},M,\tau^{-1},\beta^{-1}$ and the
complete primitive budgets are polynomially bounded in the instance
length. The seed enclosure is refinable with polynomial cost.
\end{enumerate}
There is a rational adaptive algorithm that generates local box
certificates, assembles overlapping oriented charts automatically, and
returns the selected equilibrium to accuracy $2^{-b}$ with polynomial
bit cost. No global contraction of the DEQ state update is required.
With finite fuel it returns either certified success or inconclusive.
The original inference backend can retain priority and its zero-repair
trace on every instance already covered by its certificate.
\end{theorem}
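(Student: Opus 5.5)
The plan is to build the algorithm as a loop that generates charts one at a time. Each chart is handed to the encoded fold tracker of Theorem~\ref{thm:encoded-fold-tracker} as it is produced, and the budgeted wrapper of Theorem~\ref{thm:budgeted-certification} supplies the clock and the priority dispatch to the original backend.

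\emph{Generating one chart.} Start from the current certified enclosure $c_i$ of a point $z_i$ on the branch, with the current orientation. First refine $c_i$ to accuracy well below $r_*/128$ in the sense of Lemma~\ref{lem:auto-generation}; the seed and inductively passed enclosures are refinable at polynomial cost. Next compute rational approximations of $DF(c_i)$. Then select a signed coordinate phase $\lambda=\pm e_j^T$ by trying all $d$ borders, which Lemma~\ref{lem:auto-phase} guarantees contains one with inverse norm at most $3d\max\{1,\sigma^{-1}\}=:K$. Fix the sign by requiring that $\lambda$ applied to an enclosed tangent agrees in sign with the previous phase applied to the same tangent, using the derivative-enclosure orientation test. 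The derivative bound comes from the explicit activation enclosure paragraph, with $M$ built from $M_{2,i}s_i^2$.

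Then run the search of Lemma~\ref{lem:auto-generation} on $G_0(z)=(F(z),\lambda(z-c_i))$. With the strict slack from that proof, it returns a box of radius $r\ge r_*/2$ with $q\le1/16$ and $Y_0\le r/16$ after polynomially many trials. The chart-family paragraph then validates the whole arc $|s|\le h$, where $h=r/(16\max\{1,\norm{Re_d}_\infty\})$ and $\norm{Re_d}_\infty\le 2K$. Along this arc we have $\lambda z'=1$ and an arc-length bound. Take the next center at phase $s=h/2$ and correct it by the contraction $T$ of Lemma~\ref{lem:auto-box}. Inside the certified box this correction satisfies the $\tfrac14$-contraction chart contract of Theorem~\ref{thm:encoded-fold-tracker}. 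Overlap holds because consecutive roots lie in a common box.

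\emph{Counting charts.} Each chart advances the phase by $h/2$. Since $\norm{z'}_\infty\le(1-q)^{-1}\norm{Re_d}_\infty$ and $\lambda z'=1$, the Euclidean arclength advanced is at least $h/2\ge r_*/(64K)$ up to a constant. The advance is also at most $\tau$, after shrinking $r$ by a constant factor if needed. The length promise $L$ then bounds the number of nonterminal charts by $O(1+LK/r_*)$, which is polynomial because $K$, $M$, $\rho^{-1}$ and $\sigma^{-1}$ are polynomial.

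\emph{Terminating.} After each chart, query the terminal monitor. Since arcs are no longer than $\tau$, the selected neighbourhood cannot be skipped: the monitor eventually returns a certified bracket. Inside the bracket, replace the phase row by $e_u^T$ with target $u_*$. The bound $|du/d\ell|\ge\beta$ and the inverse estimate from the proof of Theorem~\ref{thm:fold-extension} give a polynomial bordered inverse, so Lemma~\ref{lem:auto-generation} again yields a box certificate for the terminal equation. Run $m_N=b+\ell+2$ corrections with $\delta\le2^{-(b+2)}$ to get accuracy $2^{-b}$.

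\emph{Assembling the bounds.} The soundness and work statements then follow by applying Theorem~\ref{thm:encoded-fold-tracker} to the generated list with $N$ polynomial, $m$ polynomial and per-step cost $C$ polynomial. Finite fuel and original-route priority come from Theorem~\ref{thm:budgeted-certification}.

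\emph{Main obstacle.} The hardest step is orientation and branch fidelity across coordinate switches. I must show that the validated arc in the new chart is the same oriented branch, not a nearby disconnected component and not reversed. Uniqueness of the zero in each box, together with overlap containment, handles branch identity. For orientation, my first attempt is to enclose $z'(s)$ from the validated chart derivative. I would then show that the sign of the old phase applied to it is certified with margin, using $\lambda_{\rm old}z'\ge$ const$/K$, which follows from the tangent-variation bound $\norm{z''}\le M/\sigma$ over one short arc. A secondary worry is making the refinement of $c_i$ stay polynomial without the precision growing across charts. I would handle this by recentering at a fixed precision independent of $i$.
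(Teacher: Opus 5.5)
Your proposal is correct and follows essentially the same route as the paper's proof. The shared steps are: coordinate-border selection via Lemma~\ref{lem:auto-phase} at a refined center, box certificates and parametric chart families from Lemma~\ref{lem:auto-generation}, shared-root uniqueness for branch identity, a certified sign of the old phase on the new tangent for orientation, a phase-advance lower bound on arclength, and a transverse terminal box with rounded corrections inside the budgeted wrapper. Running the charts through Theorem~\ref{thm:encoded-fold-tracker} and taking the sign margin from the curvature bound are only minor variants of the paper's rounded corrections and its bordered-inverse margin $1/(4\sqrt d K_0)$.

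One slip to fix: making each advance have arclength at most $\tau$ generally requires shrinking $h$ by a factor of order $\tau/(\sqrt d K)$, not by a constant. This factor is still polynomial, so it only enlarges your chart count polynomially.
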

\begin{proof}
Set $K_0=3d\max\{1,\sigma^{-1}\}$.
At each certified root, try the
$d$ coordinate borders; Lemma~\ref{lem:auto-phase} provides one with
inverse bound $K_0$. Shrink a common dyadic radius inside the promised
neighborhood so that $K_0Mr$ is smaller than a fixed constant, for
example $1/256$. Refining the current root to a rational center costs
only $O(\log(1/r)+\log K_0)$ extra bits. Perturbation then preserves
a border bound $2K_0$, and further refinement makes its Newton
displacement as small as required by Lemma~\ref{lem:auto-generation}.
The local search therefore finds a certificate with strict slack.
Use the same tests with $Y_0\le r/16$ to construct the parametric family.

At a transition retain the endpoint enclosure of the previous family.
Make it strictly contained in the next box. This enclosure contains an
actual zero $p$ on the previously selected arc. Its phase value
$s_p=\lambda(p-c)$ is enclosed rationally. Arrange
$|s_p|\le h/8$ by refining that enclosure and the center.
The next family's uniqueness identifies $p=z(s_p)$; it cannot select
an unrelated zero in the box. Orient the new coordinate using a
certified sign of the old phase applied to the new tangent, reversing
the sign when necessary. This sign is nonzero at the shared root because
both phases parametrize its one-dimensional tangent. Its quantitative
margin follows from the bounds on the two bordered inverses: the old
unit-tangent component is at least $1/(4\sqrt d K_0)$, allowing
for variation inside the previous box.
Refinement therefore decides the sign with polynomial precision.

Advance to phase $s=h/2$. From $|s_p|\le h/8$, the forward coordinate
displacement is at least $3h/8$, so the traversed arclength is at least
$3h/8$. It is at most a fixed multiple of $\sqrt d K_0h$ by the
derivative bound for the family. Cap $h$ further to make this upper
bound at most $\tau$. Uniform neighborhood and conditioning promises
permit a common inverse-polynomial lower bound $h_{\min}>0$ for
these accepted advances. The interval certificates have strict slack;
dyadic radius and precision searches reach that scale after polynomially
many trials, including failures. Thus before terminal detection there
are at most $1+\lceil8L/(3h_{\min})\rceil$ advances. Shared-root
uniqueness and the oriented phase changes prove inductively that every
advance lies on the branch chosen by the seed and orientation.

Apply the terminal monitor to the validated arcs. Its detection contract
gives a bracket for the selected crossing before an advance can skip it.
The transverse terminal equation $(F(z),u-u_*)=0$ has an inverse bounded
polynomially in $\sigma^{-1}$ and $\beta^{-1}$, by the tangent/normal
decomposition used in Lemma~\ref{lem:auto-phase}. Refine the bracket
into a validated terminal box, then use rounded contraction corrections
to reach $2^{-b}$ accuracy. Internal radii and progress bounds are
independent of $b$; terminal correction counts and precision grow
polynomially in $b$. Rational rounding to the certified error budget
prevents unbounded denominator growth. Sum matrix, evaluation, search,
overlap, monitoring, correction, and output costs to obtain the bit bound.
The budgeted dispatcher charges failed attempts as well, and preserving
the old backend first gives zero-repair inclusion as before.
\end{proof}

\paragraph{What is automatic, and what remains supplied.}
The theorem constructs the chart list, local root enclosures, phase
changes, and overlap witnesses. It does not infer a global length bound
or terminal-selection promise from arbitrary weights. A supplied rational
terminal region with certified entry margins and a transverse unique
crossing is one way to implement the monitor; automatic discovery of
such a region is a separate problem. Without a positive progress scale,
even inexpensive local certificates may require exponentially many
charts. Full rank loss of $DF$, an uncertified seed, or an ambiguous
terminal event is outside the completeness promise.

\section{A concrete rational backend}
\label{sec:concrete-backend}
The numerical contracts can be instantiated constructively for a bounded
activation evaluator and a scalar recurrent DEQ. This example separates
the cost of representing numerical data from the number of iterations.
It does not assume an exact transcendental evaluation oracle.

\begin{proposition}[Quantitative rational activation evaluation]
\label{prop:concrete-tanh}
For rational $z$ with $|z|\le4$ and requested precision $p\in\mathbb N$,
the Taylor order $n=p+132$ makes the exponential enclosure at $2|z|$
acceptable. Transform its endpoints through $(e-1)/(e+1)$, take their
midpoint, and restore the sign of $z$. The resulting rational value
$\widehat t_p(z)$ and derivative $\widehat d_p(z)=1-\widehat t_p(z)^2$
satisfy
\[
 |\widehat t_p(z)-\tanh z|\le2^{-(p+2)},\qquad
 |\widehat d_p(z)-(1-\tanh^2z)|\le2^{-(p+1)},\qquad
 |\widehat t_p(z)|\le1,\quad 0\le\widehat d_p(z)\le1.
\]
For an input fraction whose numerator and denominator have magnitude at
most $2^S$, an unreduced integer-fraction implementation returns a value
fraction with both magnitudes at most $2^{H_p(S)}$, where
\[
 H_p(S)=C_{\rm enc}(p+1)^3(S+1).
\]
Its binary-arithmetic work envelope is at most
$C_{\rm eval}(H_p(S)+1)^3$. The positive constants $C_{\rm enc}$ and
$C_{\rm eval}$ depend only on the fixed activation box and arithmetic
model, and are independent of $S$ and $p$.
\end{proposition}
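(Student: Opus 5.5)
The plan is to split the argument into three pieces: an analytic error bound for the exponential enclosure, an error-propagation bound through the Möbius map, and an encoding count for the unreduced fractions.

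\emph{Exponential enclosure.} Put $w=2|z|\le8$. Let $P_n(w)=\sum_{k\le n}w^k/k!$. Use the enclosure $[P_n(w),\,P_n(w)+R_n]$, with the Lagrange tail bound
\[
 R_n=\frac{w^{n+1}e^{w}}{(n+1)!}\le\frac{8^{n+1}3^8}{(n+1)!}.
\]
This enclosure contains $e^{w}$. Once $n\ge 2\cdot 8$, each further factor $8/(k+1)$ is at most $1/2$. I would bound $8^{17}/17!$ by an explicit constant and absorb $3^8$, so that
\[
 R_n\le C\,2^{-(n-16)}.
\]
With $n=p+132$ this gives $R_n\le2^{-(p+100)}$ or better. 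The offset $132$ is chosen generously so that this verification is routine; I would only need some margin, for example $R_n\le2^{-(p+6)}$.

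\emph{Propagation through $g(e)=(e-1)/(e+1)$.} On $e\ge1$ we have $g'(e)=2/(e+1)^2\le1/2$. Both endpoints lie in $[1,\infty)$, since $P_n(w)\ge1$. Their images therefore enclose $\tanh|z|$, and the enclosure has width at most $R_n/2$. The midpoint is then within $R_n/4\le2^{-(p+2)}$ of $\tanh|z|$. Restoring the sign is exact because $\tanh$ is odd.

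Both images lie in $[0,1)$, so $|\widehat t_p|\le1$. Consequently $\widehat d_p=1-\widehat t_p^2\in[0,1]$. For the derivative error,
\[
 |\widehat t_p^2-\tanh^2z|=|\widehat t_p-\tanh z|\,|\widehat t_p+\tanh z|\le2\cdot2^{-(p+2)}=2^{-(p+1)}.
\]

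\emph{Encoding and work bounds (the main obstacle).} Write $z=a/c$ with $|a|,|c|\le2^S$. Then $w^k/k!$ has numerator at most $2^{(S+3)k}$ and denominator at most $c^k k!$. Without reduction, summing over a common denominator $c^n n!$ gives a numerator and denominator of magnitude $2^{O(n(S+\log n))}$. The tail $R_n$ is a fixed rational with $O(n\log n)$ bits; I would use $3^8$ in place of $e^8$.

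The Möbius step and the midpoint add a constant number of unreduced multiplications and additions. Each at most doubles the exponents plus a constant. The sign restoration and the squaring in $\widehat d_p$ do the same.

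This yields $\log_2$ magnitudes of order $O(n^2(S+1))$ with $n=O(p)$, which is dominated by $C_{\rm enc}(p+1)^3(S+1)$. I would choose the cubic as a loose envelope so that the crude $n\log n\le n^2$ step and the constant doublings fit without delicate bookkeeping.

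The work bound follows by counting $O(n)$ integer multiplications and additions on operands of at most $H_p(S)$ bits. Schoolbook multiplication costs $O(H^2)$ per operation. Since $n\le H_p(S)$, the total is $O(nH^2)\le C_{\rm eval}(H+1)^3$.

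The delicate point is making every constant independent of $S$ and $p$. I would handle it by proving separate bounds for the Horner/summation stage and for the post-processing stage, and then taking maxima.
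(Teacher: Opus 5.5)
Your proof is correct. Its analytic half follows the paper's outline: a Taylor tail at $w=2|z|\le 8$, then Lipschitz transfer through $e\mapsto(e-1)/(e+1)$, then the factor-two amplification under squaring. Only the constants differ, harmlessly.
\begin{itemize}
\item The paper bounds the tail by a geometric series after index $32$, using $8^{32}/32!\le2^{96}$ and term ratios at most $1/2$, to get radius $2^{-(p+3)}$. It then uses the crude Lipschitz constant $2$ on $[0,\infty)$, which needs only positivity of the lower endpoint.
\item You use the Lagrange remainder with $e^8<3^8$ and a one-sided enclosure. You also use the sharper constant $1/2$ on $[1,\infty)$, available because $P_n(w)\ge1$. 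This leaves far more slack than required.
\end{itemize}

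The substantive difference is the encoding count. The paper uses a compositional height calculus valid for any straight-line unreduced expression: multiplication adds size envelopes, addition adds them plus one bit, and an expression of height $h$ costs $O((h+1)^3)$ work. Summing the Taylor terms one by one, the $j$th term has height $O(j(S+1)+(j+1)^2)$. The sum then has height $O(n^2(S+1)+n^3)$, and this is exactly where the cubic $(p+1)^3$ in $H_p(S)$ comes from.

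You instead fix a Horner, common-denominator ($c^n n!$) evaluation order. That gives size $O(n(S+\log n))$ and makes the cubic pure slack. This is legitimate, since the statement only needs some unreduced implementation. But your common-denominator claim is a property of that evaluation order. Generic addition $(ad+bc)/(bd)$ applied term by term makes the denominators multiply to $\prod_k c^k k!$, whose size is quadratic in $n$. So your count does not by itself cover that implementation; the paper's coarser height bound does, and it still fits the envelope.

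What each route buys: the paper's is independent of evaluation order and matches its formal envelope model. Yours gives a sharper size and work bound, $O(nH^2)$ rather than a generic cubic in the height, and shows plainly that the stated exponent is not tight.
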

\begin{proof}
For $|v|\le8$, the term $|v|^{32}/32!$ is at most $2^{96}$.
Every subsequent term is at most half the preceding one. Thus the
exponential remainder radius at order $p+132$ is at most $2^{-(p+3)}$.
At the nonnegative argument $2|z|$, the Taylor sum is at least one, so
the lower exponential endpoint is positive. The map $(e-1)/(e+1)$
has Lipschitz constant at most two on $[0,\infty)$; the midpoint error
is consequently at most twice the exponential radius. Squaring values
in $[-1,1]$ amplifies their error by at most two.

The encoding implementation uses signed integer fractions without gcd
normalization. Multiplication adds the numerator and denominator size
envelopes; addition adds them and charges one further bit. A straight-line
expression of height $h$ has fraction size envelope at most $h$ and
work envelope $O((h+1)^3)$. The expression for the $j$th Taylor term
has height $O(j(S+1)+(j+1)^2)$: powers contribute linearly in $jS$,
and factorial products contribute quadratically in $j$. Summing $n$
terms gives height $O(n^2(S+1)+n^3)$; the endpoint transforms change
this by a constant factor. Since $n=p+132=O(p+1)$, fixed constants
give the stated size and work bounds.
\end{proof}

\begin{theorem}[A finite encoded noncontractive DEQ solver]
\label{thm:concrete-deq}
For rational $u$ with $|u|\le1$, the equilibrium equation
\[
 x=\tanh(-2x+u)
\]
has a unique root $q\in[-1,1]$. Given $b\in\mathbb N$, set $p=b+4$,
$x_0=0$, and compute $b+2$ updates
\[
 x_{k+1}=\operatorname{Round}_p\!\left(
       \frac{x_k+\widehat t_p(-2x_k+u)}2\right),\qquad
 \operatorname{Round}_p(y)=2^{-p}\lfloor2^py\rfloor.
\]
Every stored state belongs to $[-1,1]$ and has a dyadic representation
with numerator magnitude at most $2^p$ and denominator $2^p$.
The final state satisfies $|x_{b+2}-q|\le2^{-b}$.
If the input has fraction size envelope $S$, there is a constant
$C_{\rm run}>0$, independent of $S$, $b$, and $u$, such that the
encoded run's work envelope satisfies
\[
 W(S,b)\le C_{\rm run}(b+1)(S+b+1)^{12}.
\]
In particular, $W(S,b)=O((S+b+1)^{13})$. These are conservative
upper bounds for this implementation, not optimality claims.
\end{theorem}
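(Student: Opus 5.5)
The plan is to treat the update as an inexact contraction and close with a per-step application of Proposition~\ref{prop:concrete-tanh}.

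\textbf{Existence and uniqueness.} Set $g(x)=x-\tanh(-2x+u)$. Its derivative is $g'(x)=1+2(1-\tanh^2(-2x+u))\ge1$, so $g$ is strictly increasing. Every root satisfies $|x|=|\tanh(\cdot)|<1$. Moreover $g(-1)<0<g(1)$, because $|\tanh|<1$. The intermediate value theorem then gives exactly one root $q$, and $q\in(-1,1)$.

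\textbf{Contraction.} I would not iterate the raw layer map, which has slope up to $2$ and is noncontractive. Instead I use the averaged map $T(x)=\tfrac12(x+\tanh(-2x+u))$, which has the same fixed points. Its derivative is $T'(x)=\tfrac12\bigl(1-2\operatorname{sech}^2(-2x+u)\bigr)\in[-\tfrac12,\tfrac12]$, so $T$ is a $\tfrac12$-contraction on $\R$.

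\textbf{Inexact recurrence.} The computed update is $x_{k+1}=T(x_k)+\epsilon_k$. The argument of the evaluator is $-2x_k+u$, and $|-2x_k+u|\le3\le4$ because $|x_k|\le1$ (shown below), so Proposition~\ref{prop:concrete-tanh} applies. Halving its tanh error gives a contribution of at most $2^{-(p+3)}$, and the floor rounding contributes less than $2^{-p}$. Hence $|\epsilon_k|\le\nu:=\tfrac98 2^{-p}$, and the errors $e_k=|x_k-q|$ obey $e_{k+1}\le e_k/2+\nu$. With $e_0=|q|\le1$, induction gives $e_k\le2^{-k}+2\nu$. At $k=b+2$ and $p=b+4$ this yields
\[
e_{b+2}\le2^{-(b+2)}+\tfrac94 2^{-(b+4)}<2^{-b}.
\]

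\textbf{Stored states stay in $[-1,1]$.} Since $|x_k|\le1$ and $|\widehat t_p|\le1$, the average lies in $[-1,1]$. The floor of a number in $[-1,1]$ at dyadic grid $2^{-p}$ stays in $[-1,1]$, because $\pm1$ are grid points. Each state is therefore $m/2^p$ with $|m|\le2^p$.

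\textbf{Bit cost.} The evaluator's argument $-2m/2^p+u$ is an integer fraction of size envelope $S'=S+p+O(1)$. Proposition~\ref{prop:concrete-tanh} gives output size $H_p(S')=C_{\rm enc}(p+1)^3(S+p+O(1))$ and work $C_{\rm eval}(H_p+1)^3$. Substituting $p=b+4$, this is $O\bigl((b+1)^9(S+b+1)^3\bigr)\le O\bigl((S+b+1)^{12}\bigr)$. The remaining per-step operations are the averaging, the multiplication by $2^p$, and the integer floor division of a numerator of size $O(H_p)$ by its denominator. With schoolbook arithmetic these cost $O(H_p^2)$, which is absorbed into the same bound. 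Multiplying by the $b+2$ iterations gives $W(S,b)\le C_{\rm run}(b+1)(S+b+1)^{12}$.

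\textbf{Main obstacle.} The analytic part is routine. The step needing care is the encoding accounting: the floor must be charged as an integer division on the \emph{unreduced} fraction returned by the evaluator. I would rely on the rounding step to reset state size to $p$ bits each iteration, so that fraction sizes do not accumulate across iterations. All constants must be kept independent of $u$, which holds because they depend only on $S$ and the fixed box $|z|\le4$.
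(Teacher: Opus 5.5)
Your proposal is correct and takes essentially the same route as the paper. It uses monotonicity of $x-\tanh(-2x+u)$ for existence and uniqueness, the half-damped map as a $\tfrac12$-contraction with an inexact-update induction, box preservation under downward dyadic rounding, and per-update costs from Proposition~\ref{prop:concrete-tanh} summed over $b+2$ steps. Your update-error constant $\tfrac98\,2^{-p}$ is just a sharper version of the paper's conservative $2\cdot2^{-p}$, and both close the $2^{-b}$ bound.
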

\begin{proof}
For $H_u(x)=x-\tanh(-2x+u)$, one has $1\le H_u'(x)\le3$.
The endpoint signs at $-1$ and $1$ give existence, and strict
monotonicity gives uniqueness. Moreover, $|x-q|\le|H_u(x)|$.
The damped map $T_u(x)=(x+\tanh(-2x+u))/2$ has derivative in
$[-1/2,1/2]$, hence contraction factor $1/2$. Its rational approximation
preserves the state box. Downward dyadic rounding also preserves this
box and has error at most $2^{-p}$. The complete update error is at
most $2\,2^{-p}$, giving by induction
\[
 |x_k-q|\le2^{-k}+4\,2^{-p}.
\]
The stated $k$ and $p$ give the required accuracy. Re-encoding each state
as a dyadic fraction prevents the activation evaluator's intermediate
denominators from becoming the next state encoding. Each preactivation
query has fraction size envelope $O(S+b+1)$ and precision $p=O(b+1)$.
The activation expression therefore has height
$O((b+1)^3(S+b+1))$, hence at most $O((S+b+1)^4)$.
Its cubic work envelope is $O((S+b+1)^{12})$; the quadratic
long-division envelope for rounding is absorbed into this bound.
Summing over $b+2=O(b+1)$ actual updates gives the stated cost bound.
\end{proof}

The Jacobian approximation also lies in $[1,3]$, so the scalar
preconditioner $D=1/2$ passes the rational test
$|1-\widehat H_u'D|\le1/2$ throughout this box. Thus the linear-refinement
certificate has a concrete preconditioner here. The original equilibrium
map can have derivative magnitude two; the half-damped solver supplies
its own contraction certificate.

The work model charges integer arithmetic by conservative schoolbook
binary-arithmetic envelopes, including long division for dyadic rounding.
The complete update instantiates the same efficient-oracle interface used
by the modular results, with derived work and output-size bounds.
The formal results verify the envelope bounds and the decoded numerical
run; they do not verify native library timing or a low-level bit-machine
implementation. This concrete inference family does not discharge
multidimensional gate realization or the region certificates required
by the adaptive training theorem.

\section{The training mechanism from a bird's-eye view}
\label{sec:bird}
At the start of a pass, freeze the displacement
$\Delta=y^*-F(\theta_0)$ and follow the straight output path
$F(\theta_0)+s\Delta$. The pass needs a lift of this one displacement;
it need not control every output direction equally.

For training, augment the base recurrent weight by programmable bilinear
rank-one channels:
\begin{equation}
 W_{\mathrm{eff}}=W_{\mathrm{base}}+
       \sum_{j=1}^R a_jb_jp_jq_j^T.
\label{eq:modified}
\end{equation}
Both latch coordinates start at zero. Vectors of an unused channel may be
programmed before activation while its recurrent contribution remains exactly
zero. After a channel is activated, it is part of the fixed recurrent operator.
Every subsequent equilibrium evaluation and inference certificate therefore
uses the updated residual and state Jacobian, including all activated terms.

\paragraph{Detect the relevant weakness.}
At a point with training Jacobian $J$, solve
\[
 (JJ^*+\tau^2I)w=\Delta,\qquad v=J^*w.
\]
If $\norm{\Delta-Jv}$ is too large, the normalized solve iterate gives a
weak covector with a substantial component of the displacement. Spectral
estimation is unnecessary. With approximate solves, explicit residual
enclosures and error slack are needed to certify this diagnosis.

\paragraph{Program a residual-aligned repair.}
For an unused channel, its output derivative is a linear function of its
write vector once the read vector is fixed. The trainer finds a bounded
program approximating the normalized pass displacement through this gate.
It activates $(a_j,b_j)=(\rho,0)$, preserving the model output. Witness
pairing is useful for diagnostics or candidate selection, but cannot
replace the realization test. These channels use rank-one adapter
directions related to LoRA~\cite{hu2022}, with a separately programmed
activation policy.
Figure~\ref{fig:repair} summarizes the output-preserving activation and
the certificates needed before executing a repaired pass.

\begin{figure}[H]
\centering
\resizebox{\linewidth}{!}{%
\begin{tikzpicture}[>=stealth,
  box/.style={draw,rounded corners=1pt,minimum height=2em,minimum width=2.6em,
              fill=white,font=\small},
  op/.style={draw,circle,inner sep=1pt,minimum size=1.5em,fill=white,
             font=\small},
  every node/.style={font=\small}]

\node[op]  (sum) at (0,0)   {$+$};
\node[box] (act) at (1.9,0) {$\varphi$};
\node[box,minimum width=4.0em,thick] (x) at (4.3,0) {$x\in\R^{n}$};
\node (u) at (-4.0,0) {$u$};
\node[box] (U) at (-2.8,0) {$U$};
\draw[->] (u)   -- (U);
\draw[->] (U)   -- (sum);
\draw[->] (sum) -- (act);
\draw[->] (act) -- (x);
\node[box] (w)    at (6.5,0) {$w^{T}$};
\node      (yhat) at (8.2,0) {$\widehat y(\theta)$};
\draw[->] (x) -- (w);
\draw[->] (w) -- (yhat);

\node[box] (W0) at (1.9,1.6) {$W_0$};
\draw[->] (x.north) |- (W0);
\draw[->] (W0) -| (sum.north);

\begin{scope}[blue!55!black]
\node[box,draw=blue!55!black] (Nt)  at (3.4,-1.4) {$N^{T}$};
\node[op,draw=blue!55!black]  (eta) at (2.2,-1.4) {$\eta$};
\node[box,draw=blue!55!black] (Cb)  at (1.1,-1.4) {$C$};
\node[box,draw=blue!55!black] (M)   at (0,-1.4)   {$M$};
\draw[->] (x.south) |- (Nt);
\draw[->] (Nt)  -- (eta);
\draw[->] (eta) -- (Cb);
\draw[->] (Cb)  -- (M);
\draw[->] (M)   -- (sum.south);
\end{scope}

\begin{scope}[red!65!black]
\node[box,densely dashed,draw=red!65!black]
      at ($(3.4,-3.0)+(0.09,0.09)$) {\phantom{$q_j^{T}$}};
\node[box,densely dashed,draw=red!65!black] (q) at (3.4,-3.0) {$q_j^{T}$};
\node[op,densely dashed,draw=red!65!black] (gate) at (1.7,-3.0) {$\times$};
\node[box,densely dashed,draw=red!65!black]
      at ($(-0.9,-3.0)+(0.09,0.09)$) {\phantom{$p_j$}};
\node[box,densely dashed,draw=red!65!black,thick] (p) at (-0.9,-3.0) {$p_j$};
\node (ab) at (1.7,-2.3) {$a_jb_j$};
\draw[->,densely dashed] (ab) -- (gate);
\draw[->,densely dashed] ($(x.south)+(0.25,0)$) |- (q);
\draw[->,densely dashed] (q)    -- (gate);
\draw[->,densely dashed] (gate) -- (p);
\draw[->,densely dashed] (p) -- (-0.9,-0.55) -- (sum);
\node[align=left,anchor=west] at (4.9,-3.0)
 {programmable dormant reservoir, $j=1,\dots,R$:\\
  $a_j=b_j=0$ at initialization (exactly inactive);\\
  \textbf{$p_j$ is set at latch time} to align the\\
  gate column with the pass residual\\
  (bounded realization certificate required)};
\end{scope}

\begin{scope}[shift={(-3.9,-4.4)}]
\draw[-] (0,0) -- (0.7,0);
\node[anchor=west] at (0.8,0) {base equilibrium network};
\draw[-,blue!55!black] (0,-0.55) -- (0.7,-0.55);
\node[anchor=west] at (0.8,-0.55) {factorized global channel};
\draw[-,densely dashed,red!65!black] (0,-1.10) -- (0.7,-1.10);
\node[anchor=west] at (0.8,-1.10) {programmable reservoir};
\end{scope}
\end{tikzpicture}}
\caption{The proposed modification uses the recurrent weight in
\eqref{eq:modified}.  Red, dashed: the programmable dormant
reservoir.  While $a_jb_j=0$ the channel is exactly invisible, so its
write vector $p_j$ is not yet data and may legitimately be chosen at
activation.  A latch $(a_j,b_j)\colon(0,0)\to(\rho_*,0)$ is an exact
zero-output operation installing the Jacobian column at activation
$\rho_*c_j$; programming $p_j$ aligns $c_j$ with the normalized pass
residual. Column drift is controlled on the certified pass region.  At most one latch per homotopy pass is ever needed
under the realized-pass hypotheses.}
\label{fig:repair}
\end{figure}

\paragraph{Secure one region, not an eternal column.}
The repair's effective column is allowed to move. Its initial realization
error and its variation over a certified pass ball must fit below the
loaded threshold. These bounds secure the fixed displacement on that
ball. A later pass uses a new displacement and, if necessary, another
unused channel. No permanent global column identity is asserted.

\paragraph{Execute a finite pass.}
Use $N$ steps with step size $1/N$ and inexact lifted velocities. A speed
bound keeps the iterates in the ball. A derivative-Lipschitz bound controls
the accumulated truncation error, while the solve and rounding error have
their own budget. The resulting pass contracts the output residual by
$3/4$. Iteration of this operation gives the precision and reserve count.

\paragraph{Separate success from failed certification.}
The certified trainer accepts only passes whose region, realization and
backend contracts hold. Failure of these checks rejects the pass and its
claimed guarantee. A weak loaded covector certifies a current linearized
obstruction; reserve exhaustion or a failed region check does not prove
global non-realizability of the labels.

\section{Training theorems}
Let $E$ be the Euclidean parameter space and $Y=\R^m$. Adjoints are with
respect to their fixed inner products. All norms below are Euclidean or
induced operator norms.

\begin{definition}[Loaded invariant]\label{def:invariant}
For $\Delta\in Y$ and $\tau>0$, the Jacobian $J$ has the loaded invariant if
\[
 \norm u=1,\quad\norm{J^*u}\le\tau
 \quad\Longrightarrow\quad4\ip{\Delta}{u}^2<\norm\Delta^2.
\]
\end{definition}

\begin{theorem}[Tikhonov dichotomy]\label{thm:tikhonov}
For the exact regularized normal solve, the velocity satisfies
$\tau\norm v\le\norm\Delta$. Either
$2\norm{\Delta-Jv}\le\norm\Delta$, or $w\ne0$ and
$u=w/\norm w$ satisfies
\[
 \norm u=1,\quad\norm{J^*u}\le\tau,
 \quad2\ip\Delta u>\norm\Delta.
\]
In particular the loaded invariant guarantees a lift residual at most
$\norm\Delta/2$.
\end{theorem}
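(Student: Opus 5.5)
The plan is to work entirely with the identity that links the Tikhonov residual to the solve variable. From $(JJ^*+\tau^2I)w=\Delta$ and $v=J^*w$ we get
\[
 \Delta-Jv=\Delta-JJ^*w=\tau^2w .
\]
So the lift residual is exactly $\tau^2w$. Every claim will then be read off from two scalar identities, obtained by pairing the normal equation with $w$ and with $\Delta$.

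\emph{Velocity bound.} Pair the normal equation with $w$:
\[
 \ip{\Delta}{w}=\norm{J^*w}^2+\tau^2\norm w^2=\norm v^2+\tau^2\norm w^2 .
\]
Cauchy--Schwarz gives $\ip{\Delta}{w}\le\norm\Delta\norm w$. The plan is to bound $\norm v^2+\tau^2\norm w^2\ge 2\tau\norm v\norm w$ by AM--GM, which yields $2\tau\norm v\le\norm\Delta$ whenever $w\ne0$. This is stronger than the stated $\tau\norm v\le\norm\Delta$. The case $w=0$ forces $\Delta=0$ and $v=0$, so the bound is trivial there. Equivalently, one could write $v=J^*(JJ^*+\tau^2)^{-1}\Delta$ and use the SVD bound $\sigma/(\sigma^2+\tau^2)\le1/(2\tau)$.

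\emph{Dichotomy.} Suppose $2\norm{\Delta-Jv}>\norm\Delta$. Then $\tau^2\norm w>\norm\Delta/2\ge0$, so $w\ne0$. Put $u=w/\norm w$ and divide the identity above by $\norm w$:
\[
 \ip{\Delta}{u}=\frac{\norm{J^*w}^2}{\norm w}+\tau^2\norm w\ge\tau^2\norm w=\norm{\Delta-Jv}>\norm\Delta/2 .
\]
This is the loaded inequality $2\ip\Delta u>\norm\Delta$. For the covector bound $\norm{J^*u}\le\tau$, I would use $\norm{J^*w}^2\le\ip{\Delta}{w}-\tau^2\norm w^2$ together with the bound on $\norm w$ coming from the residual.

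This covector bound is the step I expect to be the main obstacle. The pairing identity alone controls $\norm{J^*w}^2/\norm w$ only by $\norm\Delta-\tau^2\norm w$, which is not obviously at most $\tau^2\norm w$. The fallback is to pass to an orthonormal eigenbasis of $JJ^*$ with eigenvalues $\sigma_i^2$ and coefficients $\Delta_i$, so that $w_i=\Delta_i/(\sigma_i^2+\tau^2)$. Then $\norm{J^*u}\le\tau$ is the weighted inequality
\[
 \sum_i\sigma_i^2 w_i^2\le\tau^2\sum_i w_i^2 .
\]
This need not hold for the raw $w$, so if it fails, the plan is to replace $u$ by the normalized projection of $w$ onto the spectral subspace $\{\sigma_i\le\tau\}$. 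On the complementary subspace, $\sigma_i^2\ge\tau^2$ gives residual weight $\tau^2/(\sigma_i^2+\tau^2)\le1/2$. That part therefore contributes at most $\norm{\Delta_{\rm high}}/2$ to the residual. Since the total residual exceeds $\norm\Delta/2$, the low subspace must carry a large share of $\Delta$, and a Cauchy--Schwarz estimate should then give $2\ip{\Delta}{u_{\rm low}}>\norm\Delta$. I would first check carefully whether the identity route closes for the raw normalized iterate, as the statement suggests. If it does not, I would use the spectral projection and note it as the witness.

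\emph{Final claim.} Under the loaded invariant (Definition~\ref{def:invariant}), every unit $u$ with $\norm{J^*u}\le\tau$ satisfies $4\ip\Delta u^2<\norm\Delta^2$, which rules out $2\ip\Delta u>\norm\Delta$. So the second alternative of the dichotomy cannot occur, and the first one, $\norm{\Delta-Jv}\le\norm\Delta/2$, holds.
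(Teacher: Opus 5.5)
Most of your argument is sound. The identities $\Delta-Jv=\tau^2w$ and $\ip{\Delta}{w}=\norm v^2+\tau^2\norm w^2$, the loading bound $\ip\Delta u\ge\tau^2\norm w>\norm\Delta/2$, and the final deduction from the loaded invariant all match the paper. Your AM--GM derivation of $2\tau\norm v\le\norm\Delta$ is a shorter route to the paper's key inequality than its expansion $\norm\Delta^2=\norm{JJ^*w}^2+2\tau^2\norm v^2+\tau^4\norm w^2$.

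The gap is the covector bound $\norm{J^*u}\le\tau$ for $u=w/\norm w$. You leave it open, suggest it may fail for the raw iterate, and propose to bypass it. The obstacle is illusory. In the failure case $\tau^2\norm w>\norm\Delta/2$, so your own pairing estimate gives $\norm v^2\le\norm w(\norm\Delta-\tau^2\norm w)<\norm w\cdot\tfrac12\norm\Delta<\tau^2\norm w^2$. Equivalently, combining your bound $2\tau\norm v\le\norm\Delta$ with $\norm\Delta<2\tau^2\norm w$ gives $\norm v<\tau\norm w$; this is exactly the paper's step. Your remark that the inequality need not hold for the raw $w$ is therefore false in the only case where the dichotomy uses it.

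You should discard the spectral fallback, for two reasons. First, it changes the witness. The statement asserts that the normalized solve iterate itself is the weak loaded covector, which is what lets the paper avoid any eigendecomposition. Second, it is wrong. Your residual argument shows only $\Delta_{\mathrm{low}}\ne0$, not that the low subspace carries a large share of $\Delta$, and the projected witness can fail the loading inequality. Take $\tau=1$, $JJ^*=\operatorname{diag}(0,5/4)$, and $\Delta=(3/10,1)$. Then $w=(3/10,4/9)$ and $4\norm{\Delta-Jv}^2=9316/8100>\norm\Delta^2=8829/8100$, so the first alternative fails. Your projected witness is $e_1$, with $2\ip{\Delta}{e_1}=3/5<\norm\Delta$. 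The raw witness works: $\norm{J^*u}^2=2000/2329<1$ and $\ip\Delta u\approx0.997>\norm\Delta/2\approx0.522$.

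Inserting the one-line covector estimate above and deleting the fallback gives a complete proof along the paper's lines.
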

\begin{proof}
Let $A=JJ^*$ and solve $(A+\tau^2I)w=\Delta$. The operator is invertible:
\[
 \ip{(A+\tau^2I)z}{z}=\norm{J^*z}^2+\tau^2\norm z^2,
\]
so its kernel is zero, and finite square dimension gives surjectivity.
Put $v=J^*w$. The equation gives
\begin{equation}
 \Delta-Jv=\tau^2w,\qquad
 \ip\Delta w=\norm v^2+\tau^2\norm w^2.
 \label{eq:energy}
\end{equation}
Cauchy--Schwarz implies
$\tau^2\norm w\le\norm\Delta$ and
$\norm v^2\le\norm\Delta\norm w$, hence
$\tau\norm v\le\norm\Delta$. These claims are immediate if $w=0$.

For the stronger estimate, expand
\[
 \norm\Delta^2=\norm{Aw}^2+2\tau^2\norm v^2+
                         \tau^4\norm w^2.
\]
Since $\norm v^2=\ip{Aw}{w}\le\norm{Aw}\norm w$ and
$(\norm{Aw}-\tau^2\norm w)^2\ge0$, the expansion implies
$\norm\Delta^2\ge4\tau^2\norm v^2$. Therefore
\begin{equation}
 2\tau\norm v\le\norm\Delta.\label{eq:key}
\end{equation}

Suppose $2\norm{\Delta-Jv}>\norm\Delta$. By \eqref{eq:energy},
$2\tau^2\norm w>\norm\Delta$, so $w\ne0$. Normalize
$u=w/\norm w$. Equation~\eqref{eq:key} gives
\[
 \norm{J^*u}=\norm v/\norm w<\tau,
\]
and the energy identity gives
\[
 \ip\Delta u=\norm v^2/\norm w+\tau^2\norm w
           >\norm\Delta/2.
\]
This is the advertised witness. Squaring its positive loading violates
the loaded invariant. No eigendecomposition or smallest-singular-value
estimate is used.
\end{proof}
The solve is a regularized least-squares lift, in the tradition of
Tikhonov and Levenberg--Marquardt methods~\cite{tikhonov1963,levenberg1944,marquardt1963}.

\begin{proposition}[Certified normal-solve error]\label{prop:solve-error}
If $w$ is the exact normal-equation solution and
$\norm{(JJ^*+\tau^2I)\widehat w-\Delta}\le\xi$, then
\[
 \norm{\widehat w-w}\le\xi/\tau^2,
 \qquad\norm{J^*\widehat w-J^*w}\le\norm J\,\xi/\tau^2.
\]
The residual bound must concern the true operator and displacement.
\end{proposition}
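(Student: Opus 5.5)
The plan is to reduce both claims to a lower bound on the regularized normal operator $A+\tau^2I$, where $A=JJ^*$, taken from the proof of Theorem~\ref{thm:tikhonov}. Writing $e=\widehat w-w$ and subtracting the exact normal equation $(A+\tau^2I)w=\Delta$ from the certified residual bound gives
\[
 \norm{(A+\tau^2I)e}=\norm{(A+\tau^2I)\widehat w-\Delta}\le\xi .
\]
The quadratic-form identity already used for invertibility,
\[
 \ip{(A+\tau^2I)e}{e}=\norm{J^*e}^2+\tau^2\norm e^2\ge\tau^2\norm e^2,
\]
then combines with Cauchy--Schwarz to give $\tau^2\norm e^2\le\norm{(A+\tau^2I)e}\norm e\le\xi\norm e$. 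Dividing by $\norm e$ when $e\ne0$, with the case $e=0$ immediate, yields $\norm{\widehat w-w}\le\xi/\tau^2$. Equivalently, $A+\tau^2I$ is self-adjoint with spectrum in $[\tau^2,\infty)$, so its inverse has norm at most $\tau^{-2}$; I would use the form argument because it avoids eigendecomposition, matching the style of Theorem~\ref{thm:tikhonov}.

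The second estimate follows from linearity and the fact that an adjoint has the same induced norm as the operator:
\[
 \norm{J^*\widehat w-J^*w}=\norm{J^*e}\le\norm{J^*}\norm e=\norm J\norm e\le\norm J\,\xi/\tau^2 .
\]
A sharper bound of order $\xi/\tau$ for the velocity error is available, since $\norm{J^*e}^2\le\ip{(A+\tau^2I)e}{e}\le\xi\norm e$. The stated form suffices, however, and needs only the operator norm.

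No step is a real obstacle. The one point to be careful about is the final remark in the statement: $\xi$ must bound the residual with respect to the true $J$ and $\Delta$, since that is exactly what makes the subtraction $(A+\tau^2I)e=(A+\tau^2I)\widehat w-\Delta$ valid. If the solve residual were measured only against an approximate operator, the operator mismatch would have to be added to $\xi$ before applying this bound.
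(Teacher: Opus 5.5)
Your proof is correct and follows the paper's argument: set $z=\widehat w-w$, use $\tau^2\norm z^2\le\ip{(JJ^*+\tau^2I)z}{z}\le\xi\norm z$, then bound $\norm{J^*z}\le\norm J\norm z$. Your side remark that $\norm{J^*z}^2\le\xi\norm z$ gives the sharper velocity bound $\xi/\tau$ is also valid.
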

\begin{proof}
For an exact solution $w$ and approximate $\widehat w$, let
$z=\widehat w-w$ and $\mathcal N=JJ^*+\tau^2I$. If
$\norm{\mathcal N\widehat w-\Delta}\le\xi$, then
\[
 \tau^2\norm z^2\le\ip{\mathcal Nz}{z}
       \le\xi\norm z,
 \qquad \norm z\le\xi/\tau^2.
\]
Consequently
$\norm{J^*\widehat w-J^*w}\le\norm J\,\xi/\tau^2$.
Errors in $J$, its products, and $\Delta$ must first be converted into a
bound on the true normal-equation residual. Parameter rounding is then
added to the resulting velocity bound.

The exact witness theorem must not be applied without slack to an
approximate solve. A direct certified witness check verifies a unit-vector
enclosure, the upper bound on $\norm{J^*u}$, and the lower bound on its
loading. Alternatively, write
$\Delta'=\mathcal N\widehat w$ and apply the exact theorem to $\Delta'$,
then transfer loading by
$|\ip{\Delta-\Delta'}u|\le\xi$.
\end{proof}

\begin{proposition}[Programming and activation]\label{prop:gate}
Suppose the channel acts through $H((ab)p)$ with $H$ differentiable at
zero and derivative $G_0$. Programming $p$ while $b=0$ changes no output.
After $a=\rho$, the $b$-derivative is $\rho G_0p$. Write
$G=\rho G_0$ for the full post-latch gate.
If a numerical gate $\widehat G$ and a program $p$ satisfy
\[
 \norm{\rho d-\widehat Gp}\le\eta_g,
 \quad\norm{G-\widehat G}\le\gamma,
 \quad\norm p\le S,
\]
then $\norm{\rho d-Gp}\le\eta_g+\gamma S$.
\end{proposition}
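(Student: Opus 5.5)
The proposition has three parts: output preservation while $b=0$, the form of the $b$-derivative after the latch, and the realization error bound. I would prove them in that order, using only the chain rule at zero and the triangle inequality.

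For output preservation, the channel enters the model only through the product $(ab)p$. While $b=0$ this product is the zero vector for every choice of $p$ and $a$, so $H((ab)p)=H(0)$. Programming $p$ therefore leaves the equilibrium equation unchanged, and with it the selected equilibrium and every prediction. The same holds at the latch $(a,b)\colon(0,0)\to(\rho,0)$, since $b$ is still zero there.

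For the derivative, I fix $a=\rho$ and $p$, and consider $b\mapsto H(\rho b\,p)$. This is $H$ composed with the linear map $b\mapsto\rho b\,p$. Differentiability of $H$ at zero and the chain rule give, at $b=0$, the derivative $G_0(\rho p)=\rho G_0p$, which is $Gp$ with $G=\rho G_0$. Only differentiability at the single point zero is needed, and that is where the latch sits.

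For the error bound, I would split
\[
 \rho d-Gp=(\rho d-\widehat Gp)+(\widehat G-G)p,
\]
and apply the triangle inequality together with the operator-norm estimate $\norm{(\widehat G-G)p}\le\gamma\norm p\le\gamma S$. This gives $\norm{\rho d-Gp}\le\eta_g+\gamma S$. There is no real obstacle here. The only point needing care is the first step: the claim must be that the whole channel contribution, and hence the equilibrium, is unchanged. I would state that explicitly, noting that the channel enters only through $(ab)p$ and that uniqueness or selection of the equilibrium is inherited unchanged from the certified inference route.
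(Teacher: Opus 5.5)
Your proposal is correct and follows the paper's proof step for step. Output preservation comes from $H((ab)p)=H(0)$ while $b=0$, the derivative from the chain rule applied to $b\mapsto H(\rho b\,p)$ at zero, and the bound from the split $\norm{\rho d-Gp}\le\norm{\rho d-\widehat Gp}+\norm{(G-\widehat G)p}\le\eta_g+\gamma S$.
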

\begin{proof}
While $b=0$, $H((ab)p)=H(0)$ for every $a,p$. Both programming and
activation preserve the output exactly as algebraic operations. After
activation, differentiation of $b\mapsto H((\rho b)p)$ at zero gives
$DH(0)[\rho p]=\rho G_0p=Gp$. The gate approximation bound follows from
\[
 \norm{\rho d-Gp}\le\norm{\rho d-\widehat Gp}+
                       \norm{(G-\widehat G)p}
                    \le\eta_g+\gamma S.
\]

For the recurrent adapter, fix its read vector $q$ and let $x_i$ be the
selected equilibrium for sample $i$. With invertible state Jacobian
$A_i=I-D_iW_{\mathrm{eff}}$, its unscaled write-side gate has rows
\begin{equation}
 (G_qp)_i=(q^Tx_i)\,w^TA_i^{-1}D_ip.
 \label{eq:gate-row}
\end{equation}
This formula follows by differentiating the equilibrium equation with
respect to a recurrent perturbation $pq^T$. It is linear in $p$ but depends
on the states, readout and local inverse. The latent inference certificate
controls the inverse in this formula; it does not make $G_q$ surjective.

The adjoint direction $p=G_q^*u/\norm{G_q^*u}$ maximizes pairing with $u$
on the unit write ball, when the adjoint image is nonzero. Its output
column is $G_qG_q^*u/\norm{G_q^*u}$, rather than $u$ in general.
A residual-aligned repair therefore requires solving the bounded
realization problem and checking its residual.
\end{proof}
One may equivalently solve for
the unscaled column and multiply all realization errors by $\rho$.

\begin{theorem}[A stable approximate repair]\label{thm:stable}
Let $\Delta=cd$, $c>0$, $\norm d=1$. At the post-latch start $\theta_0$,
suppose there is a parameter direction $e$ with $\norm e\le1$ and
\[
 \norm{\rho d-J(\theta_0)e}\le\varepsilon.
\]
On a certified region assume $\norm{J(\theta)-J(\theta_0)}\le\omega$.
If
\begin{equation}
 2(\tau+\varepsilon+\omega)<\rho,\label{eq:stable}
\end{equation}
the loaded invariant holds at every point of the region. Each point admits
a lift with residual at most $c/2$ and speed at most $c/\tau$.
\end{theorem}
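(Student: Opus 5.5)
The plan is to verify the loaded invariant of Definition~\ref{def:invariant} directly at an arbitrary point $\theta$ of the certified region, and then to read off the lift and speed bounds from Theorem~\ref{thm:tikhonov}.

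Fix $\theta$ in the region and a unit covector $u$ with $\norm{J(\theta)^*u}\le\tau$. We must show $4\ip\Delta u^2<\norm\Delta^2$. Since $\Delta=cd$ with $c>0$, this is equivalent to $|\ip du|<1/2$.

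The key estimate pairs $u$ against the approximate realization $J(\theta_0)e$ of $\rho d$. First insert the realization error:
\[
 \rho|\ip du|\le|\ip{J(\theta_0)e}u|+\norm{\rho d-J(\theta_0)e}
 \le|\ip{J(\theta_0)e}u|+\varepsilon .
\]
Next insert the column drift:
\[
 |\ip{J(\theta_0)e}u|\le|\ip{J(\theta)e}u|+\omega\norm e\le|\ip e{J(\theta)^*u}|+\omega .
\]
Finally, Cauchy--Schwarz with $\norm e\le1$ gives $|\ip e{J(\theta)^*u}|\le\tau$. Combining the three bounds,
\[
 \rho|\ip du|\le\tau+\varepsilon+\omega<\rho/2
\]
by \eqref{eq:stable}. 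Hence $|\ip du|<1/2$, and the loaded invariant holds at $\theta$. The adjoint identity $\ip{Je}u=\ip e{J^*u}$ uses the fixed inner products on $E$ and $Y$, and the operator-norm drift bound is applied to the fixed vector $e$.

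For the remaining conclusions, apply Theorem~\ref{thm:tikhonov} at $\theta$ with the same $\tau$ and $\Delta$. Its dichotomy gives one of two cases. In the first case $2\norm{\Delta-Jv}\le\norm\Delta$. In the second case there is a unit witness $u$ with $\norm{J^*u}\le\tau$ and $2\ip\Delta u>\norm\Delta$. Squaring the witness inequality contradicts the invariant just proved, so the second case is excluded. Therefore the lift residual is at most $\norm\Delta/2=c/2$. The speed bound $\norm v\le\norm\Delta/\tau=c/\tau$ is the first assertion of Theorem~\ref{thm:tikhonov}; the sharper \eqref{eq:key} would even give $c/(2\tau)$.

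There is no serious obstacle here. The one point needing care is that the witness condition in Theorem~\ref{thm:tikhonov} includes the boundary case $\norm{J^*u}=\tau$ while the loading inequality is strict. I will therefore state the invariant for $\norm{J^*u}\le\tau$, as in Definition~\ref{def:invariant}, and rely on the strict inequality \eqref{eq:stable} to keep the final comparison strict.
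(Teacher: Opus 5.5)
Your proof is correct and follows essentially the same route as the paper. You bound $\rho|\ip du|$ by $\norm{J(\theta)^*u}+\varepsilon+\omega\le\tau+\varepsilon+\omega<\rho/2$ using the realization error, the drift bound applied to $e$, and adjointness, then use the Tikhonov dichotomy to exclude the witness case and read off the $c/2$ residual and $c/\tau$ speed. The paper differs only cosmetically: it first combines the two errors into $\norm{\rho d-J(\theta)e}\le\varepsilon+\omega$ and then pairs with $u$, whereas you pair first and insert the errors one at a time.
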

\begin{proof}
For every point in the certified region,
\begin{align*}
 \norm{\rho d-J(\theta)e}
 &\le\norm{\rho d-J(\theta_0)e}+
          \norm{(J(\theta)-J(\theta_0))e}\\
 &\le\varepsilon+\omega.
\end{align*}
For a unit covector $u$, split its pairing with $\rho d$ into the actual
column and its error. Adjointness gives
\[
 \rho|\ip d u|\le
       |\ip{J(\theta)e}u|+\varepsilon+\omega
       \le\norm{J(\theta)^*u}+\varepsilon+\omega.
\]
Thus any weak unit $u$ satisfies
$\rho|\ip d u|\le\tau+\varepsilon+\omega<\rho/2$.
Since $\Delta=cd$ and $\norm\Delta=c$,
$4\ip\Delta u^2<\norm\Delta^2$. Theorem~\ref{thm:tikhonov} gives the
pointwise lift and velocity bounds. Protection is established only over
the certified region of this pass. No claim that the column remains fixed
after later training is needed.
\end{proof}
An available Lipschitz bound $K$ on a ball of radius $a$ supplies
$\omega\le Ka$. This smallness condition is part of the promise, and can
fail even when the initial programming residual is small.

\begin{theorem}[Finite numerical pass]\label{thm:pass}
Let $F$ have derivative $J$, $K$-Lipschitz on the closed ball
$B=\overline B(\theta_0,a)$. Put $c=\norm{y^*-F(\theta_0)}$.
Suppose the actual step velocities $\widehat v_i$ satisfy on $B$
\[
 \norm{\widehat v_i}\le W\le a,\qquad
 \norm{J\widehat v_i-\Delta}\le c/2+\eta.
\]
If $N\ge1$ and
\begin{equation}
 KW^2/N+\eta\le c/4,\label{eq:numerical}
\end{equation}
the $N$ updates $\theta_{i+1}=\theta_i+\widehat v_i/N$ remain in $B$
and their endpoint has residual at most $3c/4$. For an exact lift $v$ and
$\norm{\widehat v-v}\le\zeta$, $\norm J\le M$, sufficient velocity
contracts are
\[
 W=c/\tau+\zeta,\qquad\eta=M\zeta.
\]
Rounding of a whole update is included by adding $N$ times its parameter
error to the velocity error.
\end{theorem}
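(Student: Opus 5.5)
Containment in $B$ comes first, and it is a simple induction: each step has length $\norm{\widehat v_i}/N\le W/N$, so after $k\le N$ steps $\norm{\theta_k-\theta_0}\le kW/N\le W\le a$. Every iterate and every segment $[\theta_i,\theta_{i+1}]$ therefore lies in the convex ball $B$. This matters because the velocity contracts and the Lipschitz bound are only assumed on $B$.

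For the residual, I would write the endpoint as a telescoping sum. Let $r_i=F(\theta_i)-F(\theta_0)-(i/N)\Delta$, with $\Delta=y^*-F(\theta_0)$. By the integral form of Taylor's theorem on the segment, each step satisfies
\[
 F(\theta_{i+1})-F(\theta_i)=\tfrac1N J(\theta_i)\widehat v_i+E_i,\qquad
 \norm{E_i}\le\tfrac K2\norm{\widehat v_i/N}^2\le \tfrac{KW^2}{2N^2}.
\]
It follows that $r_{i+1}-r_i=\tfrac1N\bigl(J(\theta_i)\widehat v_i-\Delta\bigr)+E_i$. Summing over $i$ gives
\[
 \norm{r_N}\le\tfrac1N\sum_i\norm{J(\theta_i)\widehat v_i-\Delta}+\sum_i\norm{E_i}
 \le c/2+\eta+\tfrac{KW^2}{2N}.
\]
Since $y^*-F(\theta_N)=-r_N$, hypothesis \eqref{eq:numerical} yields $\norm{y^*-F(\theta_N)}\le c/2+c/4=3c/4$; the factor $1/2$ in the Taylor bound is slack. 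The one point to watch is that the velocity contract is evaluated at the current iterate $\theta_i$, which the first step already places in $B$.

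The sufficient contracts follow from Theorem~\ref{thm:tikhonov}. An exact lift has $\tau\norm v\le\norm\Delta=c$ and $\norm{Jv-\Delta}\le c/2$ under the loaded invariant; Theorem~\ref{thm:stable} supplies that invariant on the region. With $\norm{\widehat v-v}\le\zeta$ we get $\norm{\widehat v}\le c/\tau+\zeta$, and $\norm{J\widehat v-\Delta}\le c/2+M\zeta$. For rounding, write the actual update as $\widehat v_i/N+z_i=(\widehat v_i+Nz_i)/N$, so a whole-update error $\norm{z_i}$ becomes a velocity error $N\norm{z_i}$ that adds to $\zeta$.

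I expect the main obstacle to be bookkeeping rather than depth. The hypotheses must be read pointwise at the actual iterates, and the requirement $W\le a$ must be combined with the inflated $W$ once rounding is included.
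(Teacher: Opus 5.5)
Your proof is correct and follows the paper's argument essentially step for step. Both use induction on the iterates with convexity for domain safety, a per-step derivative-Lipschitz remainder, telescoping the step defects against $\Delta/N$, the triangle inequality for the exact-lift contracts, and the effective-velocity reading $v+Nr$ of whole-update rounding. The only difference is cosmetic: you use the sharp remainder constant $K/2$ where the paper deliberately uses the conservative $K$, which, as you note, is just slack.
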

\begin{proof}
Let $h=1/N$. Assuming the first $i$ steps are in the ball, their speed
bound gives
\[
 \norm{\theta_i-\theta_0}\le ihW\le W\le a.
\]
Induction proves domain safety for every iterate. Each update segment is
in the ball by convexity.

The derivative-Lipschitz remainder estimate is
\[
 \norm{F(x+hv)-F(x)-hJ(x)v}\le K(hW)^2.
\]
This uses a conservative constant; the usual $K/2$ estimate would improve
the budget but is unnecessary. Adding the lift defect gives
\[
 \norm{F(\theta_{i+1})-F(\theta_i)-h\Delta}
          \le K(hW)^2+h(c/2+\eta).
\]
Sum the $N$ inequalities and telescope. Since $Nh=1$ and
$F(\theta_0)+\Delta=y^*$,
\[
 \norm{F(\theta_N)-y^*}\le KW^2/N+c/2+\eta\le3c/4.
\]

If $v$ is an exact lift and $\norm{\widehat v-v}\le\zeta$, then
\[
 \norm{\widehat v}\le c/\tau+\zeta,
 \quad\norm{J\widehat v-\Delta}
       \le\norm{Jv-\Delta}+\norm J\zeta\le c/2+M\zeta.
\]
For an implemented update with error $r$ relative to
$x+v/N$, the effective velocity is $v+Nr$. This explains the rounding
contribution stated in the theorem.
\end{proof}

\begin{corollary}[One realized repair supplies a finite pass]\label{cor:realized}
Suppose Theorem~\ref{thm:stable} holds on the ball, $J$ is $K$-Lipschitz
there, $c/\tau\le a$, and
$K(c/\tau)^2/N\le c/4$ for an integer $N\ge1$. Then pointwise lifts can
be selected and integrated by $N$ finite steps to reach residual at most
$3c/4$ while remaining in the ball. No Lipschitz assumption on the selected
lifting field is required for this discrete result.
\end{corollary}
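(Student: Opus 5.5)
The plan is to reduce Corollary~\ref{cor:realized} to Theorem~\ref{thm:pass}, using the pointwise lifts supplied by Theorem~\ref{thm:stable}. Work on the closed ball $B=\overline B(\theta_0,a)$, and let the certified region of Theorem~\ref{thm:stable} be this ball, so that the loaded invariant holds at every $\theta\in B$. At any such $\theta$, solve the exact regularized normal equation for the fixed pass displacement $\Delta=y^*-F(\theta_0)$ with $c=\norm\Delta$. Take $v(\theta)=J(\theta)^*w(\theta)$. Theorem~\ref{thm:tikhonov} then gives $\norm{J(\theta)v(\theta)-\Delta}\le c/2$ and $\tau\norm{v(\theta)}\le c$.

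The selection is made only at the discrete iterates, not as a vector field. Define $\theta_{i+1}=\theta_i+v(\theta_i)/N$ and take $\widehat v_i=v(\theta_i)$. The velocity contracts of Theorem~\ref{thm:pass} then hold with $W=c/\tau$ and $\eta=0$. The hypothesis $c/\tau\le a$ gives $W\le a$, and $K(c/\tau)^2/N\le c/4$ is exactly condition~\eqref{eq:numerical} with $\eta=0$. Theorem~\ref{thm:pass} then yields that all iterates stay in $B$ and that the endpoint residual is at most $3c/4$.

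The main subtlety is circularity. Theorem~\ref{thm:pass} requires the velocity bounds "on $B$", whereas the lift is defined only where the loaded invariant holds. I would resolve this inside the induction of Theorem~\ref{thm:pass}. Assume $\theta_0,\dots,\theta_i\in B$. Then $\theta_i$ lies in the certified region, the lift $v(\theta_i)$ exists and obeys the bounds, and the speed estimate $\norm{\theta_{i+1}-\theta_0}\le(i+1)W/N\le a$ places the next iterate in $B$. Since the bounds are needed only at the iterates, this suffices.

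Because the argument evaluates the lift only at finitely many points and uses no differential equation, no Lipschitz or measurable-selection property of $\theta\mapsto v(\theta)$ is needed. That is the final claim of the corollary. The Lipschitz hypothesis on $J$ enters only through the Taylor remainder in Theorem~\ref{thm:pass}, which is valid on each segment because $B$ is convex.
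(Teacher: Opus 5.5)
Your proposal is correct and follows the paper's proof. Theorem~\ref{thm:stable}, through the exact normal solve of Theorem~\ref{thm:tikhonov}, gives a lift at each point of the ball with residual at most $c/2$ and speed at most $c/\tau$, and Theorem~\ref{thm:pass} is applied with $W=c/\tau$ and $\eta=0$. The paper selects a lift at every point of the ball (arbitrarily outside), so the hypotheses of Theorem~\ref{thm:pass} hold on $B$ verbatim. You instead evaluate the canonical Tikhonov lift only at the iterates and fold the domain check into the induction; this is an equivalent, cosmetic difference, and there is no real circularity since the lift exists everywhere on the ball.
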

\begin{proof}
At each point of the ball Theorem~\ref{thm:stable} supplies a lift with
residual at most $c/2$ and speed at most $c/\tau$. Select one at each
point, defining it arbitrarily outside the ball. Apply
Theorem~\ref{thm:pass} with $W=c/\tau$ and $\eta=0$. Its domain induction
and remainder estimate require only the stated pointwise bounds, rather
than regularity of the selected lift. This proves a real-valued finite
pass construction. For rational execution, a certified approximate backend
must additionally satisfy the inexact contracts; classical pointwise
selection alone does not give a bit-efficient implementation.
\end{proof}

\subsection{A concrete pass schedule}
\label{sec:pass-schedule}
The region and numerical budgets can be stated as scalar acceptance tests.
For a nonzero pass residual $c$, choose a ball radius $a=2c/\tau$ and an
initial repair error $\varepsilon$. A sufficient stability test is
\[
 2\bigl(\tau+\varepsilon+2Kc/\tau\bigr)<\rho.
\]
If the backend's total velocity error satisfies
$\zeta\le c/\tau$ and $M\zeta\le c/8$, choose
\[
 N=\max\{1,\lceil32Kc/\tau^2\rceil\}.
\]
Then $W=2c/\tau$ and the numerical budget of
Theorem~\ref{thm:pass} holds. This schedule needs polynomially many inner
steps when $Kc/\tau^2$ is polynomially bounded. It is conservative and
provides an explicit certified implementation target; the archived
adaptive Gauss--Newton loop uses a different schedule.
\begin{proof}[Verification of the schedule]
The ball gives $\omega\le Ka=2Kc/\tau$. The velocity error bound gives
$\norm{\widehat v}\le c/\tau+\zeta\le2c/\tau=a$, so discrete domain
safety applies. Finally,
\[
 \frac{K(2c/\tau)^2}{N}
   =\frac{32Kc/\tau^2}{N}\frac c8\le\frac c8.
\]
Adding $M\zeta\le c/8$ proves the quarter-residual numerical budget.
The norm bound $M$ and the preactivation and state enclosures must be valid
on the chosen ball, and programming must be included in the post-latch
region certificate.
\end{proof}

\begin{assumption}[Certified pass and backend class]\label{ass:class}
An accepted encoded instance supplies at most $3(b+\ell)$ passes with:
\begin{enumerate}[label=(\roman*),leftmargin=2em]
\item sound DEQ well-posedness, evaluation and derivative certificates on
the entire pass region, and $C^{1,1}$ bounds there;
\item either the loaded invariant on that region or one unused programmable
channel whose post-latch column satisfies Theorem~\ref{thm:stable};
\item actual finite updates satisfying Theorem~\ref{thm:pass}, with their
decoded endpoint identified with the backend output;
\item an encoding of size at most $S$ per backend query and precision at
most $Q$, where $S,Q$ are polynomial in instance length and $b$;
\item a selected complete pass backend, including programming, certificate
checking, inference, linear algebra and rounding, with bit cost at most
$A(S+Q+1)^d$ for uniform family constants $A,d$.
\end{enumerate}
Region bounds may be supplied as validated analytic bounds or enclosures;
testing a finite list of successful solves does not establish them.
The assumptions specify a reduction to an efficient certified backend,
not a universal verifier for arbitrary smooth maps.
\end{assumption}

\begin{theorem}[Certified training with budgeted termination]\label{thm:main}
For an instance in Assumption~\ref{ass:class} with initial residual at most
$2^\ell$, at most $3(b+\ell)$ completed passes produce the encoded output
\eqref{eq:objective}. The number of consumed channels is at most
$3(b+\ell)$, and it suffices to provision that many channels.
The total bit cost of the accepted encoded run is at most
\[
 3(b+\ell)A(S+Q+1)^d.
\]
Thus the trainer is polynomial in instance length and requested bits when
the displayed budgets, including $\ell$, are polynomially bounded.
If pass $k$ has residual $c_k$, travel at most $c_k/\tau$, and each latch
costs at most $\rho$ parameter travel, then the total travel is at most
\[
 4c_0/\tau+R_{\mathrm{used}}\rho.
\]
Programmed-vector changes and rounding add their separately certified
travel budgets. A global domain ball is sufficient only if it contains
this full travel budget and the pass regions used for certification.

More generally, execute the trainer through
Theorem~\ref{thm:budgeted-certification}, with successful-answer predicate
$\norm{F(\operatorname{decode}(y))-y^*}\le2^{-b}$, sound acceptance,
and the stated all-input transition-cost bound. Every run then terminates
with certified success or inconclusive within the polynomial wrapper
budget. For a fixed backend family satisfying Assumption~\ref{ass:class},
choose the budget to cover its complete encoded trace and dispatch; the
wrapper preserves the above success, pass, and channel guarantees.
Failure of a certificate or exhaustion of the budget is inconclusive,
not a certificate of superpolynomial training complexity.
\end{theorem}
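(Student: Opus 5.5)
The proof is an induction over passes, each contracting the residual by $3/4$, followed by a simple cost and travel summation. At the start of pass $k$, freeze $\Delta_k=y^*-F(\theta_k)$ with $c_k=\norm{\Delta_k}$. Assumption~\ref{ass:class}(ii) splits into two cases.

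\begin{itemize}[leftmargin=2em]
\item \emph{Loaded invariant on the pass region.} Theorem~\ref{thm:tikhonov} supplies at every point a lift with residual at most $c_k/2$ and speed at most $c_k/\tau$. No channel is consumed.
\item \emph{One unused channel is programmed and latched.} Proposition~\ref{prop:gate} shows that programming and activation preserve $F$ exactly, so $c_k$ is unchanged. Theorem~\ref{thm:stable} then gives the same pointwise lift and speed bounds on the certified region. Exactly one channel is consumed.
\end{itemize}

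In either case, item (iii) and Theorem~\ref{thm:pass} (with Proposition~\ref{prop:solve-error} converting solve residuals into velocity errors) give a decoded endpoint with residual at most $3c_k/4$. Item (iii) also identifies this endpoint with the backend output. Hence $c_k\le(3/4)^k c_0\le(3/4)^k2^\ell$.

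The pass count comes from a logarithm estimate. Since $\log_2(4/3)>0.415>1/3$, we get $(3/4)^{3(b+\ell)}\le2^{-(b+\ell)}$. Therefore after $k=3(b+\ell)$ passes the residual is at most $2^{-b}$, which is \eqref{eq:objective}; the run may stop earlier if the target is reached. Each pass consumes at most one channel, so at most $3(b+\ell)$ channels are used and provisioning that many suffices.

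The bit bound follows from item (v): each pass, including programming, certification, inference, linear algebra and rounding, costs at most $A(S+Q+1)^d$, and there are at most $3(b+\ell)$ passes. Polynomiality then follows from item (iv) and polynomial $\ell$.

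For travel, pass $k$ moves at most $c_k/\tau$ by assumption, and each latch adds at most $\rho$. Summing the geometric series gives
\[
\sum_k c_k/\tau\le c_0/\tau\sum_k(3/4)^k=4c_0/\tau,
\]
and the latches add $R_{\mathrm{used}}\rho$. Programmed-vector changes and rounding are additive, separately certified terms.

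For the budgeted statement, I will encode each pass as a transition of Theorem~\ref{thm:budgeted-certification}. The invariant is ``the current encoded state has residual at most $(3/4)^k2^\ell$ with valid certificates''. A transition accepts only when the decoded residual bound certifies at most $2^{-b}$. Soundness of acceptance follows from the pass lemma above. The per-transition cost bound is the one assumed, so part (i) gives termination with a certified answer or inconclusive. Choosing $P\ge3(b+\ell)$ plus the dispatch overhead, part (ii) reproduces the successful trace, so the pass and channel counts are preserved, with latches recorded as repair flags. Inconclusive outcomes carry no hardness claim, by the remark in that theorem.

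The main obstacle is the bookkeeping that ties the per-pass certified region to the post-latch parameters. The region of Theorem~\ref{thm:stable} must be centered at the post-latch start. The numerical budget \eqref{eq:numerical} must hold with the velocity errors that the backend actually incurs. The decoded endpoint must be the point whose residual is bounded, not just a point with small computed residual. I would handle this by making each of these requirements an explicit clause of the pass transition's acceptance test, all supplied by Assumption~\ref{ass:class}(i)--(iii), so that the induction only uses the abstract $3/4$ contraction.
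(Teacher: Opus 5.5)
Your proposal is correct and follows the paper's proof essentially step for step. Output-preserving activation plus the finite-pass theorem give $c_{k+1}\le\tfrac34c_k$. The estimate $(3/4)^3\le\tfrac12$ (your $\log_2(4/3)>1/3$) then yields the $3(b+\ell)$ pass and channel bounds. Summing the per-pass backend cost and the geometric series gives the bit-cost and travel bounds. Finally, Theorem~\ref{thm:budgeted-certification}, used with the all-input transition-cost hypothesis, supplies soundness and trace reproduction. Your explicit invariant, and your recording of latches as repair flags, are simply a more detailed version of the paper's one-line appeal to that wrapper.
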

\begin{proof}
Pass activation preserves the output, so the residual used in the pass
is also the residual of the preceding encoded endpoint. By the finite
pass theorem, $c_{k+1}\le(3/4)c_k$ whenever a nontrivial accepted pass is
executed. An already successful endpoint can be retained with no further
latches. Inductively,
\[
 c_k\le(3/4)^kc_0.
\]
Because $(3/4)^3=27/64\le1/2$,
\[
 c_{3(b+\ell)}\le2^{-(b+\ell)}2^\ell=2^{-b}.
\]
This argument requires contracts for only the finite run, not an infinite
extension of it. Each pass consumes at most one distinct unused channel,
so summing the latch count gives at most $3(b+\ell)$.

The encoded backend run is defined recursively by applying the selected
pass backend to its preceding encoded output at the prescribed precision.
The pass certificate identifies that output's decoded parameter with the
finite numerical endpoint, tying convergence to the same calls whose work
is counted. For every call, its complete bit cost is bounded by
$A(S+Q+1)^d$. Summing over $3(b+\ell)$ calls gives
$3(b+\ell)A(S+Q+1)^d$. The degree and coefficient are uniform for the
chosen algorithm family. If the budgets are polynomially bounded in
instance length and $b$, their composition is polynomial. Real-arithmetic
operation counts alone would not justify this conclusion.

For the travel statement,
\[
 \sum_{k<K}c_k\le c_0\sum_{k<K}(3/4)^k\le4c_0.
\]
Adding unit-time pass travel $c_k/\tau$ and the latch movements gives the
displayed bound. If each approximate velocity additionally has a
$\zeta_k$ error, unit-time pass travel additionally costs at most
$\sum_k\zeta_k$. Programming vectors in the parameter metric similarly
needs its own bound, even though their dormant output contribution is
zero. Pointwise domain safety of an accepted pass does not automatically
prove coverage of its entire certification ball by a previously chosen
global domain ball.
Finally apply Theorem~\ref{thm:budgeted-certification} with the training
answer predicate. Pass certificates justify acceptance on the original
successful trace; the wrapper preserves it when the clock is sufficient.
The all-input cost requirement also covers unsuccessful searches and
checks, which Assumption~\ref{ass:class} alone does not bound.
\end{proof}

\begin{remark}[Reserve width and computational size]
The guarantee counts channels, not stored scalar entries. Its bound is
$O(b+\ell)$, independent of output dimension \emph{as a count}, while
gate realization and matrix-free products still depend on $m$ and $n$.
Programmable rank-one channels store $O(nR)$ vector entries. The theorem
does not guarantee that a rank-one gate can realize arbitrary batch
directions. For a fixed $q$, its image dimension is at most the write
dimension; this restriction is not removed by increasing latch amplitude.
\end{remark}

\subsection{Practical implementation: matrix-free loaded training}
\label{sec:matrix-free}
The regularized solve in the trainer can be implemented without assembling
the batch training Jacobian or forming a matrix inverse. Its primitive
operations are a Jacobian-vector product $v\mapsto Jv$ and an adjoint
product $q\mapsto J^*q$. The operator passed to an iterative solver is
\[
 \mathcal Nq=J(J^*q)+\tau^2q,
 \qquad \mathcal N\widehat w\simeq\Delta,
 \qquad \widehat v\simeq J^*\widehat w.
\]
One application of $\mathcal N$ uses one product of each kind and vector
operations. This replaces storage of an $m\times p$ Jacobian and an
$m\times m$ normal matrix by product routines and solver workspace.
The state solves, recurrence factors, batch states, and programmed channels
still have their own storage costs.

\paragraph{Products through an equilibrium.}
For batch sample $i$, cache its accepted equilibrium $x_i$ and activation
derivative $D_i$, and set $E_i=I-D_iW_{\mathrm{eff}}$.
A parameter direction with induced perturbations
$\delta W_{\mathrm{eff}},\delta U,\delta c,\delta w,\delta c_o$ gives
\[
 E_i\delta x_i
   =D_i(\delta W_{\mathrm{eff}}x_i+\delta Uu_i+\delta c),
 \qquad (Jv)_i=x_i^T\delta w+\delta c_o+w^T\delta x_i.
\]
For an output covector $q$, the corresponding adjoint state solve is
$E_i^T\lambda_i=q_iw$. Contracting $D_i\lambda_i$ with the parameter
directions gives the recurrent and input contributions to $J^*q$;
the readout contributions are $\sum_iq_ix_i$ and $\sum_iq_i$.
These contractions use the structured weight parameterization, including
the active bilinear channels, rather than a dense recurrent gradient.
The factorized Jacobian reduction of Theorem~\ref{thm:reduction} supplies
state and adjoint solves through factorizations of the local blocks and
the reduced matrix; no explicit inverse need be stored. Active channels
must be included in those factors and conditioning certificates.
The state Jacobian can be nonsymmetric, so conjugate gradients is used
for the regularized normal system, not automatically for the state solve.

\paragraph{Iterative solution and conditioning.}
For $\tau>0$, $\mathcal N$ is symmetric positive definite even when $J$
is rank deficient. If $\|J\|\le M$, its spectral condition number satisfies
\[
 \kappa(\mathcal N)\le1+M^2/\tau^2.
\]
Conjugate gradients (CG) therefore provides a standard matrix-free solver.
In exact arithmetic, its classical estimate gives a sufficient iteration
count of order
$O(\sqrt\kappa\log(2+\kappa\|\Delta\|/\xi))$ for normal residual tolerance
$\xi$, with $\kappa=\kappa(\mathcal N)$~\cite{barrett1994templates}.
A symmetric positive definite preconditioner may reduce this count;
constructing and applying it are part of the backend cost.
Regularization prevents singularity, but small $\tau$ can still cause
expensive solves. Polynomial cost requires controlled conditioning,
product precision, and encoded arithmetic, as in Assumption~\ref{ass:class}.

\paragraph{Acceptance with inexact products.}
The stopping test must bound the residual of the true normal equation.
Suppose a checked product routine returns $\widehat a$ with
$\|\widehat a-\mathcal N\widehat w\|\le\varepsilon_{\mathrm{act}}$,
and $\|\widehat\Delta-\Delta\|\le\varepsilon_\Delta$.
Including arithmetic error in these enclosures gives the acceptance bound
\[
 \|\mathcal N\widehat w-\Delta\|
 \le\|\widehat a-\widehat\Delta\|
       +\varepsilon_{\mathrm{act}}+\varepsilon_\Delta\le\xi.
\]
Proposition~\ref{prop:solve-error} then converts $\xi$ into solution and
velocity error. If the final adjoint product has error at most
$\varepsilon_v$, the computed velocity satisfies
\[
 \|\widehat v-J^*w\|\le M\xi/\tau^2+\varepsilon_v.
\]
This quantity and whole-update rounding must fit the finite-pass budget
of Theorem~\ref{thm:pass}. A solver's recursively updated residual alone
does not account for inaccurate equilibrium states, derivative products,
or rounding. Those errors need validated bounds, and approximate witnesses
need the slack checks described after Proposition~\ref{prop:solve-error}.

\paragraph{A practical execution policy.}
At each inner update, compute certified states, apply the two product
routines in the regularized solver, check its true-residual enclosure,
and form the accepted velocity. Warm starts and reuse of factorizations
at the same parameter point can reduce work. A parameter change requires
updated products or a certified bound for reusing them.
Trial points must satisfy the inference and pass-region tests, and an
unused channel is programmed only after its residual-aligned realization
and stability checks succeed. Line searches and adaptive damping are
useful practical choices, but their accepted updates must satisfy the
finite-pass contract to retain the theorem's guarantee. The loaded trigger
requires no Lanczos margin monitoring or spectral decomposition.

The experiments in Section~\ref{sec:numerics} already use matrix-free
products and iterative normal solves; their product counts illustrate this
implementation route.
This discussion describes
a practical candidate backend within the stated certification contracts.

\section{An exponential obstruction to the global finite-pass certificate}
\label{sec:exponential-pass}
The polynomial backend hypothesis is substantive even when inference is
well conditioned and a repair is successfully realized. We construct a
three-state $\tanh$ DEQ for which a pass certified by the single global
curvature and speed budget of Theorem~\ref{thm:pass} requires exponentially
many updates in the binary input length. The obstruction concerns that
certificate and a specified trainable coordinate, not all algorithms for
the training problem. In fact, a directly certified update solves the
same interpolation task efficiently.

\subsection{The training instance and successful repair}
For an integer $L\ge5$, put $\kappa=2^{-L}$ and consider
\[
 x=\tanh\left(a\theta pq^Tx+\begin{pmatrix}1\\0\\0\end{pmatrix}\right),
 \qquad p=\begin{pmatrix}0\\1\\\kappa^{-1}\end{pmatrix},
 \quad q=\begin{pmatrix}1\\0\\0\end{pmatrix},
 \qquad F=x_2+\frac{\kappa}{32}x_3.
\]
There is one training sample, with target $c=1/128$. All supplied
coefficients are rational and their total binary encoding length is
$\Theta(L)$. The large entry $\kappa^{-1}=2^L$ requires only $L+1$
binary digits. Initially $a=\theta=0$. Program $p,q$ while the channel
is dormant, then activate $a=\kappa$ without changing the output.
The backend subsequently fixes $a,p,q$, the readout and all other
coefficients, and trains only $\theta$. All parameter balls and derivatives
in this section refer to this one-dimensional trainable slice.

Write $t=\tanh1$, so $3/4<t<4/5$. These inequalities follow from
$7<e^2<9$. After activation, the unique equilibrium and output are
\[
 x_1=t,\qquad x_2=\tanh(\kappa t\theta),\qquad
 x_3=\tanh(t\theta),\qquad
 F_\kappa(\theta)=\tanh(\kappa t\theta)
                  +\frac{\kappa}{32}\tanh(t\theta).
\]
The initial residual is $c$. For example, the fixed accuracy request
$b=8$ asks for residual at most $c/2$. Thus the accuracy request need
not grow with $L$ in the lower bound below.

\begin{proposition}[Conditioned inference and stable repair]
\label{prop:exp-repair}
Inference has a unique equilibrium for every $\theta$. At that equilibrium,
$\|H_x^{-1}\|<3$. On $|\theta|\le R=1/(8\kappa)$, the stable-repair
theorem holds with
\[
 \rho=\kappa,\qquad \tau=\kappa/8,\qquad
 \varepsilon=\kappa/4,\qquad \omega=3\kappa/64.
\]
A derivative-Lipschitz bound on this ball is $K=\kappa/8$.
\end{proposition}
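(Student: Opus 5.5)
The plan is to compute everything in closed form, exploiting the triangular structure of the equilibrium equation. After activation the recurrent weight is $W=\kappa\theta\,pq^T$, whose only nonzero column is the first. Since $p_1=0$, the first state equation decouples and gives $x_1=t$. The remaining two rows then give $x_2=\tanh(\kappa t\theta)$ and $x_3=\tanh(t\theta)$ explicitly.

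\emph{Uniqueness and inference conditioning.} Uniqueness follows by forward substitution: each coordinate is determined by the previous ones. For conditioning, write $H_x=I-DW=I-ce_1^T$ with
\[
 c=\bigl(0,\ \kappa\theta\operatorname{sech}^2(\kappa t\theta),\ \theta\operatorname{sech}^2(t\theta)\bigr)^T.
\]
Since $e_1^Tc=0$, the inverse is exactly $H_x^{-1}=I+ce_1^T$, so $\|H_x^{-1}\|\le1+\|c\|$. The elementary bound $\sup_y|y|\operatorname{sech}^2y<1/2$, applied after the substitutions $y=\kappa t\theta$ and $y=t\theta$ and combined with $t>3/4$, bounds each nonzero entry of $c$ by $2/3$. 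Hence $\|c\|<1$ and $\|H_x^{-1}\|<2<3$, uniformly in $\theta$.

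\emph{Repair constants.} The trainable slice is one-dimensional, so
\[
 J(\theta)=F_\kappa'(\theta)=\kappa t\operatorname{sech}^2(\kappa t\theta)+\tfrac{\kappa t}{32}\operatorname{sech}^2(t\theta),
\]
and $d=\pm1$ is the sign of the displacement. I would take $e=d$, so that $\|e\|\le1$.
\begin{itemize}
\item At $\theta_0=0$ we have $J(0)=\tfrac{33}{32}\kappa t$. The bounds $3/4<t<4/5$ give $|\kappa-J(0)|\le\kappa\max\{1-\tfrac{99}{128},\tfrac{33}{40}-1\}<\kappa/4$, which is the stated $\varepsilon$.
\item On $|\theta|\le1/(8\kappa)$ the first term of $J$ has argument $|\kappa t\theta|\le1/10$. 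It therefore varies by at most $\kappa t\tanh^2(1/10)\le\kappa/100$. The second term varies by at most $\kappa t/32\le\kappa/40$. Together $\omega\le\kappa(1/100+1/40)<3\kappa/64$.
\item Condition \eqref{eq:stable} reads $2(\kappa/8+\kappa/4+3\kappa/64)=\tfrac{54}{64}\kappa<\kappa=\rho$. Theorem~\ref{thm:stable} then applies.
\end{itemize}

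\emph{Lipschitz constant.} Differentiating once more gives
\[
 F''_\kappa=-2\kappa^2t^2(\operatorname{sech}^2\tanh)(\kappa t\theta)-\tfrac{\kappa t^2}{16}(\operatorname{sech}^2\tanh)(t\theta).
\]
Using $|\operatorname{sech}^2y\tanh y|\le2/(3\sqrt3)<2/5$, $t^2<16/25$, and $\kappa\le1/32$, each term is at most $\kappa/50$. Hence $K=\kappa/8$ is a valid bound, with room to spare.

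The only mildly delicate point is keeping these elementary transcendental bounds rigorous and rational. These are $\sup|y|\operatorname{sech}^2y$, $\sup|\operatorname{sech}^2y\tanh y|$ and $\tanh(1/10)$. I would certify them by calculus: the maximizer of $\operatorname{sech}^2y\tanh y$ is at $\tanh^2y=1/3$, and the maximum of $y\operatorname{sech}^2y$ is bounded via $y\operatorname{sech}^2y\le y/(1+y^2)$. These bounds are combined with the already certified $3/4<t<4/5$. All remaining estimates are then finite rational inequalities.
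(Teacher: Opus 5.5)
Your proposal is correct and follows essentially the paper's route: triangular forward substitution, exact inversion of the nilpotent rank-one perturbation as $I+ce_1^T$, the closed-form scalar $J=F_\kappa'$, and elementary $\tanh$ estimates for $\varepsilon$, $\omega$ and $K$. The only differences are sharper constants. You use $\sup_y|y|\operatorname{sech}^2y\le 1/2$ and $|\operatorname{sech}^2y\tanh y|\le 2/(3\sqrt3)$, where the paper uses the cruder bounds $|z|\operatorname{sech}^2z\le1$ (giving $1+\sqrt2/t<3$) and $|F_\kappa''|\le2\kappa^2+\kappa/16$, so you obtain the stated conclusions with extra slack.
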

\begin{proof}
The equilibrium equations are triangular. The state Jacobian is $I-A$,
where only $A_{21}=\kappa\theta\operatorname{sech}^2(\kappa t\theta)$
and $A_{31}=\theta\operatorname{sech}^2(t\theta)$ can be nonzero.
Since $A^2=0$, its inverse is $I+A$. The inequality
$|z|\operatorname{sech}^2z\le1$ gives
$\|H_x^{-1}\|\le1+\sqrt2/t<3$.

Let $J=F_\kappa'$. Before activation, the $\theta$-derivative is zero;
after activation $J(0)=(33/32)t\kappa$. In particular,
$|\kappa-J(0)|\le\kappa/4$. Furthermore,
\[
 0\le J(0)-J(\theta)
 =\kappa t\tanh^2(\kappa t\theta)
   +\frac{\kappa t}{32}\tanh^2(t\theta)
 \le\kappa\left((\kappa\theta)^2+\frac1{32}\right)
 \le\frac{3\kappa}{64}.
\]
Here $|\tanh z|\le\min\{|z|,1\}$ was used. With the unit parameter
direction $e=1$ and output direction $d=1$, the stability test is
$2(\tau+\varepsilon+\omega)=27\kappa/32<\rho$.
Finally, differentiating again yields
$|F_\kappa''|\le2\kappa^2+\kappa/16\le\kappa/8$, since
$\kappa\le1/32$.
\end{proof}

\subsection{An exponential lower bound for the certificate}
\begin{theorem}[Exponential globally certified pass length]
\label{thm:exp-pass}
Let $L\ge10$. Consider a pass from $\theta_0=0$ with fixed displacement
$\Delta=c$, using $N$ updates of size $\widehat v_i/N$. Suppose a
single ball $[-R,R]$, a valid Lipschitz constant $K$ for $J$ on that
ball, a speed bound $W\le R$, and an error budget $\eta\ge0$ certify
the pass through
\[
 |\widehat v_i|\le W,\qquad
 |J(\theta_i)\widehat v_i-c|\le c/2+\eta,\qquad
 KW^2/N+\eta\le c/4.
\]
Then $N\ge2^{L-14}$. Consequently an implementation that executes these
$N$ updates requires $\Omega(2^L)$ operations. The bound applies regardless
of how its lifted velocities or regularization thresholds are chosen.
\end{theorem}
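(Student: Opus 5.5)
The idea is to use the lift inequality to force a large speed bound $W$, and then to show that any valid Lipschitz constant $K$ on a ball of radius at least $W$ is of order $\kappa$. The curvature budget $KW^2/N\le c/4$ then forces $N$ to be exponentially large. No bound on the updates beyond the three displayed certificate inequalities is needed.

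\emph{Step 1 (upper bound on the column).} From the formula in the proof of Proposition~\ref{prop:exp-repair},
\[
 J(\theta)=\kappa t\operatorname{sech}^2(\kappa t\theta)+\tfrac{\kappa t}{32}\operatorname{sech}^2(t\theta).
\]
Hence $0\le J(\theta)\le J(0)=\tfrac{33}{32}t\kappa<\kappa$ for every $\theta$, using $t<4/5$.

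\emph{Step 2 (speed is forced large).} Since $\eta\ge0$ and $KW^2/N\ge0$, the budget gives $\eta\le c/4$. The lift condition then yields $J(\theta_i)\widehat v_i\ge c-c/2-\eta\ge c/4$. This uses only $N\ge1$ and the step $i=0$. Combined with Step 1 it gives
\[
 W\ge|\widehat v_0|\ge \frac{c}{4\kappa}=2^{L-9},
\]
and therefore $R\ge W\ge 2^{L-9}\ge2$ because $L\ge10$.

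\emph{Step 3 (the Lipschitz constant cannot be small).} Because $[0,1]\subset[-R,R]$, validity of $K$ requires $K\ge J(0)-J(1)$. Evaluating the difference formula from Proposition~\ref{prop:exp-repair} and dropping its nonnegative first term gives
\[
 J(0)-J(1)\ge\frac{\kappa t}{32}\tanh^2 t .
\]
With $t>3/4$ and $\tanh t\ge\tanh(3/4)\ge 3/4-(3/4)^3/3>1/\sqrt3$, this yields $K>\tfrac{3}{4}\cdot\tfrac13\cdot\tfrac{\kappa}{32}=\kappa/128$. Only elementary certified inequalities enter here (the series bound $\tanh x\ge x-x^3/3$ for $x\ge0$, and monotonicity).

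\emph{Step 4 (conclusion).} The budget gives $N\ge 4KW^2/c$. Substituting Steps 2 and 3,
\[
 N\ge \frac{4}{c}\cdot\frac{\kappa}{128}\cdot\frac{c^2}{16\kappa^2}=\frac{c}{512\kappa}=\frac{2^{L}}{2^{16}}\cdot 4=2^{L-14}.
\]
Each update is at least one operation, which gives the $\Omega(2^L)$ claim. The argument never used how $\widehat v_i$ or $\tau$ were chosen, so the bound is independent of those choices.

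The main obstacle is Step 3. The theorem quantifies over all admissible certificates, so I must lower-bound \emph{every} valid $K$, not the convenient $K=\kappa/8$ from Proposition~\ref{prop:exp-repair}. The key observation is that the forced speed makes the ball contain $\theta=1$. There the small $\kappa/32$-weighted fast channel $\tanh(t\theta)$ has already bent by a constant fraction, so $J$ drops by order $\kappa$ over unit distance. The constants must be tracked carefully so that they land exactly at $2^{-14}$; the slack in $\tanh(3/4)^2>1/3$ is what makes this work.
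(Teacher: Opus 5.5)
\textbf{There is a genuine gap in Step 4.} Steps 1--3 are sound, and Step 3 is a valid variant of the paper's curvature bound. The paper evaluates $|F_\kappa''|$ at the interior point $\theta_*=1/t$. You use the difference quotient $J(0)-J(1)\ge\frac{\kappa t}{32}\tanh^2t$, which gives the same $K>\kappa/128$ and only needs $[0,1]\subset[-R,R]$. The failure is the final equality: $\frac{c}{512\kappa}=2^{-7}\cdot2^{-9}\cdot2^{L}=2^{L-16}$, not $2^{L-14}$, so the factor $4$ there is spurious. As written, your argument proves only $N>2^{L-16}$. That still gives the $\Omega(2^L)$ consequence, but not the stated bound $N\ge2^{L-14}$.

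\textbf{Why a sharper $K$ cannot fix this.} You take $\eta\le c/4$ in the speed bound, giving $W\ge c/(4\kappa)$. You then take $\eta\ge0$ in the budget, giving $N\ge4KW^2/c$. These two worst cases cannot hold at once, and treating them separately loses exactly a factor $4$. Even with the exact $J(0)$, your route would need $K\ge J(0)^2/(32\kappa)\approx0.019\kappa$. But $\sup_{\theta\in\mathbb R}|J'(\theta)|\le\frac{2}{3\sqrt3}\,t^2\kappa\,(2\kappa+\tfrac1{16})\approx0.014\kappa$, so $K$ can be as small as about $0.014\kappa$, and no valid lower bound on $K$ reaches the needed value.

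\textbf{The missing step} is to keep $\eta$ in both inequalities. First, $W\ge(c/2-\eta)/J(0)$. Second, $K,W>0$ forces $\eta<c/4$, so $N\ge KW^2/(c/4-\eta)$. The identity $(c/2-\eta)^2-c(c/4-\eta)=\eta^2\ge0$ then yields
\[
N\ge\frac{K(c/2-\eta)^2}{J(0)^2(c/4-\eta)}\ge\frac{Kc}{J(0)^2}>\frac{c}{128\kappa}=2^{L-14},
\]
and your Step 3 bound is already strong enough for this.
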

\begin{proof}
The numerical budget gives $\eta\le c/4$. At the first update,
\[
 W\ge\frac{c/2-\eta}{J(0)}\ge\frac{c}{4\kappa},
\]
because $0<J(0)<\kappa$. Hence $R\ge2$ for $L\ge10$, and the
interior of the certification ball contains $\theta_*=1/t<4/3$.
Both terms of $F_\kappa''(\theta_*)$ are negative, so
\[
 K\ge|F_\kappa''(\theta_*)|
 \ge\frac{\kappa}{16}t^3(1-t^2)
 >\frac{243\kappa}{25600}>\frac{\kappa}{128}.
\]
In particular $K,W>0$, and the numerical budget implies $\eta<c/4$.
Using $(c/2-\eta)^2-c(c/4-\eta)=\eta^2\ge0$, we obtain
\[
 N\ge\frac{KW^2}{c/4-\eta}
 \ge\frac{K(c/2-\eta)^2}{J(0)^2(c/4-\eta)}
 \ge\frac{Kc}{J(0)^2}
 \ge\frac1{16384\kappa}=2^{L-14}.
\]
Each explicitly executed update costs at least one operation. Since the
instance length is $\Theta(L)$, this is exponential in that length.
This conclusion does not cover algorithms that bypass the updates or
use a different certificate.
\end{proof}

The bound is not a consequence of deliberately overestimating curvature:
the proof bounds every valid $K$ from below using an actual second
derivative inside the required ball. It also allows inexact velocities.
A concrete admissible schedule is
\[
 K=\kappa/8,\qquad W=R=1/(8\kappa),\qquad
 N=2/\kappa=2^{L+1},\qquad \eta\le c/8.
\]
Indeed, $KW^2/N=c/8$. The exact scalar Tikhonov velocity is
$v=cJ/(J^2+\tau^2)$ with $\tau=\kappa/8$; Proposition~\ref{prop:exp-repair}
and Theorem~\ref{thm:tikhonov} supply its lift bounds. Since
$|J|\le\kappa$, velocity errors at most $2^{-12}$ fit both the displayed
speed and defect budgets. Whole-update errors of order $2^{-L-20}$
contribute only a constant small velocity error after multiplication by
$N$. Stored states have magnitude $O(2^L)$, so a dyadic implementation
can use $O(L)$ bits at this fixed target accuracy. Activation evaluations
at large arguments can use the bound
$1-|\tanh z|\le2e^{-2|z|}$ to certify saturation; other arguments require
only precision growing polynomially with $L$. These observations describe
a finite-precision realization, not a verified low-level implementation.

The sharper variation bound $\omega$ in Proposition~\ref{prop:exp-repair}
is essential. Substitution of the coarse bound $KR$ into the extra
sufficient stability test of Section~\ref{sec:pass-schedule} would reject this schedule:
$KR=1/64$ does not shrink with $\kappa$. The example therefore concerns
the general finite-pass certificate, not an accepted execution of that
more restrictive sufficient test.

\subsection{A fast update and the scope of the obstruction}
\begin{proposition}[Directly certified interpolation]
\label{prop:exp-direct}
For $L\ge5$, the rational update $\theta=c/\kappa$ satisfies
$|F_\kappa(\theta)-c|<c/3$, and hence meets the fixed accuracy request
$b=8$. Its parameter encoding has $O(L)$ bits.
\end{proposition}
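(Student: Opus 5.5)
The plan is to evaluate $F_\kappa$ at $\theta=c/\kappa$ in closed form and bound each of its two terms separately. With $c=1/128$ and $\kappa=2^{-L}$, the update is $\theta=2^{L-7}$. The equilibrium formula after activation gives
\[
 F_\kappa(\theta)=\tanh(tc)+\frac{\kappa}{32}\tanh(t\theta).
\]
The encoding claim is immediate: $\theta$ is a power of two with exponent $L-7\ge -2$, so it has $O(L)$ bits.

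\textbf{First term.} Using $3/4<t<4/5$, I aim to show $\tanh(tc)$ lies strictly between $c/2$ and $c$. For the upper bound, $0<\tanh z\le z$ gives $\tanh(tc)\le tc<4c/5$. For the lower bound, $\tanh z\ge z-z^3/3$ for $z\ge0$ gives $\tanh(tc)\ge tc(1-(tc)^2/3)$. Since $tc<1/128$, the cubic correction is negligible, so $\tanh(tc)>(3/4)c\,(1-10^{-4})$. Hence
\[
 c-\tanh(tc)\in\bigl(c/5,\;c/4+10^{-4}c\bigr).
\]

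\textbf{Second term.} It is nonnegative and at most $\kappa/32=2^{-L-5}$. For $L\ge5$ this is at most $2^{-10}=c/8$. It only adds to the output, so it moves $F_\kappa$ toward $c$, since the first term undershoots $c$.

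\textbf{Combining.} We get $c-F_\kappa(\theta)\le c-\tanh(tc)<c/4+10^{-4}c<c/3$. For the other side, $F_\kappa(\theta)-c\le tc+c/8-c<(4/5+1/8-1)c<0$, so $F_\kappa(\theta)$ never overshoots $c$. Therefore $|F_\kappa(\theta)-c|<c/3$. Finally, $c/3<c/2=2^{-8}$, which is exactly the fixed accuracy request $b=8$ described before Theorem~\ref{thm:exp-pass}.

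The only mildly delicate step is the lower bound on $\tanh(tc)$. Everything hinges on $t>3/4$, which is what keeps the undershoot below $c/4$ plus slack. The cubic Taylor inequality gives this with a wide margin.
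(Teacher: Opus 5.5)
Your proof is correct and follows essentially the same route as the paper's. Both arguments evaluate $F_\kappa(c/\kappa)=\tanh(tc)+(\kappa/32)\tanh(tc/\kappa)$, bound the first term below by $\tanh z\ge z-z^3/3$ with $t>3/4$ and above by $tc<4c/5$, and use $0\le\kappa/32\le c/8$ to conclude $2c/3<F_\kappa(\theta)<c$. The only differences are cosmetic: you use a relative rather than an absolute cubic correction, and you explicitly check $c/3<c/2=2^{-8}$. The paper also justifies the cubic inequality by integrating $\tanh^2z\le z^2$; you should cite or include that step.
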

\begin{proof}
At this point
$F_\kappa(\theta)=\tanh(tc)+(\kappa/32)\tanh(tc/\kappa)$.
For $z\ge0$, $\tanh z\ge z-z^3/3$, as follows by integrating
$1-\operatorname{sech}^2z=\tanh^2z\le z^2$.
Consequently
\[
 F_\kappa(\theta)>3c/4-c^3/3>2c/3,
 \qquad
 F_\kappa(\theta)<4c/5+\kappa/32\le4c/5+c/8<c.
\]
The stated error bound follows, and $\theta=2^{L-7}$ is rational.
\end{proof}

Thus this family is not an intrinsic hardness result for DEQ training,
nor a reduction from the threshold-network problems of Blum and Rivest
\cite{blum1992}. A global Taylor remainder budget combines a large
parameter displacement with curvature concentrated near the start and
can be exponentially pessimistic. Directional remainder certificates or
local subdivision may avoid that expense. The exponentially large
$\tau^{-1}$ and programmed-vector norm have short encodings; the complete
pass backend just exhibited nevertheless violates the polynomial work
promise in Assumption~\ref{ass:class}. The conditional training theorem
is therefore unaffected.

\section{Deep DEQ and feedforward extensions}
\subsection{A certificate class closed under composition}
Define the inference class through recursively annotated architectures.
Its leaves are certified implicit layers or explicit modules. Serial,
parallel and residual nodes propagate domain, sensitivity and evaluation
certificates; the implicit layers retain their architecture-specific
reduced-margin certificates. Closure means closure of these annotated
presentations with their quantitative budgets.

\begin{proposition}[Serial implicit inference closure]\label{prop:implicit-stack}
For a two-layer equilibrium stack
$H_1(u,x_1)=0$, $H_2(x_1,x_2)=0$, its state Jacobian is
\[
 \mathcal J=\begin{pmatrix}A&0\\C&D\end{pmatrix},\qquad
 \mathcal J^{-1}=\begin{pmatrix}A^{-1}&0\\-D^{-1}CA^{-1}&D^{-1}\end{pmatrix}.
\]
If $\norm{A^{-1}}\le K_1$, $\norm{D^{-1}}\le K_2$ and $\norm C\le P$,
then in the product max norm
\[
 \norm{\mathcal J^{-1}}\le\max\{K_1,K_2(1+PK_1)\}.
\]
Layerwise selected equilibria define a selected equilibrium of the stack;
layerwise uniqueness gives stack uniqueness on the corresponding domain.
\end{proposition}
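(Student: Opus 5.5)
The plan has three parts: derive the block Jacobian from the stacked residual, verify the inverse directly and bound it in the product max norm, then handle selection and uniqueness layer by layer.

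\textbf{Block Jacobian.} Write the stacked residual as $\mathcal H(x_1,x_2)=(H_1(u,x_1),H_2(x_1,x_2))$ with input $u$ fixed. Differentiating in $(x_1,x_2)$ gives $A=\partial_{x_1}H_1$, $C=\partial_{x_1}H_2$ and $D=\partial_{x_2}H_2$. The upper-right block vanishes because $H_1$ does not depend on $x_2$.

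\textbf{Inverse and norm bound.} I would not appeal to a Schur-complement lemma; instead I will verify the displayed inverse by multiplying it against $\mathcal J$ on both sides. The only nontrivial entry is the lower-left one, $CA^{-1}-DD^{-1}CA^{-1}=0$, and the same cancellation occurs in the other order. For the norm, I read ``product max norm'' as $\norm{(z_1,z_2)}=\max\{\norm{z_1},\norm{z_2}\}$. Take $(h_1,h_2)$ with this norm at most one and set $z=\mathcal J^{-1}(h_1,h_2)$. Then $\norm{z_1}=\norm{A^{-1}h_1}\le K_1$. Also $z_2=D^{-1}(h_2-CA^{-1}h_1)$, so $\norm{z_2}\le K_2(1+PK_1)$. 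Taking the maximum of the two gives the operator bound. This is routine submultiplicativity, applied with the same component norms in which $K_1,K_2,P$ are measured.

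\textbf{Selection and uniqueness.} Let $x_1^*$ be the selected equilibrium of layer one for input $u$. Let $x_2^*$ be the selected equilibrium of layer two for input $x_1^*$. Then $\mathcal H(x_1^*,x_2^*)=0$ by construction, so the pair is a selected stack equilibrium. For uniqueness, suppose $(x_1,x_2)$ is any zero of $\mathcal H$ in the product domain $\Omega_1\times\Omega_2$. Its first component solves $H_1(u,\cdot)=0$ in $\Omega_1$, so layer-one uniqueness forces $x_1=x_1^*$. Its second component then solves $H_2(x_1^*,\cdot)=0$ in $\Omega_2$, so layer-two uniqueness at input $x_1^*$ forces $x_2=x_2^*$. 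Here layer-two uniqueness must hold for the particular upstream input produced.

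\textbf{Main obstacle.} The delicate point is interpreting ``the corresponding domain.'' Layer-two certificates must apply at every upstream state that layer one can produce; in practice this means at the certified enclosure of $x_1^*$, not merely at the exact value. The plan is to state this as the requirement that $\Omega_2$'s certificate be uniform over $\Omega_1$ (or over the layer-one output enclosure). Once that is stated, the rest is the triangular-elimination induction already used in Proposition~\ref{prop:block-coverage}.
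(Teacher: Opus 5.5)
Your proposal is correct and follows the paper's argument: the paper obtains the inverse by forward substitution ($Av=r_1$, then $Dw=r_2-Cv$), which gives your componentwise bounds $\norm{v}\le K_1\norm{r}$ and $\norm{w}\le K_2(1+PK_1)\norm{r}$ in the product max norm, and it handles selection and uniqueness layer by layer with the same requirement that domain inclusion hold at each step. Checking the inverse by two-sided multiplication rather than deriving it by substitution is only a cosmetic difference.
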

\begin{proof}
Solve $Av=r_1$ first, then $Dw=r_2-Cv$. This gives the inverse formula
and bounds $\norm v\le K_1\norm r$,
$\norm w\le K_2(1+PK_1)\norm r$ in the product max norm.
Sequential layer selection proves existence of the selected stack state.
For uniqueness, the first layer fixes $x_1$ and the second then fixes $x_2$.
Domain inclusion must hold at each step.
\end{proof}

For an inference chain of Lipschitz constants $L_i$, enclosed local output
errors $\epsilon_i$ propagate as
$e_i\le L_i e_{i-1}+\epsilon_i$. Thus requested local precision must
account for downstream amplification. Parallel nodes use block diagonal
solves and sum their work; residual nodes add the explicit identity map.
A polynomial-size tree has polynomial total inference cost when its
propagated inverse, derivative, domain and precision budgets are polynomial
and its leaf algorithms have polynomial costs with controlled output sizes.
This is a closed certified subclass. A single feedback layer with fixed
interface dimension is not closed under arbitrary stacking: the total
interface dimension can grow,
and a chain of bounded sensitivities can have exponentially large gain.

The input-sweep continuation and finite tracker are treated separately in
Section~\ref{sec:inference-homotopy}. Saddle-coverage and reservoir-refresh
certificates are not needed for this inference closure statement.

\subsection{Training regularity of composed models}
\label{sec:extensions}
A finite stack of equilibrium modules supplies an output map after each
module has its own sound inference certificate. The loaded training
results then apply to the composed map. Polynomial depth alone does not
bound propagated derivative constants; these must be certified.

\begin{proposition}[Composition with trainable modules]\label{prop:composition}
For $h(z,\vartheta)=g(f(z),\vartheta)$,
\[
 Dh[v,w]=Dg[Df\,v,w].
\]
Using the product max norm, joint derivative bounds $L_f,L_g$ yield
$\norm{Dh}\le L_g\max\{L_f,1\}$. If the derivative Lipschitz constants
are $K_f,K_g$ and $f$ is $L_f$-Lipschitz, a sufficient derivative variation
bound is
\[
 K_h\le K_g\max\{L_f,1\}^2+L_gK_f.
\]
The direct downstream parameter contribution is included.
\end{proposition}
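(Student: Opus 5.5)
The plan is a chain-rule computation followed by two norm estimates, carried out in the product max norm $\norm{(v,w)}=\max\{\norm v,\norm w\}$. First I differentiate $h(z,\vartheta)=g(f(z),\vartheta)$ jointly in $(z,\vartheta)$. Writing $\Psi(z,\vartheta)=(f(z),\vartheta)$, we have $h=g\circ\Psi$ and $D\Psi[v,w]=(Df\,v,w)$. The chain rule then gives $Dh[v,w]=Dg[Df\,v,w]$ directly. The $w$ slot carries the direct downstream parameter contribution, which is why it must be kept rather than dropped.

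For the derivative bound, $\norm{D\Psi[v,w]}=\max\{\norm{Df\,v},\norm w\}\le\max\{L_f,1\}\norm{(v,w)}$. Here I use $\norm{Df}\le L_f$, reading the joint bound of $f$ as a bound on $Df$. Composing with $\norm{Dg}\le L_g$ gives $\norm{Dh}\le L_g\max\{L_f,1\}$.

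For the variation bound, take two points $\xi_1=(z_1,\vartheta_1)$ and $\xi_2=(z_2,\vartheta_2)$, and a direction $(v,w)$ of norm at most one. I split by adding and subtracting $Dg(\Psi(\xi_2))[Df(z_1)v,w]$:
\[
 Dh(\xi_1)-Dh(\xi_2)=\bigl(Dg(\Psi(\xi_1))-Dg(\Psi(\xi_2))\bigr)D\Psi(\xi_1)
 +Dg(\Psi(\xi_2))\bigl(D\Psi(\xi_1)-D\Psi(\xi_2)\bigr).
\]
The first term is at most $K_g\norm{\Psi(\xi_1)-\Psi(\xi_2)}\max\{L_f,1\}$. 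Since $f$ is $L_f$-Lipschitz, $\norm{\Psi(\xi_1)-\Psi(\xi_2)}\le\max\{L_f,1\}\norm{\xi_1-\xi_2}$, so this term is at most $K_g\max\{L_f,1\}^2\norm{\xi_1-\xi_2}$. In the second term, $D\Psi(\xi_1)-D\Psi(\xi_2)=(Df(z_1)-Df(z_2),0)$ has norm at most $K_f\norm{z_1-z_2}\le K_f\norm{\xi_1-\xi_2}$. Multiplying by $L_g$ and adding the two contributions gives $K_h\le K_g\max\{L_f,1\}^2+L_gK_f$.

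The only delicate point is bookkeeping rather than analysis. All constants must be measured in the same product max norm, and the Lipschitz and derivative bounds must hold on a convex (or segment-connected) domain containing $\Psi$ of the relevant region. Otherwise the Lipschitz estimate for $\Psi$ and the derivative-Lipschitz estimate for $g$ cannot be chained. I would state explicitly that $g$'s constants are assumed on a set containing the image of $\Psi$.
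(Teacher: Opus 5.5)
Your proposal is correct and follows the paper's proof essentially verbatim. Both use the same lift map $(z,\vartheta)\mapsto(f(z),\vartheta)$ with derivative norm at most $\max\{L_f,1\}$, the chain rule, and the same add-and-subtract splitting of $Dh(\xi_1)-Dh(\xi_2)$ into a $K_g\max\{L_f,1\}^2$ term and an $L_gK_f$ term. Your domain caveat is stricter than needed: convexity is not required, because both the Lipschitz hypothesis on $f$ and the derivative-Lipschitz hypothesis on $g$ are two-point estimates, so it suffices that $g$'s constants hold on a set containing the image of the lift.
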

\begin{proof}
Let $\mathcal L(z,\vartheta)=(f(z),\vartheta)$. Its derivative is
$D\mathcal L[v,w]=(Df\,v,w)$, whose operator norm in product max norms is
at most $\max\{L_f,1\}$. The chain rule gives $Dh=Dg\,D\mathcal L$ and
the first-order estimate.

For two points, write the derivative difference as
\[
 Dg(\mathcal Lz)D\mathcal L(z)-Dg(\mathcal Lz')D\mathcal L(z')
 =\bigl[Dg(\mathcal Lz)-Dg(\mathcal Lz')\bigr]D\mathcal L(z)
    +Dg(\mathcal Lz')\bigl[D\mathcal L(z)-D\mathcal L(z')\bigr].
\]
The lift map is $\max\{L_f,1\}$-Lipschitz; its derivative variation is
bounded by $K_f$. Taking norms gives
$K_g\max\{L_f,1\}^2+L_gK_f$. In particular, an independent downstream
parameter is passed through with derivative one. A product rule
$L_gL_f$ alone can omit this contribution.
\end{proof}
Equivalent Euclidean product norms introduce explicit norm-equivalence
factors; they must be budgeted before applying the Hilbert-space training
theorems. These compositional bounds concern smoothness, not adapter
realizability or equilibrium uniqueness of the entire stack.

For feedforward networks, no equilibrium existence certificate is needed.
Forward and backward products replace implicit sensitivity solves. The
same programming, column stability and finite-pass hypotheses remain;
placing adapters in an arbitrary layer does not automatically satisfy
them. This extension uses the same abstract training lemmas and the
parameter-aware composition rule rather than a separate training theory.

\section{Numerical evidence and formal verification scope}
\label{sec:numerics}

A Python prototype (included in ancillary files) compares pseudo-arclength continuation with loaded
Tikhonov loops on a compact digits $3$ versus $8$ task ($m=10$, hidden
width $7$, eight base parameters, eight dormant channels, five seeds).
The recorded aggregate values are:
\begin{table}[H]
\centering
\small
\begin{tabular}{lrrrrr}
\toprule
method & success & median RMSE & mean latches & $J$-products & time (s)\\
\midrule
pseudo-arclength & $3/5$ & $3.3\cdot10^{-12}$ & $2.6$ & $13273$ & $8.6$\\
loaded, fixed bank & $5/5$ & $1.6\cdot10^{-13}$ & $4.6$ & $820$ & $0.9$\\
loaded, programmable & $5/5$ & $3.8\cdot10^{-13}$ & $6.4$ & $700$ & $1.1$\\
\bottomrule
\end{tabular}
\caption{Recorded heuristic DEQ comparison on five digits-task seeds.
Success means RMSE below $10^{-8}$ after final polishing.}
\label{tab:deq-comparison}
\end{table}

The prototype maximizes witness pairing, and its equilibrium
flag measures numerical convergence rather than uniqueness. The values
illustrate the potential efficiency of a loaded trigger; they do not
validate the realization and region contracts of Theorem~\ref{thm:main}.

\paragraph{Feedforward comparison.}\label{par:feedforward-comparison}

The prototyped feedforward comparison includes a no-reserve full-training
control that succeeds on all five seeds. Both frozen-body adapter
configurations succeed on one of five seeds and exhaust their reserves.
This demonstrates why a reserve count must be distinguished from a
quantitative gate-realization guarantee.

\paragraph{Selected-branch inference through two folds.}
The deterministic floating-point experiment follows the oriented equilibrium
curve
\[
 H(u,x)=x-\tanh(2x+u)=0
\]
from $x=-4/5$ toward increasing $x$, through both simple folds, to the selected
crossing of $u=-0.51$ beyond the second fold. It uses an adaptive
pseudo-arclength predictor and a bordered Newton corrector. As a control, a
monotone input sweep uses the same residual and warm-started Newton corrections
but does not carry the branch orientation.

\begin{table}[H]
\centering
\small
\begin{tabular}{rrrrrr}
\toprule
$b$ & charts & corrections & selected residual & endpoint error & input-sweep branch error\\
\midrule
8  & 35 & 31 & $4.25\cdot10^{-6}$  & $1.76\cdot10^{-5}$ & $1.77$\\
16 & 35 & 53 & $3.79\cdot10^{-10}$ & $1.57\cdot10^{-9}$ & $1.77$\\
24 & 35 & 69 & $3.79\cdot10^{-10}$ & $1.57\cdot10^{-9}$ & $1.77$\\
32 & 35 & 81 & $<10^{-15}$          & $<10^{-15}$         & $1.77$\\
40 & 35 & 91 & $<10^{-15}$          & $<10^{-15}$         & $1.77$\\
\bottomrule
\end{tabular}
\caption{Floating-point continuation of the selected tanh branch. The chart
count is independent of requested precision in this experiment, while
corrector work grows with precision. The monotone input sweep ends at a
different equilibrium despite an essentially zero terminal residual.}
\label{tab:fold-numerics}
\end{table}

Every pseudo-arclength run crossed two sign changes of $H_x$. Along the
$b=32$ trace, the sampled full-row margin $\sqrt{H_x^2+H_u^2}$ stayed above
$0.448$, the inverse norm of the unit-tangent bordered system stayed below
$2.231$, and $|H_x|$ fell to $0.0292$ near a fold. The selected endpoint was
$x=0.787790669396$, whereas the monotone input sweep returned
$x=-0.986134444248$. Thus residual accuracy alone did not establish the
selected-branch answer predicate. The terminal reference value was computed
independently by 80-digit decimal bisection on a fixed bracket. The tracker
itself uses ordinary double precision, not directed interval arithmetic, so
this is evidence about the mechanism, not a certified execution or evidence
for the general polynomial-complexity theorem.

The quantitative statements above are machine-checked in Lean 4 at the
level stated with each result, from the factorized kernel reduction and
the training passes to the budgeted wrapper, the encoded tracker and
scalar backend, the local chart layer, and the geometric and
algorithmic content of the fold theorem. The correspondence between
each statement and its checked declarations is recorded in source
comments at the statement; the complete development is a single file in
the ancillary material with a pinned toolchain, and its audited
declarations use only the standard logical axioms. Results presented as
paper proofs---the trial search, terminal monitor, and assembly theorem
of Section~\ref{sec:automatic-continuation}, the arclength
reparametrization and rational certificate synthesis of
Section~\ref{sec:fold-continuation}, backend cost summation,
source-specific adapters, and the archived numerical runs---are not
machine-checked.

\subsection{Certified coverage and boundaries at a glance}
\label{sec:coverage-summary}
Table~\ref{tab:coverage} summarizes the accepted problem classes, the
quantitative promises each route consumes, the guarantee it returns, and
its cost regime. Every row
is a conditional certificate: on instances outside its promise class the
budgeted wrapper of Theorem~\ref{thm:budgeted-certification} still
terminates, returning inconclusive rather than an uncertified answer.
The final two rows record proved boundaries, not failures of soundness:
the finite-pass lower bound concerns one specific global certificate,
and the same instance is solved directly by
Proposition~\ref{prop:exp-direct} in $O(L)$ bits.

\begin{table}[t]
\centering
\scriptsize
\begin{tabular}{@{}p{0.17\linewidth}p{0.33\linewidth}p{0.28\linewidth}p{0.14\linewidth}@{}}
\toprule
route & required promises & guarantee & bit cost\\
\midrule
input sweep\newline
(Thms.~\ref{thm:input-sweep}, \ref{thm:rounded-tracker})
& $C^2$ residual, certified start root, boundary exclusion, tube radius
$\rho$ with $\norm{H_x^{-1}}\le K$, $\norm{H_s}\le B$, second-derivative
bound; polynomial encodings
& unique selected branch tracked to $2^{-b}$
& polynomial in $L+b$\\
\addlinespace
default continuation for certified solver classes\newline
(Thm.~\ref{thm:solver-homotopy}, Cor.~\ref{cor:published-coverage})
& instance certified for an established solver: contraction with gap
$\gamma$ in weighted, auxiliary, or logarithmic coordinates; certified
evaluation and decoding
& same equilibrium to $2^{-b}$ by the one continuation mechanism;
explicit coverage of the quantified regimes
of~\cite{elghaoui2021,winston2020,jafarpour2021,sittoni2024,sato2026}
& $O(\gamma^{-3}(p{+}1))$ solver calls; polynomial if $\gamma^{-1}$
polynomial\\
\addlinespace
outlined exceptions: direct solutions\newline
(Sec.~\ref{sec:solver-inheritance})
& separately certified class without a contraction gap, e.g.\ triangular
ReLU with $A_{ii}<1$
& equilibrium by direct rational substitution, outside the continuation
route
& polynomial\\
\addlinespace
fold continuation: new class\newline
(Thm.~\ref{thm:fold-extension}, Ex.~\ref{ex:tanh-fold})
& row-rank margin $\sigma$, curve length $L$, $C^2$ bound, terminal
bracket with $|du/d\ell|\ge\beta$; chart-validation contracts
& selected branch through simple folds to $2^{-b}$, where no inherited
contraction certificate exists on the route; $O(1{+}L/r)$ charts
& polynomial\\
\addlinespace
automatic charts\newline
(Lem.~\ref{lem:auto-box}, Thm.~\ref{thm:auto-assembly})
& certified seed and orientation; certified activation evaluators;
neighborhood, margin, variation, and
length bounds; terminal monitor with margins $\tau,\beta$
& self-generated chart path; selected equilibrium to $2^{-b}$ or
inconclusive
& polynomial given an inverse-polynomial progress scale\\
\addlinespace
block elimination\newline
(Sec.~\ref{sec:block-extension})
& permuted block-triangular tanh DEQ with per-block margins
& equilibrium to $2^{-b}$ even when the full Euclidean inverse bound is
$2^{\Theta(L)}$
& polynomial precision overhead\\
\addlinespace
loaded training\newline
(Thm.~\ref{thm:main})
& gate realization and column stability on pass regions; well-posed
inference; finite-update budgets
& interpolation to $2^{-b}$ in $O(b{+}\ell)$ passes
& polynomial\\
\midrule
global finite-pass certificate\newline
(Thm.~\ref{thm:exp-pass})
& single ball, curvature, and speed budget on the $\kappa=2^{-L}$
three-state instance
& certificate forces $N\ge2^{L-14}$ updates despite conditioned
inference and a successful repair
& exponential for this certificate\\
\addlinespace
chart-count boundary\newline
(Sec.~\ref{sec:automatic-continuation})
& polynomial encoding length without a positive progress scale
& soundness retained; completeness may need exponentially many charts
& possibly exponential\\
\bottomrule
\end{tabular}
\caption{Certified coverage and proved boundaries. Promises are inputs
to the certificates; no row asserts unconditional convergence, and every
route reports certified success or inconclusive within its budget.}
\label{tab:coverage}
\end{table}

\section{Conclusion}
We have developed a certified continuation framework for equilibrium
computation and interpolation training in DEQs under quantitative
certificate and backend promises. The
operational outcome is certified success or inconclusive termination
within a fixed polynomial budget. Supplied old inference certificates
retain zero inference repairs, while sequential block elimination adds
explicit coverage with only polynomial precision overhead despite large
inter-block couplings. This extends the concrete uniform-inverse promise
and uses established contraction ideas. The solver-inheritance theorem
makes one continuation mechanism the polynomial default for explicitly
quantified prior solver classes, while bordered continuation strictly
extends coverage to a new class through simple folds under the
additional length and terminal certificates.
Theorems and lemmas in Section~\ref{sec:automatic-continuation} further
replace a supplied chart list by rational local generation and oriented
assembly, conditional on the seed, quantitative neighborhood bounds, and
terminal-monitor contract. These new geometric results are paper proofs.
Neither result asserts exhaustive coverage of every published convergence
theorem. For inference,
\hyperref[thm:reduction]{Theorem~\ref*{thm:reduction} (reduced denominator)} exposes feedback
singularities, and \hyperref[thm:transfer]{Theorem~\ref*{thm:transfer} (local transfer)}
provides conditioning bounds for the stated block structure.
\hyperref[thm:input-sweep]{Theorem~\ref*{thm:input-sweep} (compact input sweep)} continues a
selected start root beyond the contractive setting. Its hypotheses include
a bounded state region, exclusion of boundary zeros, and invertible state
Jacobians at every zero. For numerical tracking, a certified tube must
supply the uniform bounds
\[
 \|H_x^{-1}\|\le K,\qquad \|H_s\|\le B,\qquad
 \|D^2H\|\le H_2.
\]
Choose a tracking radius $r$ inside that tube with $KH_2r\le1/4$ and
$r\le2^{\ell_r}$, where $\ell_r$ is a nonnegative integer. The
\hyperref[thm:rounded-tracker]{Theorem~\ref*{thm:rounded-tracker} (rounded tracker)} permits the schedule
\[
 N=\max\{1,\lceil2KB/r\rceil\},\qquad m=b+\ell_r+2,\qquad
 \delta\le\min\{r/8,2^{-(b+2)}\}.
\]
The initial state must be within $r/2$ of the start root; each correction's
solve and rounding errors must satisfy $K\eta+\zeta\le\delta$.
All corrections remain in their certified balls, and the computed endpoint
has state error at most $2^{-b}$. There are exactly $Nm$ corrections.
If each complete correction costs at most $A(S+P+1)^d$, where $S$ bounds
query size and $P$ bounds working precision, their total work is at most
\[
 N(b+\ell_r+2)A(S+P+1)^d.
\]
Polynomial bounds on these quantities and on tube and stage certification
give polynomial total inference cost; the tracker does not discover those
certificates. \hyperref[thm:concrete-deq]{Theorem~\ref*{thm:concrete-deq} (concrete $\tanh$ DEQ)}
instantiates a bounded scalar DEQ with a rational activation evaluator and
finite dyadic updates. Its work envelope is
$O((b+1)(S+b+1)^{12})$, hence $O((S+b+1)^{13})$, with $S$ now denoting
input fraction size. This concrete backend does not supply general
multidimensional gate or training-region certificates.

For training, programmable dormant bilinear channels permit an
output-preserving repair when a loaded Tikhonov pass fails.
\hyperref[thm:tikhonov]{Theorem~\ref*{thm:tikhonov} (Tikhonov dichotomy)},
\hyperref[thm:stable]{Theorem~\ref*{thm:stable} (stable repair)}, and
\hyperref[thm:pass]{Theorem~\ref*{thm:pass} (finite numerical pass)} connect the diagnosis
to a certified interpolation displacement.
\hyperref[thm:main]{Theorem~\ref*{thm:main} (certified training)} then reduces a residual bounded
by $2^\ell$ to $2^{-b}$ using at most $3(b+\ell)$ passes and channels,
with total bit cost at most
\[
 3(b+\ell)A(S+Q+1)^d.
\]
Here $S$ bounds pass query size, $Q$ bounds precision, and $A,d$ are fixed
constants of the selected complete pass backend, including programming,
certificate checking, inference, linear algebra, and rounding.
The \hyperref[ass:class]{certified pass and backend assumptions} require
sound well-posedness and derivative bounds over the full pass regions,
either a loaded invariant or a realizable stable repair, and actual
finite updates meeting the numerical error budgets. When $S,Q,\ell$ and
the complete backend budgets are polynomially bounded in instance length
and $b$, the displayed cost is polynomial. The reserve bound counts
channels; their programmed vectors have additional storage and realization
costs. The inference and training constants $A,d$ belong to their respective
selected backend families and need not coincide.

The \hyperref[tab:deq-comparison]{DEQ numerical comparison} illustrates
the potential computational benefit of a loaded trigger. The
\hyperref[par:feedforward-comparison]{feedforward comparison} also shows
why a channel reserve alone does not ensure successful interpolation.
These heuristic experiments do not validate the certified algorithm's
full contracts. The \hyperref[sec:numerics]{formal verification scope}
describes the quantitative results checked in Lean and the implementation
claims outside that scope.
The \hyperref[sec:matrix-free]{matrix-free implementation discussion}
explains how training can use implicit Jacobian and adjoint products,
iterative regularized solves, and checked residuals without forming the
training Jacobian or an explicit inverse.

The next practical step is to construct efficient gate and region
certificates for larger DEQs. The present results identify exactly where
those certificates enter the complexity guarantees, while the
\hyperref[prop:implicit-stack]{composition bounds} extend the inference
framework to finite stacks with separately controlled budgets.


\begin{thebibliography}{99}
\bibitem{berg2021validated} J.~B.~van den Berg and E.~Queirolo.
A general framework for validated continuation of periodic orbits in
systems of polynomial ODEs. Journal of Computational Dynamics 8(1),
59--97, 2021. \url{https://doi.org/10.3934/jcd.2021004}.
\bibitem{dickson2006} K.~I.~Dickson, C.~T.~Kelley, I.~C.~F.~Ipsen,
and I.~G.~Kevrekidis. Condition Estimates for Pseudo-Arclength Continuation.
arXiv:math/0603716, 2006. \url{https://arxiv.org/abs/math/0603716}.
\bibitem{sittoni2024} P.~Sittoni and F.~Tudisco.
Subhomogeneous Deep Equilibrium Models. ICML, 2024.
\url{https://arxiv.org/abs/2403.00720}.
\bibitem{gabor2024} M.~Gabor, T.~Piotrowski, and R.~L.~G.~Cavalcante.
Positive Concave Deep Equilibrium Models. Proceedings of Machine
Learning Research 235, 14365--14381, 2024.
\url{https://proceedings.mlr.press/v235/gabor24a.html}.
\bibitem{barrett1994templates} R.~Barrett et al.
\emph{Templates for the Solution of Linear Systems: Building Blocks for
Iterative Methods}. SIAM, second edition, 1994.
\url{https://www.netlib.org/templates/templates.html}.
\bibitem{ding2023homoode} S. Ding, T. Cui, J. Wang, and Y. Shi. Two Sides of The Same Coin: Bridging Deep Equilibrium Models and Neural ODEs via Homotopy Continuation. NeurIPS, 2023. \url{https://arxiv.org/abs/2310.09583}.
\bibitem{beltran2013} C. Beltr{\'a}n and A. Leykin. Robust certified numerical homotopy tracking. Foundations of Computational Mathematics 13, 253--295, 2013. \url{https://arxiv.org/abs/1105.5992}.

\bibitem{bai2019} S.~Bai, J.~Z.~Kolter, and V.~Koltun.
Deep equilibrium models. \emph{NeurIPS}, 2019.
\bibitem{winston2020} E.~Winston and J.~Z.~Kolter.
Monotone operator equilibrium networks. \emph{NeurIPS}, 2020.
\bibitem{hu2022} E.~J.~Hu et al.
LoRA: Low-rank adaptation of large language models. \emph{ICLR}, 2022.
\bibitem{blum1992} A.~L.~Blum and R.~L.~Rivest.
Training a 3-node neural network is NP-complete.
\emph{Neural Networks} 5(1):117--127, 1992.
\bibitem{tikhonov1963} A.~N.~Tikhonov.
Solution of incorrectly formulated problems and the regularization method.
\emph{Soviet Mathematics Doklady} 4:1035--1038, 1963.
\bibitem{levenberg1944} K.~Levenberg.
A method for the solution of certain non-linear problems in least squares.
\emph{Quarterly of Applied Mathematics} 2:164--168, 1944.
\bibitem{marquardt1963} D.~W.~Marquardt.
An algorithm for least-squares estimation of nonlinear parameters.
\emph{SIAM Journal} 11(2):431--441, 1963.
\bibitem{hager1989} W.~W.~Hager.
Updating the inverse of a matrix. \emph{SIAM Review} 31(2):221--239, 1989.

\bibitem{bai2020} S.~Bai, V.~Koltun, and J.~Z.~Kolter.
Multiscale deep equilibrium models. \emph{NeurIPS}, 2020.
\url{https://arxiv.org/abs/2006.08656}.
\bibitem{elghaoui2021} L.~El Ghaoui, F.~Gu, B.~Travacca, A.~Askari, and A.~Tsai.
Implicit deep learning. \emph{SIAM Journal on Mathematics of Data Science}
3(3):930--958, 2021. \url{https://doi.org/10.1137/20M1358517}.
\bibitem{jafarpour2021} S.~Jafarpour, A.~Davydov, A.~V.~Proskurnikov, and F.~Bullo.
Robust implicit networks via non-Euclidean contractions. \emph{NeurIPS}, 2021.
\url{https://arxiv.org/abs/2106.03194}.
\bibitem{ryu2016} E.~K.~Ryu and S.~Boyd.
A primer on monotone operator methods. \emph{Applied and Computational Mathematics}
15(1):3--43, 2016. \url{https://web.stanford.edu/~boyd/papers/monotone_primer.html}.
\bibitem{bai2021} S.~Bai, V.~Koltun, and J.~Z.~Kolter.
Stabilizing equilibrium models by Jacobian regularization. \emph{ICML},
PMLR 139:554--565, 2021. \url{https://proceedings.mlr.press/v139/bai21b.html}.
\bibitem{fung2022} S.~Wu Fung, H.~Heaton, Q.~Li, D.~McKenzie, S.~Osher, and W.~Yin.
JFB: Jacobian-Free Backpropagation for implicit networks. \emph{AAAI}, 2022.
\url{https://arxiv.org/abs/2103.12803}.
\bibitem{ramzi2022} Z.~Ramzi, F.~Mannel, S.~Bai, J.-L.~Starck, P.~Ciuciu, and T.~Moreau.
SHINE: SHaring the INverse Estimate from the forward pass for bi-level
optimization and implicit models. \emph{ICLR}, 2022.
\url{https://arxiv.org/abs/2106.00553}.
\bibitem{kawaguchi2021} K.~Kawaguchi.
On the theory of implicit deep learning: Global convergence with implicit
layers. \emph{ICLR}, 2021. \url{https://arxiv.org/abs/2102.07346}.
\bibitem{gao2022} T.~Gao, H.~Liu, J.~Liu, H.~Rajan, and H.~Gao.
A global convergence theory for deep ReLU implicit networks via
over-parameterization. \emph{ICLR}, 2022.
\url{https://arxiv.org/abs/2110.05645}.
\bibitem{ling2023} Z.~Ling, X.~Xie, Q.~Wang, Z.~Zhang, and Z.~Lin.
Global convergence of over-parameterized deep equilibrium models.
\emph{AISTATS}, PMLR 206:767--787, 2023.
\url{https://proceedings.mlr.press/v206/ling23a.html}.
\bibitem{truong2025} L.~V.~Truong.
Global convergence rate of deep equilibrium models with general activations.
\emph{Transactions on Machine Learning Research}, 2025.
\url{https://openreview.net/forum?id=XPREcQlAM0}.
\bibitem{chen2016} T.~Chen, I.~Goodfellow, and J.~Shlens.
Net2Net: Accelerating learning via knowledge transfer. \emph{ICLR}, 2016.
\url{https://arxiv.org/abs/1511.05641}.
\bibitem{wei2016} T.~Wei, C.~Wang, Y.~Rui, and C.~W.~Chen.
Network morphism. \emph{ICML}, PMLR 48:564--572, 2016.
\url{https://proceedings.mlr.press/v48/wei16.html}.
\bibitem{lawton2024} N.~Lawton, A.~Galstyan, and G.~Ver Steeg.
Learning morphisms with Gauss--Newton approximation for growing networks.
\emph{OPT2024: Workshop on Optimization for Machine Learning}, 2024.
\url{https://opt-ml.org/papers/2024/paper75.pdf}.
\bibitem{sato2026} N.~Sato and H.~Iiduka.
Lipschitz multiscale deep equilibrium models: A theoretically guaranteed
and accelerated approach. \emph{arXiv preprint}, 2026.
\url{https://arxiv.org/abs/2602.03297}.
\bibitem{silva2026} J.~L.~Lima de Jesus Silva.
Response renormalization for critical deep equilibrium models.
\emph{arXiv preprint}, 2026. \url{https://arxiv.org/abs/2608.23725}.
\bibitem{mastrogiuseppe2018} F.~Mastrogiuseppe and S.~Ostojic.
Linking connectivity, dynamics, and computations in low-rank recurrent
neural networks. \emph{Neuron} 99(3):609--623.e29, 2018.
\url{https://doi.org/10.1016/j.neuron.2018.07.003}.
\bibitem{liang2018} S.~Liang, R.~Sun, J.~D.~Lee, and R.~Srikant.
Adding one neuron can eliminate all bad local minima. \emph{NeurIPS}, 2018.
\url{https://arxiv.org/abs/1805.08671}.
\end{thebibliography}
\end{document}